\newcommand{\csCB}{\text{ConfSet}_{\textsf{CB}}}
\newcommand{\cs}{\text{ConfSet}_{\textsf{RL}}}
\newcommand{\online}{\textsc{O-DISCO}}
\newcommand{\offline}{\textsc{P-DISCO}}
\newcommand{\cb}{\textsc{DistUCB}}
\newcommand{\measureSpace}{L^2(\lambda)}
\newcommand{\eluDim}{\op{dim}_{\ell_1\normalfont\textsf{E}}}
\newcommand{\distEluDim}{\op{dim}_{\ell_1\normalfont\textsf{DE}}}
\newcommand{\dimCB}{d_{\normalfont\textsf{CB}}}
\newcommand{\dimRL}{d_{\normalfont\textsf{RL}}}
\newcommand{\dimRLv}{d_{\normalfont\textsf{RL,V}}}
\newcommand{\gap}{\op{Gap}}
\newcommand{\vargap}{\op{VarGap}}
\newcommand{\CStarGap}{C^\star\op{Gap}}
\title{More Benefits of Being Distributional:\\Second-Order Bounds for Reinforcement Learning}
\newcommand{\addLink}[1]{\href{#1}{\nolinkurl{#1}}}
\author[1]{Kaiwen Wang\thanks{Correspondence to \addLink{https://kaiwenw.github.io/}.}}
\author[1]{Owen Oertell}
\author[2]{Alekh Agarwal}
\author[1]{Nathan Kallus}
\author[1]{Wen Sun}
\affil[1]{Cornell University}
\affil[2]{Google Research}
\date{\today}
\begin{document}

\maketitle

\begin{abstract}
In this paper, we prove that Distributional Reinforcement Learning (DistRL), which learns the return distribution, can obtain second-order bounds in both online and offline RL in general settings with function approximation. Second-order bounds are instance-dependent bounds that scale with the variance of return, which we prove are tighter than the previously known small-loss bounds of distributional RL. To the best of our knowledge, our results are the first second-order bounds for low-rank MDPs and for offline RL. When specializing to contextual bandits (one-step RL problem), we show that a distributional learning based optimism algorithm achieves a second-order worst-case regret bound, and a second-order gap dependent bound, simultaneously. We also empirically demonstrate the benefit of DistRL in contextual bandits on real-world datasets.  We highlight that our analysis with DistRL is relatively simple, follows the general framework of optimism in the face of uncertainty and does not require weighted regression. Our results suggest that DistRL is a promising framework for obtaining second-order bounds in general RL settings, thus further reinforcing the benefits of DistRL.
\end{abstract}

\section{Introduction}\label{sec:intro}
The aim of reinforcement learning (RL) is to learn a policy that minimizes the expected cumulative cost along its trajectory. Typically, \emph{squared loss} is used in standard RL algorithms \citep{mnih2015human,haarnoja2018soft} for learning the value function, the expected cost-to-go from a given state.
As an alternative to squared loss, \citet{bellemare2017distributional} proposed to learn the \emph{whole conditional distribution} of cost-to-go with distributional loss functions such as the negative log-likelihood or the pinball loss \citep{dabney2018implicit}. This paradigm is aptly called Distributional RL (DistRL) and has since been empirically validated in numerous real-world tasks \citep{bellemare2020autonomous,bodnar2020quantile,fawzi2022discovering,dabney2018distributional}, as well as in benchmarks for both online \citep{yang2019fully} and offline RL \citep{ma2021conservative}.
However, there is a lack of understanding for why DistRL often attains stronger performance and sample efficiency \citep{lyle2019comparative}.

This raises a natural theoretical question: when and why is DistRL better than standard RL?
\citet{wang2023the} recently proved that DistRL based on maximum likelihood estimation (MLE) results in \emph{small-loss bounds}, which are instance-dependent bounds that scale with the minimum possible expected cumulative cost $V^\star$ for the task at hand. If the optimal policy makes few blunders on average, \ie, $V^\star\approx 0$, then small-loss bounds converge at the fast $\Ocal(1/N)$ rate, while standard RL bounds converge at a $\Ocal(1/\sqrt{N})$ rate which is worst-case in nature.

In this paper, we refine the analyses of \citet{wang2023the} and prove that DistRL actually attains tighter \emph{second-order bounds} in both online and offline settings.
Instead of scaling with $V^\star$ as in small-loss bounds, our second-order bounds scale with the variance of the policy's cumulative cost. In offline RL, it is the optimal policy's variance, whilst in online RL, it is the variance of policies played by the algorithm. In both cases, our second-order result is \emph{strictly tighter} than the previously known small-loss bounds (a.k.a. first-order bounds), \ie, second-order implies first-order bounds.
In particular, our second-order bounds yield fast $\Ocal(1/N)$ rates in near-deterministic tasks where $V^\star$ may still be far from zero. Our theory applies at the same generality as \citet{wang2023the}. Moreover, in contextual bandits (one-step RL), we prove a novel first and second-order gap-dependent bound that incorporates $V^\star$ and variance into the gap definition. Finally, in contextual bandits, we empirically show that our distributionally optimistic algorithm is efficiently implementable with neural networks via width computation \citep{feng2021provably} and outperforms the same optimistic algorithm with squared loss \citep{foster2018practical}.

Our contributions are summarized as follows:
\begin{enumerate}[leftmargin=0.7cm]
    \item For online RL, we show that DistRL enjoys second-order bounds in MDPs with low $\ell_1$-distributional eluder dimension \citep{wang2023the}.
    These are the first second-order bounds in MDPs with function approximation, \eg, low-rank MDPs
    (\cref{sec:online-rl}).
    \item For offline RL, we show that DistRL enjoys second-order bounds with single-policy coverage, the first of such bounds to our knowledge (\cref{sec:offline-rl}).
    \item For contextual bandits, our online algorithm further achieves a novel first/second-order gap-dependent bound (\cref{sec:gap-dependent-cb}). Finally, we empirically evaluate our distributional contextual bandit algorithm and show it outperforms the squared loss baseline (\cref{sec:experiments}).
\end{enumerate}

\section{Related Works}
\paragraph{Theory of DistRL.} \citet{rowland2018analysis,rowland2023analysis} showed that DistRL algorithms such as C51 and QR-DQN converges asymptotically with a tabular representation. This unfortunately does not imply finite-sample statistical improvements over standard RL, which is our focus. Recently, \citet{rowland2023statistical} showed that quantile temporal-difference (QTD) learning may have smaller bounded variance in each update step than temporal-difference (TD) learning, which may have unbounded variance.
While this finding may explain improved training stability, it does not affirmatively imply that QTD obtains better finite-sample regret, which is our focus.
For off-policy evaluation (OPE), \citet{wu2023distributional} showed that fitted likelihood estimation can learn the true return distribution up to errors in total variation and Wasserstein distance. We focus on online and offline RL rather than OPE.

\paragraph{Small-loss Bounds from DistRL.}
The closest work to ours is \citet{wang2023the} which showed that MLE-based DistRL can achieve small-loss bounds in online RL and offline RL under distributional Bellman completeness, building on the earlier contextual bandit results of~\citet{fosterKrishnamurthy}. While \citet{wang2023the} gave the first small-loss bounds in low-rank MDPs and in offline RL, we prove that their DistRL algorithms can actually achieve tighter, second-order bounds \emph{under identical assumptions}. Our bounds are strictly more general than small-loss (a.k.a. first-order) bounds as shown by the following theorem.
\begin{theorem}[Informal]\label{thm:informal-second-order-implies-small-loss}
In online and offline RL, a second-order bound implies a first-order bound (with a worse universal constant). This is formalized in \cref{thm:second-order-implies-small-loss}.
\end{theorem}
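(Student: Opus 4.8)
The plan is to leverage the elementary variance-to-mean inequality for bounded returns: if the cumulative cost $Z$ along a trajectory takes values in $[0,H]$, then $Z^2 \le H Z$ pointwise, hence $\op{Var}(Z) = \mathbb{E}[Z^2] - (\mathbb{E} Z)^2 \le \mathbb{E}[Z^2] \le H\,\mathbb{E}[Z]$. Since the second-order bounds in this paper scale with exactly such return variances --- of $\pi^\star$ in the offline case and of the deployed policies in the online case --- substituting this inequality replaces every variance by $H$ times a value, which is precisely the currency of a first-order (small-loss) bound; the only price is the extra factor $H$ inside the square root, that is, a worse universal constant.

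In the offline setting the conversion is immediate. The second-order suboptimality bound has the schematic form $V^{\hat\pi} - V^{\pi^\star} = \Ocal\!\big(\sqrt{C^\star \op{Var}^{\pi^\star}/N} + C^\star/N\big)$, with $C^\star$ the single-policy concentrability and $\op{Var}^{\pi^\star}$ the return variance under $\pi^\star$. Plugging in $\op{Var}^{\pi^\star} \le H V^{\pi^\star} = H V^\star$ gives $V^{\hat\pi} - V^{\pi^\star} = \Ocal\!\big(\sqrt{C^\star H V^\star/N} + C^\star/N\big)$, which is the claimed first-order bound.

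The online setting needs a self-bounding argument. Write the second-order regret bound schematically as $\mathrm{Reg}(N) = \sum_{n=1}^N (V^{\pi_n} - V^\star) = \Ocal\!\big(\sqrt{d \sum_{n=1}^N \op{Var}^{\pi_n}} + d\big)$ (up to logarithmic factors, with $d$ a problem-complexity term such as the $\ell_1$-distributional eluder dimension). I would first apply the variance-to-mean inequality termwise, $\sum_n \op{Var}^{\pi_n} \le H \sum_n V^{\pi_n} = H\big(N V^\star + \mathrm{Reg}(N)\big)$, so that $\mathrm{Reg}(N) \le c\sqrt{dH\big(NV^\star + \mathrm{Reg}(N)\big)} + c\,d \le c\sqrt{dH N V^\star} + c\sqrt{dH\,\mathrm{Reg}(N)} + c\,d$ for a universal constant $c$. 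Then the AM-GM step $c\sqrt{dH\,\mathrm{Reg}(N)} \le \tfrac12 \mathrm{Reg}(N) + \tfrac12 c^2 dH$ lets me absorb the $\mathrm{Reg}(N)$ term on the right, yielding $\mathrm{Reg}(N) = \Ocal\!\big(\sqrt{dH N V^\star} + dH\big)$ --- a small-loss bound with a worse constant. The same two steps apply to every online instantiation (low-rank MDPs, contextual bandits), which all reduce to this normal form.

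The main obstacle is bookkeeping rather than ideas: one must match the exact shape in which the variance enters each stated second-order bound. If it appears as a sum of square roots $\sum_n \sqrt{\op{Var}^{\pi_n}}$ instead of $\sqrt{\sum_n \op{Var}^{\pi_n}}$, invoke Cauchy--Schwarz to pass to the latter first; if it is written via per-step conditional variances $\mathbb{E}^{\pi}\!\big[\sum_h \op{Var}_h\big]$, the law of total variance identifies this with $\op{Var}(Z^\pi) \le H V^\pi$ and nothing changes; and one must check the return normalization so that $0 \le Z \le H$ genuinely holds (under the common $[0,1]$ normalization the factor $H$ disappears entirely). Once the bound is in additive form, ``$\sqrt{\text{complexity} \cdot \text{variance}} + \text{complexity}$'', the conversion is automatic.
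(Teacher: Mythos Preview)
Your proposal is correct and takes essentially the same approach as the paper: bound variance by the mean via $\op{Var}(Z)\le\mathbb{E}[Z^2]\le\mathbb{E}[Z]$ for $Z\in[0,1]$ (the paper normalizes cumulative cost to $[0,1]$, so your factor $H$ drops out, as you note), then in the online case run a self-bounding/AM--GM argument to absorb the regret-dependent term. The only cosmetic difference is that the paper first isolates $\sum_k V^{\pi^k}\le 2\sum_k V^\star+3c$ and plugs back into the square root, whereas you split $\sqrt{NV^\star+\mathrm{Reg}}$ and absorb $\mathrm{Reg}$ directly; both variants yield the same bound up to constants.
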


\paragraph{Other second-order bounds.}
Variance-dependent (a.k.a. second-order bounds) are known in tabular MDPs \citep{zanette2019tighter,zhou2023sharp,zhang2023settling,talebi2018variance},
linear mixture MDPs \citep{pmlr-v195-zhao23a}, and linear contextual bandits \citep{ito2020tight,olkhovskaya2023first}.
Many prior works use variance weighted regression but their analysis does not easily extend beyond linear function approximation.
Surprisingly, we show that by simply learning the return distribution with MLE, one can obtain general variance-dependent bounds, by leveraging the tool of triangular discrimination that was first leveraged in~\citet{fosterKrishnamurthy}. In other words, DistRL is an attractive alternative to variance weighted regression for obtaining sharp second-order bounds in RL.

\section{Preliminaries}\label{sec:prelim}
\paragraph{Contextual Bandits (CB).} We first consider CBs with context space $\Xcal$, finite action space $\Acal$ of size $A$ and normalized conditional costs $C:\Xcal\times\Acal\to\Delta([0,1])$,
where $\Delta([0,1])$ is the set of all distributions on $[0,1]$ that are absolutely continuous with respect to some dominating measure $\lambda$, \eg, Lebesgue for continuous or counting for discrete.
We identify such a distribution via its density with respect to $\lambda$, hence we write $(C(x,a))(y)$ or $C(y\mid x,a)$ for the density of $C(x,a)$ at $y$.
The CB proceeds over $K$ episodes as follows: at episode $k\in[K]=\{1,\dots,K\}$, the learner observes a context $x_k\in\Xcal$, takes an action $a_k\sim\Acal$, and receives a cost $c_k\sim C(x_k,a_k)$.
We do not require that contexts are sampled from a fixed distribution; they may be arbitrarily chosen by an adaptive adversary.
The goal is to minimize the regret, defined as
\begin{equation*}
    \op{Reg}_{\normalfont\textsf{CB}}(K) := \sum_{k=1}^K\bar C(x_k,a_k)-\min_{a\in\Acal}\bar C(x_k,a),
\end{equation*}
where the bar denotes the mean of the distribution, \ie, $\bar f =\int y f(y)\diff\lambda(y)$ for any $f\in\Delta([0,1])$. We'll also use $\Var(f)=\int (y-\bar f)^2f(y)\diff\lambda(y)$ to denote the variance.

\paragraph{Reinforcement Learning (RL).} We now consider a Markov Decision Process (MDP) with observation space $\Xcal$, finite action space $\Acal$ of size $A$, horizon $H$, transition kernels $P_h:\Xcal\times\Acal\to\Delta(\Xcal)$, and normalized cost distributions $C_h:\Xcal\times\Acal\to\Delta([0,1])$ at each step $h\in[H]$.
Given a policy $\pi:\Xcal\to\Delta(\Acal)$ and an initial state $x_1\sim\Xcal$, the ``roll in'' process occurs as follows: for each step $h=1, 2, \dots, H$, the policy $\pi$ samples an action $a_h$ based on the current state $x_h$, incurs a cost $c_h$ from the cost distribution, and transitions to the next state $x_{h+1}$.
The return is the cumulative cost from this random process $Z^\pi(x_1):=\sum_{h=1}^H c_h$.
The value is the expected return $V^\pi(x_1):=\EE[Z^\pi(x_1)]$. We use subscript $h$ to denote cost-to-go from a particular step: $Z^\pi_h(x_h)=\sum_{t=h}^Hc_t$ and $V^\pi_h(x_h)=\EE[Z^\pi_h(x_h)]$.
We use $Z^\star, V^\star$ to denote these quantities for the optimal, min-cost policy $\pi^\star$.
We use $Z^\pi_h(x_h,a_h)$ to denote the random cost-to-go conditioned on rolling in $\pi$ from $x_h,a_h$, and so $Q^\pi_h(x_h,a_h):=\EE[Z^\pi_h(x_h,a_h)]$.
Without loss of generality, we assume cumulative costs $\sum_{h=1}^Hc_h$ are normalized in $[0,1]$ almost surely, to avoid artificial scaling in $H$ \citep{jiang2018open}.

The \emph{Online RL} problem proceeds over $K$ episodes: at episode $k\in[K]$, the learner executes a policy $\pi^k:\Xcal\to\Delta(\Acal)$ from an initial state $x_{1,k}$. We do not require that $x_{1,k}$ are sampled from a fixed distribution; they may be chosen by an adaptive adversary.
The goal is to minimize regret,
\begin{equation*}
    \op{Reg}_{\textsf{RL}}(K) := \sum_{k=1}^K V^{\pi^k}(x_{1,k})-V^\star(x_{1,k}).
\end{equation*}

In \emph{Offline RL}, the learner is directly given \emph{i.i.d.} samples of transitions drawn from unknown distributions $\nu_1,\dots,\nu_H$, and the goal is to learn a policy with a lower cost than any other policy whose behavior is covered by the dataset, similar to prior best-effort guarantees in offline RL~\citep{liu2020provably, xie2021bellman}.
Concretely, the learner receives a dataset $\Dcal=(\Dcal_1, \Dcal_2, \dots, \Dcal_H)$, where each $\Dcal_h$ contains $N$ \emph{i.i.d.} samples $(x_{h,i},a_{h,i},c_{h,i},x_{h,i}')$ such that $(x_{h,i},a_{h,i})\sim\nu_h, c_{h,i}\sim C_h(x_{h,i},a_{h,i}), x_{h,i}'\sim P_h(x_{h,i},a_{h,i})$.
Unlike the online setting where initial states can be adversarial, we assume for offline RL that initial states are sampled from a fixed and known distribution $d_1$.

\paragraph{Distributional RL.}
The Bellman operator acts on a function $f:\Xcal\times\Acal\to[0,1]$ as follows:
$\Tcal_h^\pi f(x,a) = \bar C_h(x,a) + \Eb[x'\sim P_h(x,a),a'\sim\pi(x')]{f(x',a')}.$
Analogously, the distributional Bellman operator \citep{bellemare2017distributional} acts on a conditional distribution $d:\Xcal\times\Acal\to\Delta([0,1])$ as follows: $\Tcal_h^{\pi,D} d(x,a) \stackrel{D}{=} C_h(x,a) * d(x',a')$, where $x'\sim P_h(x,a), a'\sim \pi(x')$ and $*$ denotes convolution. Another sampling view of the distributional Bellman operator is that $z\sim \Tcal_h^{\pi, D} d(x,a)$ is equivalent to: $c \sim C_h(x,a), x'\sim P_h(x,a), a'\sim \pi(x'), y \sim d(x',a')$ and $z:=c+y$.
Also recall the optimality operator $\Tcal_h^\star$ and its distributional variant $\Tcal_h^{\star,D}$ are defined as follows:
$\Tcal_h^\star f(x,a) = \bar C_h(x,a) + \Eb[x'\sim P_h(x,a)]{\min_{a\in\Acal}f(x',a')}$ and $\Tcal_h^{\star,D}d(x,a)\stackrel{D}{=} C_h(x,a)+d(x',a')$ where $x'\sim P_h(x,a), a'=\argmin_a\bar d(x',a)$.

\paragraph{Triangular Discrimination.} For any distributions $f,g\in\measureSpace$, their triangular discrimination \citep{topsoe2000some} is defined as $D_\triangle(f\Mid g) := \int\frac{(f(y)-g(y))^2}{f(y)+g(y)}\diff\lambda(y)$, which is equivalent to the squared Hellinger distance up to universal constants.
Please see \cref{tab:notation} for an index of notations.

\section{Warmup: Second-Order Bounds for CBs}\label{sec:cb}

\begin{algorithm}[t!]
\caption{\cb{} (\online{} at $H=1$)}
\label{alg:online_cb}
\begin{algorithmic}[1]
    \State\textbf{Input:} no. episodes $K$, distribution class $\Fcal$
    \State Init $\Dcal_{0}\gets\emptyset$ and $\Fcal_0\gets\Fcal$.
    \For{episode $k=1,2,\dots,K$}
        \State Observe context $x_k$.
        \State Play $a_k = \argmin_{a\in\Acal}\min_{f\in\Fcal_{k-1}}\bar f(x_k,a)$.
        \State Observe cost $c_k\sim C(x_k,a_k)$.
        \State $\Dcal_k\gets\Dcal_{k-1}\cup\braces{(x_k,a_k,c_k)}$, $\Fcal_k\gets\csCB(\Dcal_k)$.
    \EndFor
\end{algorithmic}
\end{algorithm}

As a warmup, we consider contextual bandits and prove that distributional UCB (\cb{}) attains second-order regret.
The distributional confidence set is the main construct that is optimized over to ensure optimism.
To construct it, we need a dataset of state, action, costs, $D=\braces*{x_{i},a_{i},c_{i}}_{i\in[N]}$, a threshold $\beta$ to be specified later, as well as a function class $\Fcal\subset\Xcal\times\Acal\to\Delta([0,1])$ containing the true conditional cost distribution $C(\cdot\mid x,a)$.
Then, the confidence set is
\begin{equation*}
\csCB(D)=\Big\{ f\in\Fcal:\Lcal_{\textsf{CB}}(f,D)\geq\max_{g\in\Fcal}\Lcal_{\textsf{CB}}(g,D)-\beta \Big\},
\end{equation*}
where $\Lcal_{\textsf{CB}}(f,D):=\sum_{i=1}^N\log f(c_i\mid x_i,a_i)$ is the log-likelihood of $f$ on $D$. In words, $\csCB(\Fcal,D)$ contains all functions $f\in\Fcal$ that are $\beta$-near-optimal according to the log-likelihood.
Then, \cb{} simply selects the action with the minimum lower confidence bound (LCB) induced by the current confidence set.
\begin{restatable}{theorem}{secondOrderCB}\label{thm:second-order-cb}
Suppose $C\in\Fcal$ (realizability). For any $\delta\in(0,1)$, w.p. at least $1-\delta$, running \cb{} with $\beta=\log(K|\Fcal|/\delta)$ enjoys the regret bound,
\begin{align*}
    \op{Regret}_{\normalfont\textsf{CB}}(K)
    &\leq\wt\Ocal\Big(\sqrt{ \dimCB\beta\cdot \sum_{k=1}^K \Var(C(x_k,a_k)) }+\dimCB\beta\Big),
\end{align*}
where $\dimCB$ is the $\ell_1$-eluder dimension \citep{liu2022partially} of $\braces{(x,a)\mapsto D_\triangle(f(x,a)\Mid C(x,a)): f\in\Fcal}$ at  threshold $K^{-1}$. This is a special case of the distributional eluder dimension (\cref{def:dist-eluder}) where $\mathfrak{D}=\{\delta_{z}:z\in\Xcal\times\Acal\}$.
\end{restatable}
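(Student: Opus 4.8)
The plan is a textbook optimism-in-the-face-of-uncertainty argument, with the single new ingredient being that the confidence width---measured here in triangular discrimination---converts into a \emph{variance-weighted} per-episode regret bound.

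\textbf{Concentration and optimism.} First I would invoke the standard MLE-in-distribution deviation bound (as in \citet{fosterKrishnamurthy,wang2023the}): with $\beta=\log(K|\Fcal|/\delta)$, on an event of probability at least $1-\delta$ the following hold for every episode $k\in[K]$: (i) $C\in\Fcal_{k-1}$, since no $g\in\Fcal$ can beat the true log-likelihood on $\Dcal_{k-1}$ by more than $\beta$; and (ii) every $f\in\Fcal_{k-1}$ satisfies $\sum_{i<k}D_\triangle(f(x_i,a_i)\Mid C(x_i,a_i))\le\Ocal(\beta)$, because membership in $\csCB$ forces the log-likelihood gap $\Lcal_{\textsf{CB}}(C,\Dcal_{k-1})-\Lcal_{\textsf{CB}}(f,\Dcal_{k-1})\le\beta$ (using $C\in\Fcal$, so $\max_g\Lcal_{\textsf{CB}}(g,\Dcal_{k-1})\ge\Lcal_{\textsf{CB}}(C,\Dcal_{k-1})$), and the MLE deviation bound turns such a log-likelihood gap into a bound on the in-sample squared Hellinger distance, which dominates $D_\triangle$ up to a universal constant. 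Condition on this event. Writing $\tilde f_k:=\argmin_{f\in\Fcal_{k-1}}\bar f(x_k,a_k)$, part~(i) and the LCB rule in \cb{} give $\bar{\tilde f}_k(x_k,a_k)=\min_{a}\min_{f\in\Fcal_{k-1}}\bar f(x_k,a)\le\min_a\bar C(x_k,a)$, so the per-episode regret is at most $\bar C(x_k,a_k)-\bar{\tilde f}_k(x_k,a_k)\le|\bar C(x_k,a_k)-\bar{\tilde f}_k(x_k,a_k)|$.

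\textbf{Variance-aware mean-difference lemma.} The heart of the argument is the elementary inequality: for any $f,g\in\Delta([0,1])$,
\[
|\bar f-\bar g|\le\Ocal\!\left(\sqrt{D_\triangle(f\Mid g)\,\Var(g)}+D_\triangle(f\Mid g)\right).
\]
To prove it, write $\bar f-\bar g=\int(y-\bar g)\,(f(y)-g(y))\,\diff\lambda(y)$ (valid since $f,g$ are densities integrating to one), apply Cauchy--Schwarz in the form $\big(\int\frac{f-g}{\sqrt{f+g}}\cdot(y-\bar g)\sqrt{f+g}\big)^2\le D_\triangle(f\Mid g)\cdot S$ with $S:=\int(y-\bar g)^2(f+g)\,\diff\lambda=\Var(f)+\Var(g)+(\bar f-\bar g)^2$; then bound $\Var(f)\le\EE_f[(y-\bar g)^2]\le\Var(g)+\sqrt{D_\triangle(f\Mid g)\,S}$ using $(y-\bar g)^4\le(y-\bar g)^2$ on $[0,1]$; substituting and resolving the resulting self-bounding inequality for $S$ (AM--GM on the cross term $\sqrt{D_\triangle(f\Mid g)\,S}$) yields $S\le\Ocal(\Var(g)+D_\triangle(f\Mid g))$, and the claim follows from $(\bar f-\bar g)^2\le D_\triangle(f\Mid g)\,S$. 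Applying this with $f=\tilde f_k(x_k,a_k)$ and $g=C(x_k,a_k)$, and abbreviating $w_k:=D_\triangle(\tilde f_k(x_k,a_k)\Mid C(x_k,a_k))$ and $\sigma_k^2:=\Var(C(x_k,a_k))$, the per-episode regret is at most $\Ocal(\sqrt{w_k\,\sigma_k^2}+w_k)$.

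\textbf{Eluder pigeonhole and Cauchy--Schwarz.} It remains to sum over $k$. The maps $(x,a)\mapsto D_\triangle(\tilde f_k(x,a)\Mid C(x,a))$ lie in the class whose $\ell_1$-eluder dimension at scale $K^{-1}$ is $\dimCB$, take values in $[0,2]$, and by part~(ii) have in-sample $\ell_1$-error at most $\Ocal(\beta)$ at every round; the standard $\ell_1$-eluder pigeonhole lemma (\citet{liu2022partially}; see also \citet{wang2023the}) then gives $\sum_{k=1}^K w_k\le\wt\Ocal(\dimCB\beta)$ (the $K^{-1}$ threshold absorbs the $w_k\le K^{-1}$ contributions into the $\wt\Ocal$). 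Cauchy--Schwarz then yields $\sum_k\sqrt{w_k\sigma_k^2}\le\sqrt{(\sum_k w_k)(\sum_k\sigma_k^2)}\le\wt\Ocal\big(\sqrt{\dimCB\beta\sum_k\Var(C(x_k,a_k))}\big)$, while $\sum_k w_k\le\wt\Ocal(\dimCB\beta)$; adding these gives the stated bound. Finally, taking $\delta\to 0$ in $\beta$ (or folding the failure probability in) gives the high-probability statement.

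\textbf{Main obstacle.} The delicate point is the mean-difference lemma: the downstream summation works only because its lower-order term is $\Ocal(D_\triangle)$, so that it is controlled by the very same $\ell_1$-eluder bound that governs $\sum_k w_k$. A naive application of Cauchy--Schwarz (centering at $0$, or at $\bar f$) instead produces an $\Ocal(\sqrt{D_\triangle})$ or $\Ocal(D_\triangle^{3/4})$ tail, which only telescopes to a worst-case $\wt\Ocal(\sqrt{\dimCB\beta\,K})$ term---reintroducing a $\sqrt K$ dependence that does not vanish on near-deterministic instances and hence fails to be a genuine second-order bound. Obtaining the clean $\Ocal(D_\triangle)$ tail is exactly what centering at $\bar g=\bar C$ (so that $\Var(g)=\Var(C)$ appears) together with the self-bounding argument buys; the concentration and eluder steps are then routine.
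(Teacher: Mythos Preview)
Your proposal is correct and follows the same high-level skeleton as the paper: optimism via the MLE confidence set, a variance-aware mean-difference lemma yielding $|\bar f-\bar g|\le\Ocal(\sqrt{\Var(g)\,D_\triangle}+D_\triangle)$, Cauchy--Schwarz over episodes, and the $\ell_1$-eluder pigeonhole on $\sum_k w_k$.

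The one place you genuinely diverge is in \emph{how} the key lemma is proved. The paper first centers at $c=(\bar f+\bar g)/2$ to obtain $|\bar f-\bar g|\le 2\sqrt{(\Var f+\Var g)D_\triangle}$ under the proviso $D_\triangle\le\tfrac12$, and then eliminates $\Var f$ via a separate ``difference of variances'' inequality $|\Var f-\Var g|\le 4\sqrt{(\Var f+D_\triangle)D_\triangle}$ whose proof passes through Hellinger tensorization and data processing on the product $f\otimes f$. Your route is more direct: center at $\bar g$ from the outset so that $S=\int(y-\bar g)^2(f+g)=\EE_f[(y-\bar g)^2]+\Var g$, bound $\EE_f[(y-\bar g)^2]-\Var g=\int(y-\bar g)^2(f-g)\le\sqrt{D_\triangle\cdot S}$ using $(y-\bar g)^4\le(y-\bar g)^2$ on $[0,1]$, and solve the self-bounding inequality $S\le 2\Var g+\sqrt{D_\triangle S}$ by AM--GM. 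This avoids the tensorization detour entirely and yields the same $\Ocal(\sqrt{\Var(C)\,D_\triangle}+D_\triangle)$ per-step bound; the paper's two-lemma decomposition buys a reusable ``$\Var f\approx\Var g$'' estimate (which it later recycles in the RL change-of-variance lemma), whereas your argument is shorter but more tailored to the CB statement.
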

The dominant term scales with $\sqrt{ \sum_{k=1}^K \Var(C(x_k,a_k)) }$ which is sharper than the $\sqrt{K}$ bound of RegCB \citep{foster2018practical}, the squared loss variant of \cb{}.
For example, in deterministic settings, our variance-dependent regret scales as $\wt\Ocal(\dimCB)$, which is tight in $K$ up to log factors.
Nonetheless, confidence-set based strategies like \cb{} and RegCB are not minimax-optimal as the eluder dimension may scale linearly in $\Fcal$ \citep[Proposition 1]{foster2018practical}. It would be interesting to derive second-order regret with inverse-gap weighting \citep{foster2020beyond}.

\paragraph{Practical considerations.} We note that \cb{} is amenable to practical implementation since conditional on $x_k$ and $a$, the LCB can be computed efficiently via binary search \citep{foster2020beyond} or disagreement computation \citep{feng2021provably}. We include implementation pseudo-code and empirical results in \cref{sec:experiments} and the Appendix.

\subsection{Proof of \cref{thm:second-order-cb}}\label{sec:proof-of-second-order-cb}
Our first step is to bound the difference of means by variances multiplied by the triangular discrimination.
\begin{restatable}{lemma}{varKeyIneqOne}
For $f,g\in\measureSpace$ s.t. $D_\triangle(f\Mid g)\leq \frac12$,
\begin{equation}
    \abs{ \bar f-\bar g }\leq 2\sqrt{ (\Var(f)+\Var(g)) D_\triangle(f\Mid g) }. \label{eq:var-key-ineq1}
\end{equation}
\end{restatable}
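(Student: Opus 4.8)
The plan is to write the difference of means as an integral against $f-g$ and then split the domain according to where $f+g$ is small versus large, using Cauchy–Schwarz on each piece with $D_\triangle$ supplying the denominator $f+g$. Concretely, pick any reference point — the natural choice is to center at one of the means, say write $\bar f - \bar g = \int (y - \bar g)(f(y)-g(y))\diff\lambda(y)$, which is valid since $\int (y-\bar g)\,g = 0$ but we actually want symmetry, so I would instead center at the midpoint $m := \tfrac12(\bar f + \bar g)$ and use $\bar f - \bar g = \int (y-m)(f(y)-g(y))\diff\lambda(y)$; note $\int(y-m)(f-g) = \int(y-m)f - \int(y-m)g = (\bar f - m) - (\bar g - m) = \bar f - \bar g$, so this identity holds. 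Then apply Cauchy–Schwarz with the weight $f+g$:
\[
\abs{\bar f - \bar g} = \abs*{\int (y-m)\sqrt{f(y)+g(y)}\cdot\frac{f(y)-g(y)}{\sqrt{f(y)+g(y)}}\diff\lambda(y)} \le \sqrt{\int (y-m)^2 (f(y)+g(y))\diff\lambda(y)}\cdot\sqrt{D_\triangle(f\Mid g)}.
\]

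The first factor is the quantity to control: I claim $\int (y-m)^2(f+g)\diff\lambda = \int (y-m)^2 f + \int (y-m)^2 g$, and each term expands as $\int(y-m)^2 f = \Var(f) + (\bar f - m)^2 = \Var(f) + \tfrac14(\bar f - \bar g)^2$, and similarly for $g$. Hence
\[
\int (y-m)^2(f+g)\diff\lambda = \Var(f) + \Var(g) + \tfrac12(\bar f - \bar g)^2.
\]
Plugging back in and writing $\Delta := \abs{\bar f - \bar g}$, $V := \Var(f)+\Var(g)$, $D := D_\triangle(f\Mid g)$, this gives $\Delta^2 \le \left(V + \tfrac12\Delta^2\right)D \le VD + \tfrac14\Delta^2$, where the last step uses the hypothesis $D \le \tfrac12$. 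Rearranging yields $\tfrac34\Delta^2 \le VD$, i.e. $\Delta \le \sqrt{4VD/3} \le 2\sqrt{VD}$, which is exactly \cref{eq:var-key-ineq1}.

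The one genuinely delicate point is the self-referential appearance of $\Delta^2$ on the right-hand side — this is why the hypothesis $D_\triangle(f\Mid g)\le\tfrac12$ is needed, so that the $\tfrac12\Delta^2 D$ term can be absorbed into the left side with room to spare. Everything else is routine: the centering identity, the bias-variance decomposition $\int(y-m)^2 f = \Var(f) + (\bar f - m)^2$, and Cauchy–Schwarz. I would also remark that any fixed centering point works for the Cauchy–Schwarz step, but centering at the midpoint $m$ is what makes the two bias terms $(\bar f-m)^2$ and $(\bar g-m)^2$ equal and small (each $=\tfrac14\Delta^2$), which is what keeps the constant clean; centering at $\bar g$ instead would give $\int(y-\bar g)^2 f = \Var(f) + \Delta^2$ and a worse-looking but still serviceable bound.
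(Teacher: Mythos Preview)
Your proposal is correct and follows essentially the same approach as the paper: the same centering identity $\bar f-\bar g=\int(y-c)(f-g)$, the same Cauchy--Schwarz step with weight $f+g$ producing $D_\triangle$, the same bias--variance expansion, and the same choice $c=\tfrac12(\bar f+\bar g)$. The only cosmetic difference is in resolving the self-referential inequality: the paper splits $\sqrt{V+\tfrac12\Delta^2}\le\sqrt{V}+\Delta/\sqrt{2}$ and rearranges linearly to get the factor $\tfrac{1}{1-\sqrt{D/2}}\le 2$, whereas you square and absorb $\tfrac12\Delta^2 D\le\tfrac14\Delta^2$ directly, which in fact yields the slightly sharper constant $2/\sqrt{3}$ before relaxing to $2$. (Your opening remark about splitting the domain where $f+g$ is small versus large is a red herring and is not used in the argument you actually carry out.)
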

This lemma tightens Eq.($\Delta_1$) of \citet{wang2023the} so that variances of $f$ and $g$ appear in the RHS instead of the means. Note that Eq.($\Delta_1$) of \citet{wang2023the} holds unconditionally, while our lemma requires $D_\triangle(f\Mid g)\leq\frac12$ which is absorbed in the lower order term of the next lemma. This lower order term is a key reason we need the bounded eluder dimension assumption.
\begin{restatable}{lemma}{varKeyIneqTwo}
For any $f,g\in\measureSpace$, we have
\begin{equation}
    \abs{ \bar f-\bar g }\leq 4\sqrt{ \Var(f) D_\triangle(f\Mid g) } + 5D_\triangle(f\Mid g). \label{eq:var-key-ineq2}
\end{equation}
\end{restatable}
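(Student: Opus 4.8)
The plan is to dispose of the large-$D_\triangle$ regime by hand and, in the remaining regime, combine the preceding lemma \eqref{eq:var-key-ineq1} with a self-bounding estimate that replaces the ``wrong'' variance $\Var(g)$ by $\Var(f)$ plus a first-order remainder. Write $\Delta:=\abs{\bar f-\bar g}$, $V:=\Var(f)$, and $D:=D_\triangle(f\Mid g)$, and recall $\bar f,\bar g\in[0,1]$, so $\Delta\le 1$. If $D>\tfrac12$, then $\Delta\le 1<5D$ and we are done; otherwise $D\le\tfrac12$, which is exactly the hypothesis of \eqref{eq:var-key-ineq1}, so $\Delta\le 2\sqrt{(V+\Var(g))D}$. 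Everything then reduces to the claim $\sqrt{\Var(g)}\le\sqrt V+\sqrt D$: granting it,
\begin{equation*}
 \Delta\le 2\sqrt{(V+\Var(g))D}\le 2\sqrt{VD}+2\sqrt{D}\,\sqrt{\Var(g)}\le 2\sqrt{VD}+2\sqrt{D}\,(\sqrt V+\sqrt D)=4\sqrt{VD}+2D\le 4\sqrt{VD}+5D.
\end{equation*}

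To prove $\sqrt{\Var(g)}\le\sqrt V+\sqrt D$, I would introduce the ``mismatched'' second moment $S:=\int(y-\bar f)^2 g(y)\diff\lambda(y)=\Var(g)+(\bar f-\bar g)^2\ge\Var(g)$, so that it suffices to show $S\le(\sqrt V+\sqrt D)^2$. The key point is that $S$ differs from $V=\int(y-\bar f)^2 f(y)\diff\lambda(y)$ only through $g-f$, and since $y,\bar f\in[0,1]$ we have $(y-\bar f)^2\in[0,1]$, hence $(y-\bar f)^4\le(y-\bar f)^2$; Cauchy--Schwarz with weight $\sqrt{f+g}$ therefore gives
\begin{equation*}
 S-V=\int(y-\bar f)^2\bigl(g(y)-f(y)\bigr)\diff\lambda(y)\le\Bigl(\int(y-\bar f)^4\bigl(f(y)+g(y)\bigr)\diff\lambda(y)\Bigr)^{1/2}\sqrt{D}\le\sqrt{(V+S)\,D},
\end{equation*}
using $\int(y-\bar f)^2(f+g)\diff\lambda(y)=V+S$ in the last step. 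Solving the quadratic inequality $S-V\le\sqrt{(V+S)D}$ for $S$ yields $S\le V+\sqrt{2VD}+D\le(\sqrt V+\sqrt D)^2$, which is what we wanted. Plugging back into the first display gives the theorem; the stated constants $4$ and $5$ comfortably absorb the $\sqrt{a+b}\le\sqrt a+\sqrt b$ and AM--GM slack (as well as any slack in \eqref{eq:var-key-ineq1}).

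The one delicate point is the appearance of the other distribution's variance $\Var(g)$ on the right-hand side of \eqref{eq:var-key-ineq1}; the remedy is the self-bounding inequality above, whose essential feature is that the remainder it produces is genuinely first order in $D$ once $V$ is small --- a cruder estimate such as $\abs{\Var(f)-\Var(g)}\lesssim\sqrt D$ would leave an $O(D^{3/4})$ lower-order term and would not close against the claimed $O(D)$ term. The restriction $D\le\tfrac12$ is harmless: it is used only to invoke \eqref{eq:var-key-ineq1}, and the complementary regime is covered by the trivial bound $\Delta\le 1$.
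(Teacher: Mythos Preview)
Your proof is correct and takes a genuinely different route from the paper. Both arguments begin identically---dispose of $D>\tfrac12$ trivially, then invoke \eqref{eq:var-key-ineq1}---but diverge at the step of replacing $\Var(g)$ by $\Var(f)$. The paper proves an intermediate lemma $\abs{\Var(f)-\Var(g)}\le 4\sqrt{(\Var(f)+D)D}$ via a product-distribution trick: it writes $\Var(f)=\tfrac12\EE_{z,z'\sim f\otimes f}[(z-z')^2]$, applies an earlier mean-difference inequality to the pushforward laws, and controls the resulting $D_\triangle$ on the product space by data processing and tensorization of Hellinger. You instead bound the mismatched second moment $S=\int(y-\bar f)^2 g\,\diff\lambda$ directly by Cauchy--Schwarz against $\sqrt{f+g}$, using $(y-\bar f)^4\le(y-\bar f)^2$ to land on the self-bounding inequality $S-V\le\sqrt{(V+S)D}$. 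Your argument is shorter, avoids the product-space machinery entirely, and in fact yields the sharper constant $2D$ in place of $5D$ when $D\le\tfrac12$. The paper's detour buys something else: its variance-difference lemma is reused verbatim in the Change-of-Variance induction for the RL results, so that route amortizes its cost across later proofs.
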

We now bound the regret in a standard way with optimism, \ie, w.h.p. $\bar f_k(x_k,a_k)\leq \min_a\bar C(x_k,a)$, which is ensured by optimizing the confidence set.
Let $\delta_k(x,a):=D_\triangle(f_k(x,a)\Mid C(x,a))$.
Then,
{\small\begin{align*}
    &\sum_{k=1}^K \bar C(x_k,a_k)-\min_a\bar C(x_k,a)
    \\\leq&\sum_{k=1}^K \bar C(x_k,a_k)-\bar f_k(x_k,a_k) \tag{optimism}
    \\\leq&\sum_{k=1}^K 4\sqrt{\Var(C(x_k,a_k))\delta_k(x_k,a_k)}+5\delta_k(x_k,a_k) \tag{\cref{eq:var-key-ineq2}}
    \\\leq&4\sqrt{\sum_{k=1}^K \Var(C(x_k,a_k)) \Delta}+5\Delta, \tag{Cauchy-Schwarz}
\end{align*}
}
where $\Delta=\sum_{k=1}^K\delta_k(x_k,a_k)$.
Finally, using MLE generalization bound and the fact that $f_k \in \mathcal{F}_{k-1}$, with probability at least $1-\delta$, we have for all $k\in[K]$: $\sum_{i=1}^{k-1}\delta_k(x_i,a_i)\leq \log(|\Fcal|K/\delta)$ \citep[Lemma E.3]{wang2023the}.
Thus, applying pigeon-hole argument of eluder dimension gives $\Delta\leq 4\dimCB(1/K)\log(|\Fcal|K/\delta)\log(K)$ \citep[Proposition 21]{liu2022partially}.
This concludes the proof.

\subsection{First and Second-Order Gap-Dependent Bounds}\label{sec:gap-dependent-cb}
While it is known that UCB attains gap-dependent bounds, here we prove \emph{first and second-order gap-dependent bounds} which are novel to the best of our knowledge.
Recall that the gap at context $x$ and action $a$ is defined as $\gap(x,a):=\bar C(x,a) - \min_{a^\star\in\Acal}\bar C(x,a^\star)$. We define our novel first and second-order min-gaps as follows:
\begin{align*}
    &\CStarGap=\min_{x\in\Xcal} \min_{\substack{a\in\Acal:\gap(x,a)>0\\\wedge \min_{a^\star}\bar C(x,a^\star)>0}} \frac{\gap(x,a)}{\min_{a^\star}\bar C(x,a^\star)},
    \\&\vargap=\min_{x\in\Xcal} \min_{\substack{a\in\Acal:\gap(x,a)>0\\\wedge \Var(C(x,a))>0}} \frac{\gap(x,a)}{\sqrt{\Var(C(x,a))}}.
\end{align*}
The inner min is taken to be $\infty$ if the condition is empty.
\begin{restatable}{theorem}{CBGapDependentRegret}\label{thm:gap-dependent-regret}
Assume the premise of \cref{thm:second-order-cb}.
If $\max\prns{\vargap,\CStarGap}\geq\frac{1}{\sqrt{K}}$, then
\begin{align*}
\op{Regret}_{\normalfont\textsf{CB}}(K)\leq\wt\Ocal\prns{\dimCB\beta + \dimCB\beta\min\braces{\vargap^{-1}, \CStarGap^{-1}} }.
\end{align*}
\end{restatable}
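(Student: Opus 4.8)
The plan is to follow the standard "gap-dependent refinement of a UCB regret bound" recipe, but carried out in two parallel tracks --- one for the first-order gap $\CStarGap$ and one for the second-order gap $\vargap$ --- and then take the better of the two. Recall from the proof of \cref{thm:second-order-cb} that, on the good event (which holds w.p.\ $\geq 1-\delta$), optimism gives $\gap(x_k,a_k)\leq\bar C(x_k,a_k)-\bar f_k(x_k,a_k)$, and then \cref{eq:var-key-ineq2} gives the per-episode bound $\gap(x_k,a_k)\leq 4\sqrt{\Var(C(x_k,a_k))\,\delta_k(x_k,a_k)}+5\delta_k(x_k,a_k)$, where $\delta_k(x,a)=D_\triangle(f_k(x,a)\Mid C(x,a))$ and $\Delta=\sum_k\delta_k(x_k,a_k)\leq\wt\Ocal(\dimCB\beta)$ by the MLE bound plus the eluder pigeonhole. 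So the whole argument reduces to: on rounds where $\gap(x_k,a_k)>0$, the per-round regret is controlled by $\delta_k$ times a gap-dependent factor, which lets us convert the $\sqrt{\cdot}$ into a linear-in-$\Delta$ bound.

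Concretely, I would argue as follows. Fix a round $k$ with $\gap(x_k,a_k)>0$. First I would handle the second-order track: either $\Var(C(x_k,a_k))=0$, in which case \cref{eq:var-key-ineq2} immediately gives $\gap(x_k,a_k)\leq 5\delta_k(x_k,a_k)$; or $\Var(C(x_k,a_k))>0$, in which case by definition $\sqrt{\Var(C(x_k,a_k))}\leq\gap(x_k,a_k)/\vargap$, and substituting into the square-root term of \cref{eq:var-key-ineq2} yields $\gap(x_k,a_k)\leq 4\sqrt{\gap(x_k,a_k)\,\delta_k(x_k,a_k)/\vargap}+5\delta_k(x_k,a_k)$; solving this quadratic inequality in $\sqrt{\gap(x_k,a_k)}$ (the usual $x\leq a\sqrt{x}+b\implies x\leq 2a^2+2b$ trick, $a=2\sqrt{\delta_k/\vargap}$, $b=5\delta_k$) gives $\gap(x_k,a_k)\leq\Ocal(\delta_k(x_k,a_k)\,(\vargap^{-1}+1))=\Ocal(\delta_k(x_k,a_k)\vargap^{-1})$ once we absorb the $+1$ using $\vargap\leq 1$ (which holds since $\Var\leq 1/4$ and $\gap\leq 1$, modulo a constant). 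Summing over the $\gap>0$ rounds and over the $\gap=0$ rounds (which contribute nothing to regret) gives $\op{Regret}_{\textsf{CB}}(K)\leq\Ocal(\Delta\cdot\vargap^{-1})\leq\wt\Ocal(\dimCB\beta\,\vargap^{-1})$.

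For the first-order track, I would instead invoke \cref{eq:var-key-ineq1} (or re-derive the needed inequality): when $D_\triangle\leq 1/2$ one has $\gap(x_k,a_k)\leq\bar C(x_k,a_k)-\bar f_k(x_k,a_k)\leq 2\sqrt{(\Var(C)+\Var(f_k))\delta_k}$, but what I actually want is a bound in terms of $\min_{a^\star}\bar C(x_k,a^\star)$ rather than variances; for distributions on $[0,1]$ one has $\Var(f)\leq\bar f$, and since $\bar f_k(x_k,a_k)\leq\min_{a^\star}\bar C(x_k,a^\star)\leq\bar C(x_k,a_k)$ by optimism, both variances are $\Ocal(\bar C(x_k,a_k))$, so we get $\gap(x_k,a_k)\leq\Ocal(\sqrt{\bar C(x_k,a_k)\,\delta_k})$ on the rounds where $\bar C(x_k,a_k)=\gap(x_k,a_k)+\min_{a^\star}\bar C(x_k,a^\star)$; if $\min_{a^\star}\bar C(x_k,a^\star)=0$ then $\bar C(x_k,a_k)=\gap(x_k,a_k)$ and solving gives $\gap\leq\Ocal(\delta_k)$, while if $\min_{a^\star}\bar C(x_k,a^\star)>0$ then $\min_{a^\star}\bar C(x_k,a^\star)\leq\gap(x_k,a_k)/\CStarGap$ so $\bar C(x_k,a_k)\leq\gap(x_k,a_k)(1+\CStarGap^{-1})$, and solving the resulting quadratic gives $\gap(x_k,a_k)\leq\Ocal(\delta_k(x_k,a_k)(1+\CStarGap^{-1}))=\Ocal(\delta_k(x_k,a_k)\CStarGap^{-1})$ (again absorbing $+1$ via $\CStarGap\leq 1$). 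Summing gives $\op{Regret}_{\textsf{CB}}(K)\leq\wt\Ocal(\dimCB\beta\,\CStarGap^{-1})$, and taking the minimum of the two tracks together with the baseline $\wt\Ocal(\dimCB\beta)$ term yields the claimed bound.

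The main obstacles I anticipate are two technical points rather than conceptual ones. First, \cref{eq:var-key-ineq1} has the side condition $D_\triangle(f_k\Mid C)\leq 1/2$, so I must separately dispatch the (few) rounds where $\delta_k(x_k,a_k)>1/2$ --- but there are at most $2\Delta\leq\wt\Ocal(\dimCB\beta)$ such rounds and each contributes at most $1$ to the regret (costs are in $[0,1]$), so they fold into the additive $\dimCB\beta$ term; the same remark applies whenever I use \cref{eq:var-key-ineq2}'s implicit control. Second, and this is the reason for the hypothesis $\max(\vargap,\CStarGap)\geq 1/\sqrt K$: the argument above only gains over the worst-case $\sqrt{K}$-type bound of \cref{thm:second-order-cb} when the relevant min-gap is not too small; when it is at least $1/\sqrt K$ the linear-in-$\Delta$ bound $\Delta/\text{gap}$ is no worse than $\sqrt K\cdot\Delta$ up to the $\dimCB\beta$ scale, so the gap-dependent bound is meaningful, and one should state the final result as the min of this and the first/second-order bound of \cref{thm:second-order-cb}. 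Care with the universal constants in the quadratic-solving steps and with which distribution's variance appears (the learner's $f_k$ vs.\ the truth $C$) is the only place the bookkeeping gets fiddly.
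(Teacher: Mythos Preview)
Your approach is correct and genuinely different from the paper's. The paper partitions the episodes into ``burn-in'' rounds (where $\delta_k>\Var(C(x_k,a_k))$) and ``stable'' rounds (where $\delta_k\leq\Var(C(x_k,a_k))$); the burn-in rounds contribute $\Ocal(\Delta)$ directly, while on stable rounds the paper deduces a uniform lower bound $9\sqrt{\delta_k}\geq\vargap$ and then applies a peeling/layering argument (their \cref{lem:peeling}, using the eluder-type count $\sum_k\I{\delta_k\geq\zeta}\leq C\zeta^{-1}$) to bound $\sum_k\sqrt{\delta_k}$. Your route --- substitute the gap-definition inequality into the per-round bound and solve the resulting quadratic in $\gap(x_k,a_k)$ --- is more elementary: it avoids the peeling machinery entirely and yields directly $\gap(x_k,a_k)\leq\Ocal(\delta_k(x_k,a_k)(1+\vargap^{-1}))$, after which summing and $\Delta\leq\wt\Ocal(\dimCB\beta)$ finish. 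A pleasant byproduct is that your argument never introduces $\dimCB(\vargap^2)$; the paper needs the hypothesis $\vargap\geq 1/\sqrt{K}$ precisely to reduce $\dimCB(\vargap^2)$ to $\dimCB(1/K)$, whereas for you the hypothesis is only cosmetic (it marks when the bound beats \cref{thm:second-order-cb}).

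Two small points to clean up. First, in the second-order track you write ``$\sqrt{\Var(C)}\leq\gap/\vargap$, and substituting \dots yields $4\sqrt{\gap\,\delta_k/\vargap}$''; literally substituting $\sqrt{\Var(C)}$ into $4\sqrt{\Var(C)}\sqrt{\delta_k}$ gives $4(\gap/\vargap)\sqrt{\delta_k}$, which is a different (and less tractable) inequality. The displayed bound \emph{is} valid, but the justification is $\Var(C)\leq\sqrt{\Var(C)}\leq\gap/\vargap$ (using $\Var(C)\leq 1$) and then substituting $\Var(C)$, not $\sqrt{\Var(C)}$, under the radical --- make this step explicit. Second, neither $\vargap\leq 1$ nor $\CStarGap\leq 1$ need hold (e.g.\ $\gap=0.5$, $\min_a\bar C=0.1$ gives $\CStarGap=5$), so don't absorb the ``$+1$'' that way; instead just keep $\gap\leq\Ocal(\delta_k(1+\vargap^{-1}))$, sum, and observe that the $\Ocal(\Delta)$ piece is exactly the additive $\dimCB\beta$ term in the theorem statement.
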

As usual, we have a $\gap^{-1}$-type bound that implies $\Ocal(\dimCB\log K)$ regret when the gap is large. Our key innovation lies in the definition of $\CStarGap$ and $\vargap$, which are inversely weighted by the optimal mean cost or variance of each context. Our weighted min-gaps are always larger than the standard min-gap (since $\bar C(x,a),\Var(C(x,a))\leq 1$) but they can be much larger in small-loss or near-deterministic regimes. We note that \cb{}'s regret is simultaneously bounded by \emph{both} \cref{thm:gap-dependent-regret} and \cref{thm:second-order-cb} under the same hyperparameters.

\section{Second-Order Bounds for Online DistRL}\label{sec:online-rl}
In this section, we show that the optimistic DistRL algorithm of \citet{wang2023the} actually enjoys second-order regret and PAC guarantees, which are strictly tighter than the previously known first-order bounds.
We first recall the MLE-confidence set for DistRL which generalizes $\csCB$ from the warmup.
Let $\Fcal$ be a set of conditional distributions, \ie, $(f_1,\dots,f_H)\in\Fcal$ where $f_h:\Xcal\times\Acal\to\Delta([0,1])$, which are candidate functions to fit $Z^\star$ or $Z^\pi$ (depending on the type of Bellman operator used) with MLE.
Given a dataset of state, action, cost, next state tuples, $D=\braces*{x_{h,i},a_{h,i},c_{h,i},x'_{h,i}}_{h\in[H],i\in[N]}$, and a distributional Bellman operator $\Tcal^D$, the MLE-confidence set is defined as
\begin{align*}
    \cs(D;\Tcal^D)=\bigg\{
    &f\in\Fcal\,:\,\forall h\in[H],
    \\&\Lcal_{\textsf{RL}}(f,D)\geq\max_{g\in\Fcal_h}\Lcal_{\textsf{RL}}(g,D)-\beta\bigg\},
\end{align*}
where $\Lcal_{\textsf{RL}}(f,D) := \sum_{i=1}^N\log f_h(z^f_{h,i}\mid x_{h,i},a_{h,i})$ and $z_{h,i}^f\sim\Tcal^{D}_h f_{h+1}(x_{h,i},a_{h,i})$.
In words, $\cs(D;\Tcal^D)$ contains all functions $f\in\Fcal$ such that \emph{for all $h\in[H]$}, $f$ is $\beta$-near-optimal w.r.t. the MLE loss for solving $f_h\approx\Tcal^D_h f_{h+1}$.
Since this construction happens in a TD fashion, a standard condition called distributional Bellman Completeness (BC) is needed to guarantee that MLE succeeds for all $h\in[H]$ \citep{wu2023distributional,wang2023the}.
\begin{assumption}[Bellman Completeness]\label{asm:bellman-completeness}
For all $\pi,h\in[H]$, $f_{h+1}\in\Fcal_{h+1}\implies\Tcal^{\pi,D}_hf_{h+1}\in\Fcal_h$.
\end{assumption}
BC is a standard assumption in model-free online and offline RL; without it, TD and fitted-Q can diverge or converge to bad fixed points \citep{tsitsiklis1996analysis,munos2008finite,kolter2011fixed}.
As discussed in \citep{jin2021bellman,wang2023the}, the BC condition can be relaxed to ``generalized completeness'', \ie, there exist function classes $\Gcal_h$ such that $f_{h+1}\in\Fcal_{h+1} \implies \Tcal^{\pi,D}_hf_{h+1}\in\Gcal_h$.

\begin{algorithm}[t!]
\caption{\online{} \citep{wang2023the}}
\label{alg:onlinerl}
\begin{algorithmic}[1]
    \State\textbf{Input:} no. episodes $K$, distribution class $\Fcal$, \textsc{UAE} flag.
    \State Init $\Dcal_{h,0}\gets\emptyset$ for all $h\in[H]$ and $\Fcal_0\gets\Fcal$.
    \For{episode $k=1,2,\dots,K$}
        \State Observe init state $x_{1,k}$.
        \State Set $f^{(k)}\gets\argmin_{f\in\Fcal_{k-1}} \min_a \bar f_1(x_{1,k},a)$.
        \State For each $h$, set $\pi^k_h(x) = \argmin_a\bar f^{(k)}_{h}(x,a)$.
        \If{not \textsc{UAE}}
        \State \multiline{Run $\pi^k$ from $x_{1,k}$ and get trajectory $x_{1,k},a_{1,k},c_{1,k},..,x_{H,k},a_{H,k},c_{H,k}$.
        Then, $\forall h$,
        $\Dcal_{h,k}=\Dcal_{h,k-1}\cup\braces{(x_{h,k},a_{h,k},c_{h,k},x_{h+1,k})}$.
        }
        \Else
        \State \multiline{For each $h\in[H]$, roll in $\pi^k$ from $x_{1,k}$ for $h$ steps and take a random action, \ie, $x_{h,k}\sim d^{\pi^k}_h$, $a_{h,k}\sim\op{Unif}(\Acal)$, $c_{h,k}\sim C_h(x_{h,k},a_{h,k})$, $x_{h,k}'\sim P_h(x_{h,k},a_{h,k})$. Then, $\Dcal_{h,k}=\Dcal_{h,k-1}\cup\braces{(x_{h,k},a_{h,k},c_{h,k},x_{h,k}')}$.}
        \EndIf
        \State Update $\Fcal_k\gets\cs((\Dcal_{h,k})_{h\in[H]};\Tcal^{\star,D})$.
    \EndFor
    \State \textbf{Output:} $\bar\pi = \op{unif}(\pi^{1:K})$.
\end{algorithmic}
\end{algorithm}

Then, the \online{} algorithm of \citet{wang2023the} proceeds by selecting the optimistic $f^{(k)}$ in the confidence set $\Fcal_k$ at each round and playing the greedy policy $\pi^k$ w.r.t. $f$, where the ``playing'' can be done with uniform action exploration (\textsc{UAE}).
If \textsc{UAE}=\textsc{True}, then for each $h$, $\pi^k$ is rolled in for $h$ timesteps and takes a uniform action before the transition tuple is added to the dataset. Note that this requires $H$ rollouts per round but is necessary to capture general MDPs such as low-rank MDPs \citep{agarwal2020flambe}.

Finally, we adopt the $\ell_1$-distributional eluder dimension $(\distEluDim)$ defined as follows \citep{wang2023the}.
\begin{definition}[$\ell_p$-distributional eluder dimension]\label{def:dist-eluder}
Let $\Scal$ be any set, $\Psi$ be a set of functions of type $\Scal\to\RR$, and $\mathfrak{D}$ is a set of distributions over $\Scal$. For any $\eps_0\in\RR_+$, the $\ell_p$-distributional eluder dimension ($\dim_{\ell_p,\textsf{DE}}(\Psi,\mathfrak{D},\eps_0)$) is the length $L$ of the longest sequence $d^{(1)},..,d^{(L)}\subset\mathfrak{D}$ s.t. $\exists\eps\geq\eps_0, \forall t\in[L], \exists f\in\Psi$ where $\abs{\EE_{d^{(t)}}f}>\eps$ and also $\sum_{i=1}^{t-1}\abs{\EE_{d^{(i)}}f}^p\leq{\eps}^p$.
\end{definition}
We work with the same eluder dimensions for RL as in \citet{wang2023the} which employs the following:
\begin{align*}
    &\Psi_h=\{(x,a)\mapsto D_\triangle(f_h(x,a)\Mid\Tcal_h^{\star,D}f_{h+1}(x,a)): f\in\Fcal\},
    \\&\mathfrak{D}_h=\{(x,a)\mapsto d^\pi_h(x,a):\pi\in\Pi\}.
\end{align*}
Then, the $Q$-type RL dimension is
\begin{equation*}
    \dimRL(\eps):=\max_h \distEluDim(\Psi_h,\mathfrak{D}_h,\eps).
\end{equation*}
The V-type dimension $\dimRLv$ is analogous with $\Psi_{\textsf{V},h}=\braces{x\mapsto \EE_{a\sim\op{Unif}(\Acal)}[D_\triangle(f_h(x,a)\Mid\Tcal_h^{\star,D}f_{h+1}(x,a))]: f\in\Fcal}$.
As with $\dimCB$ (from the CB warmup), the threshold $\eps$ is taken as $1/K$ if none is provided.
We are now ready to state our online RL result.
\begin{restatable}[Second-order bounds for Online RL]{theorem}{secondOrderRL}\label{thm:second-order-rl}
Under \cref{asm:bellman-completeness},
for any $\delta\in(0,1)$, w.p. at least $1-\delta$, running $\online{}$ with $\beta=\log(HK|\Fcal|/\delta)$ enjoys,
{\small\begin{align*}
    \op{Reg}_{\normalfont\textsf{RL}}(K)\leq\wt\Ocal\prns{ H\sqrt{ \sum_{k=1}^K\Var(Z^{\pi^k}(x_{1,k}))\cdot \dimRL\beta} + H^{2.5}\dimRL\beta}.
\end{align*}}
If \textsc{UAE=True}, then the learned mixture policy $\bar\pi$ enjoys the PAC bound: w.p. at least $1-\delta$, $K\prns*{V^{\bar\pi}-V^\star}$ is at most,
{\small
\begin{align*}
    \wt\Ocal\Bigg(
    H\sqrt{ A\sum_{k=1}^K\Var(Z^{\pi^k}(x_{1,k}))\dimRLv\beta} +
    AH^{2.5}\dimRLv\beta \Bigg).
\end{align*}}
\end{restatable}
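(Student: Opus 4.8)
The plan is to mirror the contextual-bandit proof, replacing its single step with a distributional value-difference decomposition run along each episode's trajectory. First I would establish optimism: under \cref{asm:bellman-completeness} the true return $Z^\star$ lies in $\Fcal$, so the MLE-generalization bound \citep[Lemma E.3]{wang2023the} puts $Z^\star$ in every confidence set $\Fcal_k$ with probability $1-\delta$; hence the greedy choice of $f^{(k)}$ gives $\bar f^{(k)}_1(x_{1,k},\pi^k_1(x_{1,k}))=\min_a\bar f^{(k)}_1(x_{1,k},a)\le V^\star(x_{1,k})$ and $\op{Reg}_{\normalfont\textsf{RL}}(K)\le\sum_k\prns{Q^{\pi^k}_1(x_{1,k},\pi^k_1(x_{1,k}))-\bar f^{(k)}_1(x_{1,k},\pi^k_1(x_{1,k}))}$. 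Since $\pi^k$ is greedy w.r.t. $\bar f^{(k)}$ we have $\overline{\Tcal_h^{\star,D}f^{(k)}_{h+1}}=\Tcal_h^{\pi^k}\bar f^{(k)}_{h+1}$, so the standard telescoping identity gives the on-policy bound $Q^{\pi^k}_1-\bar f^{(k)}_1=\EE_{\pi^k}\big[\sum_{h=1}^H\prns{\overline{\Tcal_h^{\star,D}f^{(k)}_{h+1}}(x_h,a_h)-\bar f^{(k)}_h(x_h,a_h)}\big]$, the expectation being over the rollout of $\pi^k$ from $x_{1,k}$.

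Next I would apply \cref{eq:var-key-ineq2} to each summand with $f=\Tcal_h^{\star,D}f^{(k)}_{h+1}(x_h,a_h)$ and $g=f^{(k)}_h(x_h,a_h)$, writing $\psi^k_h(x,a):=D_\triangle(f^{(k)}_h(x,a)\Mid\Tcal_h^{\star,D}f^{(k)}_{h+1}(x,a))$ for the per-step Bellman residual. Two applications of Cauchy--Schwarz---over $h$ inside the expectation, then over $k$---reduce the regret to $\wt\Ocal\prns{\sqrt{\prns{\sum_k\mathcal V_k}\prns{\sum_k\Sigma_k}}+\sum_k\Sigma_k}$, where $\mathcal V_k:=\EE_{\pi^k}\big[\sum_h\Var\prns{\Tcal_h^{\star,D}f^{(k)}_{h+1}(x_h,a_h)}\big]$ and $\Sigma_k:=\EE_{\pi^k}\big[\sum_h\psi^k_h(x_h,a_h)\big]$. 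The factor $\sum_k\Sigma_k$ is controlled as in the CB proof: the MLE-generalization bound gives, w.h.p., $\sum_{j<k}\EE_{d_h^{\pi^j}}[\psi^k_h]\le\beta$ for all $k,h$ (for $Q$-type, the trajectory data at step $h$ of episode $j$ is distributed as $d_h^{\pi^j}$; for $V$-type as $d_h^{\pi^j}$ composed with $\op{Unif}(\Acal)$), so the distributional eluder pigeon-hole \citep[Proposition 21]{liu2022partially} yields $\sum_k\Sigma_k\le\wt\Ocal(H\dimRL\beta)$.

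The crux is a \emph{variance transfer lemma}, $\mathcal V_k\le\wt\Ocal\prns{H\Var(Z^{\pi^k}(x_{1,k}))+\mathrm{poly}(H)\,\Sigma_k}$. I would prove it from the one-step identity $\Var\prns{\Tcal_h^{\star,D}f^{(k)}_{h+1}(x,a)}=\Var(C_h(x,a))+\EE_{x'\sim P_h(x,a)}\Var\prns{f^{(k)}_{h+1}(x',\pi^k_{h+1}(x'))}+\Var_{x'\sim P_h(x,a)}\prns{\bar f^{(k)}_{h+1}(x',\pi^k_{h+1}(x'))}$, peeling the middle term into the same quantity one step ahead after swapping $\Var(f^{(k)}_{h+1})$ for $\Var(\Tcal_{h+1}^{\star,D}f^{(k)}_{h+2})$ via a variance-stability estimate $\abs{\Var(f)-\Var(g)}\lesssim\sqrt{\prns{\Var(f)+\Var(g)}D_\triangle(f\Mid g)}+D_\triangle(f\Mid g)$ (the analogue of \cref{eq:var-key-ineq2} for variances). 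Comparing the unrolled expression to the law of total variance $\Var(Z^{\pi^k}(x_{1,k}))=\EE_{\pi^k}\big[\sum_h\Var\prns{c_h+V^{\pi^k}_{h+1}(x_{h+1})\mid x_h,a_h}\big]$ and bounding the leftover discrepancy $\Var_{x'}\bar f^{(k)}_{h+1}-\Var_{x'}V^{\pi^k}_{h+1}$ by the value errors $\EE_{d_{h+1}^{\pi^k}}\abs{\bar f^{(k)}_{h+1}-Q^{\pi^k}_{h+1}}$ (which themselves telescope into sums of $\sqrt{\Var\cdot\psi}$ and $\psi$ via \cref{eq:var-key-ineq2}) leaves a self-referential inequality $\mathcal V_k\lesssim H\Var(Z^{\pi^k}(x_{1,k}))+\mathrm{poly}(H)\sqrt{\mathcal V_k\,\Sigma_k}+\mathrm{poly}(H)\,\Sigma_k$; solving this quadratic in $\mathcal V_k$ gives the lemma. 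Substituting it and $\sum_k\Sigma_k\le\wt\Ocal(H\dimRL\beta)$ into the regret bound yields $\op{Reg}_{\normalfont\textsf{RL}}(K)\le\wt\Ocal\prns{H\sqrt{\dimRL\beta\sum_k\Var(Z^{\pi^k}(x_{1,k}))}+\mathrm{poly}(H)\,\dimRL\beta}$. For the PAC guarantee I would run \online{} with \textsc{UAE}, so the uniform-action data justifies replacing $\dimRL$ by the $V$-type $\dimRLv$; importing the greedy action into every on-policy expectation costs a multiplicative $A$ by importance weighting against $\op{Unif}(\Acal)$ (all terms being nonnegative), and this $A$ lands on $\Sigma_k$ and on the transfer-lemma error terms but \emph{not} on $\Var(Z^{\pi^k}(x_{1,k}))$---which is why it ends up only inside the square root---after which $K(V^{\bar\pi}-V^\star)$ inherits the bound by online-to-batch conversion.

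I expect the variance transfer lemma to be the main obstacle. The confidence set controls only the one-step Bellman residuals $\psi^k_h$, not the end-to-end divergence between $f^{(k)}$ and the true return distribution, and bridging that gap by iterating the (squared-Hellinger) triangle inequality would cost a factor exponential in $H$. The telescoping must therefore stay at the level of variances and means---quantities whose errors merely add---and the self-referential appearance of $\Var(f^{(k)}_h)$ on both sides must be closed by the self-bounding argument above; tracking the powers of $H$ carefully through this recursion so as to reach the claimed $H^{2.5}$ lower-order term is the delicate part.
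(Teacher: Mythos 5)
Your proposal is correct and follows essentially the same route as the paper: optimism plus the performance-difference lemma, the variance-weighted mean-difference bound \cref{eq:var-key-ineq2}, a change-of-variance lemma converting the on-policy expected variances into $\Var(Z^{\pi^k}(x_{1,k}))$ plus Bellman residuals, double Cauchy--Schwarz, and the eluder pigeonhole for $\sum_k\Sigma_k$ (with the extra factor of $A$ from uniform actions in the V-type/\textsc{UAE} case). The only minor difference is how the variance-transfer recursion is closed: you solve an aggregated self-bounding quadratic in $\mathcal{V}_k$, whereas the paper absorbs the cross term step-by-step via AM--GM at weight $H^{-1}$ inside an induction (paying $(1+H^{-1})^H\le e$) together with the difference-of-variances bound \cref{eq:diff-of-var} --- both close the recursion with only polynomial factors in $H$ and lead to the same $H^{2.5}$ lower-order term.
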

Compared to prior worst-case bounds for GOLF \citep{jin2021bellman} and small-loss bounds for \online{} \citep{wang2023the}, our new bound has one key improvement: the leading $\sqrt{K}$ terms are replaced by the square root of the sum of return variances $\sum_k\Var(Z^{\pi^k}(x_{1,k}))$.
The function class complexity measure $\log|\Fcal|$ can be generalized to bracketing entropy as in \citet{wang2023the}.
As \cref{thm:informal-second-order-implies-small-loss} shows, our second-order bounds are more general than the first-order bounds of \citet{wang2023the}.
For example, in deterministic MDPs where variance is zero, our second-order bound converges at a fast $\wt\Ocal(1/K)$ rate which is tight up to $\log K$ factors \citep{wen2017efficient}.
In contrast, $V^\star$ may be non-zero in which case the first-order bound converges at a slow $\wt\Omega(1/\sqrt{K})$ rate.

It may be surprising that DistRL actually helps for near-deterministic systems. This is because the agent does not \emph{a priori} know that the system is deterministic but a DistRL agent can quickly learn and adapt to this fact, while standard squared loss agents learn to adapt at a slower rate.
We highlight that our second-order bound comes easily from $D_\triangle$ generalization bounds of MLE; we do not need any variance weighted regression which almost all prior works to obtain second-order bounds and is hard to extend beyond linear function approximation.

Compared to variance weighted regression, one drawback of our DistRL approach (and other TD-style DistRL algorithms \citep{wu2023distributional}) is the requirement of a stronger, distributional completeness assumption (\cref{asm:bellman-completeness}), as well as a higher statistical complexity of $\Fcal$ (it is a class of conditional distributions rather than functions). Nevertheless, the empirical success of DistRL suggest these stronger conditions are likely satisfied in practice and the faster second-order rates may indeed offset the increased function class complexity.

\subsection{On low-rank MDPs.}
Low-rank MDPs \citep{agarwal2020flambe} are the standard model for non-linear representation learning in RL \citep{uehara2021representation,zhang2022making,ren2023spectral,chang2022learning}, and are defined as follow.
\begin{definition}[Low-Rank MDP]
An MDP is has rank $d$ if each step's transition has a low-rank decomposition $P(x'\mid x,a)=\phi^\star_h(x,a)^\top\mu^\star_h(x')$ where $\phi^\star_h(x,a),\mu^\star_h(x')\in\RR^d$ are unknown features that satisfy $\sup_{x,a}\|\phi^\star_h(x,a)\|_2\leq 1$ and $\|\int g\diff\mu^\star_h(s')\|\leq\|g\|_\infty\sqrt{d}$ for all $g:\Xcal\to\RR$.
\end{definition}
Our \cref{thm:second-order-rl} (with \textsc{UAE}) applies to low-rank MDPs the same way as \citet[Theorem 5.5]{wang2023the}. In particular, \citet{wang2023the} showed three important facts for rank-$d$ MDPs:
(i) the V-type eluder is controlled $\dimRLv(\eps)\leq \Ocal(d\log(d/\eps))$,
(ii) given a realizable $\Phi$ class, the linear function class $\mathcal{F}^{\text{lin}}=\prod_h\mathcal{F}_h^{\text{lin}}$ defined as
\begin{align*}
\mathcal{F}^{\text{lin}}_h =
\Big\{
&f(z\mid x,a)=\phi(x,a)^\top w(z): \phi\in\Phi\,,
\\&w:[0,1]\to\RR^d, \text{ s.t.}, \max_z\|w(z)\|_2\leq\sqrt{d}\Big\}
\end{align*}
satisfies distributional BC (\cref{asm:bellman-completeness}),
and (iii) if costs are discrete in a uniform grid of $M$ points, the bracketing entropy of $\mathcal{F}^{\text{lin}}$ is $\wt\Ocal(dM+\log|\Phi|)$.
Combining these facts with \cref{thm:second-order-rl} implies a second-order PAC bound for low-rank MDPs:
\begin{corollary}[Second-Order PAC Bound for Low-Rank MDPs]\label{cor:low-rank-mdp-online}
Suppose the MDP has rank $d$, assume $\phi^\star\in\Phi$ and costs are discrete in a uniform grid of $M$ points, then, w.h.p., \online{} with \textsc{UAE}, $\Fcal=\mathcal{F}^{\text{lin}}$ and $\beta=dM+\log(|\Phi|/\delta)$ outputs a policy $\bar\pi$ that satisfies,
\begin{align*}
V^{\bar\pi}-V^\star\leq \wt\Ocal\prns{ H\sqrt{ \frac{\overline{\Var_{1:K}}\cdot Ad\beta}{K} } + \frac{AdH^{2.5}\beta}{K} },
\end{align*}
where $\overline{\Var_{1:K}}=\frac1K\sum_{k=1}^K\Var(Z^{\pi^k}(x_{1,k}))$.
\end{corollary}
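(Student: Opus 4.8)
The plan is to instantiate the \textsc{UAE} branch of \cref{thm:second-order-rl} with the linear distributional class $\mathcal{F}^{\text{lin}}$ and verify that its three hypotheses---distributional Bellman completeness, a controlled V-type distributional eluder dimension, and a bounded function-class complexity---all hold for rank-$d$ MDPs. Each of these is exactly one of the three facts established by \citet{wang2023the} for their first-order low-rank result, so the real content is just chaining them together with the sharper variance-dependent guarantee rather than reproving them.

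First I would invoke fact (ii): since $\phi^\star\in\Phi$ and the one-step costs are supported on the uniform $M$-point grid, convolving the discrete cost density $C_h(x,a)$ with $\phi(x',a')^\top w(\cdot)$ and then integrating over $x'\sim P_h(x,a)=\phi^\star_h(x,a)^\top\mu^\star_h(\cdot)$ again produces a density of the form $\phi^\star_h(x,a)^\top w'(\cdot)$ on the grid, where the norm bound $\max_z\|w'(z)\|_2\le\sqrt d$ follows from the low-rank normalization $\|\int g\diff\mu^\star_h\|\le\|g\|_\infty\sqrt d$. Hence \cref{asm:bellman-completeness} holds with $\Fcal=\mathcal{F}^{\text{lin}}$ (or its generalized-completeness variant if the image class $\Gcal_h$ is taken slightly larger). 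Next I would substitute the two quantitative parameters: fact (i) gives $\dimRLv(1/K)\le\Ocal(d\log(dK))=\wt\Ocal(d)$, and fact (iii) gives that the $1/K$-bracketing entropy of $\mathcal{F}^{\text{lin}}$ is $\wt\Ocal(dM+\log|\Phi|)$, so---using that the $D_\triangle$ generalization bound underpinning \cref{thm:second-order-rl} extends from $\log|\Fcal|$ to bracketing entropy verbatim, as noted after the theorem---the effective confidence radius is $\beta=\wt\Ocal(dM+\log(|\Phi|/\delta))$, matching the stated choice up to logarithmic factors.

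Plugging $\dimRLv=\wt\Ocal(d)$ and this $\beta$ into the \textsc{UAE} bound of \cref{thm:second-order-rl} yields that $K\prns{V^{\bar\pi}-V^\star}$ is at most $\wt\Ocal\big(H\sqrt{Ad\beta\sum_{k=1}^K\Var(Z^{\pi^k}(x_{1,k}))}+AdH^{2.5}\beta\big)$; dividing by $K$ and writing $\sum_{k=1}^K\Var(Z^{\pi^k}(x_{1,k}))=K\,\overline{\Var_{1:K}}$ gives $V^{\bar\pi}-V^\star\le\wt\Ocal\big(H\sqrt{\overline{\Var_{1:K}}\cdot Ad\beta/K}+AdH^{2.5}\beta/K\big)$, which is the claimed inequality.

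The step I expect to need the most care is confirming that the variance-dependent MLE analysis behind \cref{thm:second-order-rl} is genuinely compatible with the bracketing-entropy substitution \emph{and} with the fact that fact (ii) delivers completeness into a possibly-enlarged class: as in \citet{wang2023the} one must re-check that the covering/bracketing argument in the $D_\triangle$ generalization lemma goes through for the discrete-grid class $\mathcal{F}^{\text{lin}}$, and that the eluder pigeonhole step is applied to $\Psi_{\textsf{V},h}$ (the uniform-action-averaged triangular discrimination) rather than $\Psi_h$, so that the $A$ factor enters exactly once inside the square root and once in the lower-order term. Everything else is bookkeeping: folding the $\log K$ and $\log(1/\delta)$ factors into $\wt\Ocal$ and checking that the $H$- and $H^{2.5}$-dependence is inherited unchanged from \cref{thm:second-order-rl}.
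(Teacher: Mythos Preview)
Your proposal is correct and follows essentially the same route as the paper: the corollary is obtained by plugging the three facts from \citet{wang2023the}---distributional Bellman completeness of $\mathcal{F}^{\text{lin}}$, the $\wt\Ocal(d)$ bound on $\dimRLv$, and the $\wt\Ocal(dM+\log|\Phi|)$ bracketing entropy---directly into the \textsc{UAE} branch of \cref{thm:second-order-rl} and then dividing by $K$. The paper does not give a separate proof for this corollary beyond stating exactly this combination, so your write-up is, if anything, more detailed than what appears there.
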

To the best of our knowledge, this is the first variance-dependent bound in RL beyond linear function approximation, which is a significant statistical benefit of DistRL.

\subsection{Proof Sketch for \cref{thm:second-order-rl}}
The new RL tool we'll employ is the following change-of-measure lemma for variance.
\begin{restatable}[Change of Variance]{lemma}{mainPaperChangeOfVarianceLemma}
For any $f:\Xcal\times\Acal\to\Delta([0,1])$, $\pi$ and $x_1$, we have
{\small\begin{align}
    &\EE_{\pi,x_1}\bracks{\Var(f_h(x_h,a_h))}\leq 2e\Var(Z^\pi(x_1))\,+ \nonumber
    \\&12H^2\EE_{\pi,x_1}\Big[\textstyle\sum_{t\geq h} D_\triangle(f_t(x_t,a_t)\Mid \Tcal^{\pi,D}_tf_{t+1}(x_t,a_t))\Big]. \label{eq:mainPaperChangeOfVarianceLemma}
\end{align}}
\end{restatable}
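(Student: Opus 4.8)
The plan is to prove the Change of Variance lemma by induction on $h$ (downward from $H$ to the step in question), establishing a recursive inequality that relates $\Var(f_h(x_h,a_h))$ under the roll-in distribution to $\Var$ of the convolved/Bellman-backed-up distribution plus a one-step triangular-discrimination error, and then unrolling the recursion. The core identity to exploit is the law of total variance: for the true return, $\Var(Z^\pi_h(x_h,a_h)) = \EE[\Var(Z^\pi_{h+1}(x_{h+1},a_{h+1})+c_h \mid x_h,a_h)] + \Var(\EE[\cdots\mid x_h,a_h])$, and analogously the distributional Bellman operator satisfies $\Var(\Tcal^{\pi,D}_h f_{h+1}(x_h,a_h)) = \Var(c_h) + \EE_{x',a'}[\Var(f_{h+1}(x',a'))]$ since convolution adds variances and $c_h \perp f_{h+1}(x',a')$ conditionally. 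So I would first write $\Var(f_h(x_h,a_h))$, compare it to $\Var(\Tcal^{\pi,D}_h f_{h+1}(x_h,a_h))$ via \cref{eq:var-key-ineq1} or \cref{eq:var-key-ineq2} (applied to $f = f_h(x_h,a_h)$ and $g = \Tcal^{\pi,D}_h f_{h+1}(x_h,a_h)$, noting $D_\triangle$ between them is exactly the summand in the RHS), and then use the variance-decomposition identity to peel off one layer and expose $\EE_{x',a'}[\Var(f_{h+1}(x',a'))]$, setting up the induction.

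The technical wrinkle is that comparing variances (not means) via the mean-difference inequalities \cref{eq:var-key-ineq1}–\cref{eq:var-key-ineq2} is not immediate: those bound $|\bar f - \bar g|$, whereas here I need $|\Var(f) - \Var(g)|$. I would handle this by the standard trick of writing $\Var(f) = \overline{f^{(2)}} - \bar f^2$ where $f^{(2)}$ is the pushforward under $y\mapsto y^2$ (a contraction on $[0,1]$, so $D_\triangle$ does not increase), bounding $|\overline{f^{(2)}}-\overline{g^{(2)}}|$ and $|\bar f^2 - \bar g^2| \le 2|\bar f - \bar g|$ by the mean inequalities, and collecting terms; this is where the $H^2$ factor and the constants like $12$ enter, since each of the $H$ backup steps contributes an $O(H)$ Cauchy–Schwarz/AM–GM loss when converting $\sqrt{\Var \cdot D_\triangle}$ cross-terms into $\Var + H^2 \cdot D_\triangle$ form, and the errors at steps $t \ge h$ accumulate along the trajectory. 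The $2e$ constant in front of $\Var(Z^\pi(x_1))$ strongly suggests a geometric-series bound of the form $\prod_{t}(1 + 1/H) \le e$, i.e. the induction inflates the leading variance term by a $(1+O(1/H))$ factor per step across at most $H$ steps; I would structure the recursion so the multiplicative constant on the variance term grows by exactly this rate.

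The main obstacle I anticipate is controlling the \emph{cross terms} cleanly: after applying \cref{eq:var-key-ineq2} one gets a term $\sqrt{\Var(\Tcal^{\pi,D}_h f_{h+1}(x_h,a_h)) \cdot \delta_h}$ (with $\delta_h$ the triangular discrimination), and $\Var(\Tcal^{\pi,D}_h f_{h+1})$ itself contains the recursive quantity $\EE_{x',a'}[\Var(f_{h+1}(x',a'))]$ — so the recursion is not linear but involves a square root of the next-level term. The right move is to apply AM–GM, $\sqrt{ab} \le \frac{1}{2H} a + \frac{H}{2} b$, splitting off a $\frac{1}{2H}\EE[\Var(f_{h+1})]$ piece (which gets absorbed into the $(1+O(1/H))$ inflation of the recursion) and an $O(H)\,\delta_h$ piece (which accumulates into the $H^2 \sum_t D_\triangle$ term after unrolling over $H$ steps, since each of the $H$ levels contributes $O(H)$). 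I would also need to check the precondition $D_\triangle \le 1/2$ for \cref{eq:var-key-ineq1} is not actually needed here because \cref{eq:var-key-ineq2} holds unconditionally — indeed using \cref{eq:var-key-ineq2} throughout avoids any case analysis, at the cost of the additive $5 D_\triangle$ term which is lower-order and folds into the same $H^2 \sum_t D_\triangle$ bucket. Finally, taking $f_{h+1}$ fixed and tracking that all expectations are under the roll-in law $\EE_{\pi,x_1}$ (so the telescoping of $\EE_{x_h,a_h}\EE_{x_{h+1},a_{h+1}\mid x_h,a_h}$ is valid) closes the argument.
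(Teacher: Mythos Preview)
Your overall architecture---induction downward from $H$, a one-step variance comparison via AM--GM yielding a $(1+1/H)$ multiplicative inflation (hence the $e$), and accumulation of $O(H)$-per-step errors into the $H^2\sum_t D_\triangle$ term---matches the paper's approach. However, there is a genuine gap in your variance decomposition of the Bellman-backed-up distribution.

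You write $\Var(\Tcal^{\pi,D}_h f_{h+1}(x_h,a_h)) = \Var(c_h) + \EE_{x',a'}[\Var(f_{h+1}(x',a'))]$, but this is incorrect: conditioning on $(c_h,x',a')$, the law of total variance gives
\[
\Var(\Tcal^{\pi,D}_h f_{h+1}(x_h,a_h)) \;=\; \EE_{x',a'}\bigl[\Var(f_{h+1}(x',a'))\bigr] \;+\; \Var_{c_h,x',a'}\bigl(c_h + \bar f_{h+1}(x',a')\bigr),
\]
and the second term is \emph{not} just $\Var(c_h)$---it also contains the between-state variability $\Var_{x',a'}(\bar f_{h+1}(x',a'))$. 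You cannot drop this piece (you are upper-bounding, and it is nonnegative), and it is precisely what must be matched to $\Var(Z^\pi(x_1))$: the law-of-total-variance decomposition of $\Var(Z^\pi)$ telescopes through terms of the form $\Var_{c_h,x'}(c_h + V^\pi_{h+1}(x'))$, not $\Var(c_h)$. So you must additionally show
\[
\Var_{c_h,x'}\bigl(c_h + \bar f_{h+1}(x',\pi(x'))\bigr) \;\le\; 2\,\Var_{c_h,x'}\bigl(c_h + V^\pi_{h+1}(x')\bigr) \;+\; O(H)\cdot(\text{error terms}),
\]
which requires controlling $|\bar f_{h+1}(x',a') - V^\pi_{h+1}(x')|$ along the roll-in. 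The paper handles this in a separate lemma via the performance-difference lemma, bounding $|\bar f_{h+1}(x_{h+1},\pi) - V^\pi_{h+1}(x_{h+1})| \le \sum_{t\ge h+1}\EE_{\pi,x_{h+1}}[\sqrt{2\delta_t(x_t,a_t)}]$, then squaring and applying Cauchy--Schwarz over $t$ to produce the $(H-h)\sum_t\delta_t$ contribution. Your proposal does not mention this step, and without it the recursion does not close.

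A minor note: your route to $|\Var(f)-\Var(g)|$ via $\Var(f) = \overline{f^{(2)}}-\bar f^2$ and data processing can be made to work (using $\Var(f^{(2)})\le 4\Var(f)$ on $[0,1]$), though the paper takes a different path, using $\Var(f)=\tfrac12\EE_{z,z'\sim f\otimes f}[(z-z')^2]$ together with tensorization of Hellinger to get \cref{eq:diff-of-var}. Either works for the one-step comparison; the essential missing ingredient is the $\bar f_{h+1}\to V^\pi_{h+1}$ replacement described above.
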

For each episode $k$, by optimism of $\bar f^{(k)}_1$, performance difference lemma and the fact $\Tcal_h^{\pi^k} \bar f_{h+1}^{(k)}(x_h,a_h)=\overline{\Tcal_h^{\pi^k} f_{h+1}^{(k)}}(x_h,a_h)$, we have
{\small\begin{align*}
    &V^{\pi^k}(x_{1,k})-V^\star(x_{1,k})
    \leq V^{\pi^k}(x_{1,k})-\min_a\bar f_1(x_{1,k},a)
    \\&=\sum_{h=1}^H\EE_{\pi^k,x_{1,k}}\bracks{ \overline{\Tcal_h^{\pi^k} f_{h+1}^{(k)}}(x_h,a_h)-\bar f_h^{(k)}(x_h,a_h) }.
\end{align*}}
Let $\delta_{h,k}(x,a):=D_\triangle(f^{(k)}_h(x,a)\Mid \Tcal^{\star,D}_hf^{(k)}_{h+1}(x,a))$.
{\small
\begin{align*}
    &\sum_{h=1}^H\EE_{\pi^k,x_{1,k}}\bracks{ \overline{\Tcal_h^{\pi^k} f_{h+1}^{(k)}}(x_h,a_h)-\bar f_h^{(k)}(x_h,a_h) }
    \\\leq&\sum_{h=1}^H 4\sqrt{\EE_{\pi^k,x_{1,k}}[\Var(f_h^{(k)}(x_h,a_h))]\cdot \EE_{\pi^k,x_{1,k}}[\delta_{h,k}(x_h,a_h)] }
    \\&\quad\,+ 5\EE_{\pi^k,x_{1,k}}[\delta_{h,k}(x_h,a_h)] \tag{\cref{eq:var-key-ineq2}}
    \\\leq&\sum_{h=1}^H 4\sqrt{ \prns{2e\Var(Z^\pi(x_{1,k})) + 12H^2\Delta_k} \cdot \EE_{\pi^k,x_{1,k}}[\delta_{h,k}(x_h,a_h)] }
    \\&\quad\,+ 5\EE_{\pi^k,x_{1,k}}[\delta_{h,k}(x_h,a_h)] \tag{\cref{eq:mainPaperChangeOfVarianceLemma}}
    \\\leq& 4\sqrt{ \prns{2e\Var(Z^\pi(x_{1,k})) + 12H^2\Delta_k} \cdot H\Delta_k} + 5H\Delta_k, \tag{Cauchy-Schwarz}
\end{align*}}
where $\Delta_k:= \sum_{h=1}^H\EE_{\pi^k,x_{1,k}}[\delta_{h,k}(x_h,a_h)]$.
Finally, we can sum over all episodes and use the fact that $\sum_k\Delta_k\leq Hd\log K$ w.p. $1-\delta$, where $d$ is the appropriate distributional eluder dimension depending on \textsc{UAE}. This last step is true due to MLE's generalization bound and standard eluder-type arguments from \citet{wang2023the}.

\section{Second-Order Bounds for Offline DistRL}\label{sec:offline-rl}

We now turn to offline RL and prove that pessimism in the face of uncertainty with MLE-confidence sets enjoys second-order PAC bounds under single-policy coverage.
The algorithm we study is \offline{} \citep{wang2023the}, which adapts the pessimism-over-confidence-set approach from BCP \citep{xie2021bellman} with the DistRL confidence set. As shown in \cref{alg:offlinerl}, \offline{} returns the best policy with respect to its pessimistic value estimate, induced by the distributional confidence set constructed with the given data.

Following recent advancements in offline RL \citep{xie2021bellman,uehara2022pessimistic,jin2021pessimism}, we prove best-effort guarantees that aim to compete with any covered comparator policy $\wt\pi$ and that only requires weak single-policy coverage. In particular, we do not suffer the strong all-policy coverage condition used in \citep{chen2019information}.
Recall the single-policy concentrability w.r.t. the comparator policy $\wt\pi$ is defined as $C^{\wt\pi}:=\max_h\|\diff d^{\wt\pi}_h/\diff\nu_h\|_\infty$.
We now state our main result for offline RL.

\begin{algorithm}[t!]
\caption{\offline{} \citep{wang2023the}}
\label{alg:offlinerl}
\begin{algorithmic}[1]
    \State\textbf{Input:} datasets $\Dcal_1,\dots,\Dcal_H$, distribution class $\Fcal$, policy class $\Pi$.
    \State $\forall\pi\in\Pi$, set $\Fcal_\pi\gets\cs((\Dcal_h)_{h\in[H]};\Tcal^{\pi,D})$.
    \State $\forall\pi\in\Pi$, set $f^\pi\gets\argmax_{f\in\Fcal_\pi}\EE_{x_1\sim d_1}[\bar f_1(x_1,\pi)]$.
    \State \textbf{Output:} $\wh\pi = \argmin_{\pi\in\Pi}\EE_{x_1\sim d_1}[\bar f_1^{\pi}(x_1,\pi)]$.
\end{algorithmic}
\end{algorithm}
\begin{restatable}[Second-order bounds for Offline RL]{theorem}{secondOrderOfflineRL}\label{thm:second-order-offline-rl}
Under \cref{asm:bellman-completeness}, for any $\delta\in(0,1)$, w.p. at least $1-\delta$, running \offline{} with $\beta=\log(H|\Pi||\Fcal|/\delta)$ learns a policy $\wh\pi$ that enjoys the following bound: for any comparator $\wt\pi\in\Pi$ (not necessarily the optimal $\pi^\star$), we have
\begin{equation*}
    V^{\wh\pi}-V^{\wt\pi} \leq \Ocal\prns{ H\sqrt{\frac{\Var(Z^{\wt\pi})C^{\wt\pi}\beta}{N}} + \frac{H^{2.5}C^{\wt\pi}\beta}{N} }.
\end{equation*}
\end{restatable}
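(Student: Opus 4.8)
The plan is to mirror the online-RL argument behind \cref{thm:second-order-rl}, but to replace the per-episode optimism step with the two-sided \emph{pessimism sandwich} intrinsic to \offline{}, and to replace the eluder-dimension pigeonhole with a single change of measure through $C^{\wt\pi}$. First I would establish the \textbf{validity of the confidence sets}: by the MLE generalization bound of \citet{wang2023the} together with a union bound over $\pi\in\Pi$ (which is where the $\log|\Pi|$ in $\beta$ enters), with probability at least $1-\delta$ the true return-distribution function of $\pi$ lies in $\Fcal_\pi$ for \emph{every} $\pi\in\Pi$ simultaneously, and moreover for every $\pi$ the chosen $f^\pi$ satisfies the in-data Bellman-residual bound $\sum_h\EE_{\nu_h}[D_\triangle(f^\pi_h(x_h,a_h)\Mid\Tcal^{\pi,D}_hf^\pi_{h+1}(x_h,a_h))]\leq\wt\Ocal(H\beta/N)$. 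Here \cref{asm:bellman-completeness} is what guarantees the regression target $\Tcal^{\pi,D}_hf^\pi_{h+1}$ lies in $\Fcal_h$, so the log-likelihood gap of $f^\pi$ is at most $\beta$ and the usual likelihood-to-$D_\triangle$ conversion applies.

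Next I would set up the sandwich. Since the true return distribution of $\wh\pi$ lies in $\Fcal_{\wh\pi}$ and $f^{\wh\pi}$ is the \emph{max}imizer of $\EE_{d_1}[\bar f_1(x_1,\wh\pi)]$ over $\Fcal_{\wh\pi}$, we get $V^{\wh\pi}\leq\EE_{d_1}[\bar f^{\wh\pi}_1(x_1,\wh\pi)]$; and since $\wh\pi$ \emph{min}imizes this pessimistic estimate over $\Pi$ while $\wt\pi\in\Pi$, $\EE_{d_1}[\bar f^{\wh\pi}_1(x_1,\wh\pi)]\leq\EE_{d_1}[\bar f^{\wt\pi}_1(x_1,\wt\pi)]$. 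Hence $V^{\wh\pi}-V^{\wt\pi}\leq\EE_{d_1}[\bar f^{\wt\pi}_1(x_1,\wt\pi)]-V^{\wt\pi}$, and it remains to bound the right-hand side. By the performance-difference/telescoping identity along $\wt\pi$, using $\Tcal^{\wt\pi}_h\bar f^{\wt\pi}_{h+1}=\overline{\Tcal^{\wt\pi,D}_hf^{\wt\pi}_{h+1}}$ exactly as in the proof sketch of \cref{thm:second-order-rl}, this difference equals $\sum_{h=1}^H\EE_{\wt\pi,d_1}[\bar f^{\wt\pi}_h(x_h,a_h)-\overline{\Tcal^{\wt\pi,D}_hf^{\wt\pi}_{h+1}}(x_h,a_h)]$.

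Then I would apply \cref{eq:var-key-ineq2} to each summand, followed by Cauchy–Schwarz within each $h$, to bound the above by $\sum_h 4\sqrt{\EE_{\wt\pi,d_1}[\Var(f^{\wt\pi}_h(x_h,a_h))]\cdot\EE_{\wt\pi,d_1}[\delta_h(x_h,a_h)]}+5\EE_{\wt\pi,d_1}[\delta_h(x_h,a_h)]$, where $\delta_h(x,a):=D_\triangle(f^{\wt\pi}_h(x,a)\Mid\Tcal^{\wt\pi,D}_hf^{\wt\pi}_{h+1}(x,a))$. Now I invoke the Change of Variance lemma \cref{eq:mainPaperChangeOfVarianceLemma} with $f=f^{\wt\pi}$ and $\pi=\wt\pi$ to replace $\EE_{\wt\pi,d_1}[\Var(f^{\wt\pi}_h(x_h,a_h))]$ by $2e\Var(Z^{\wt\pi})+12H^2\Delta$, where $\Delta:=\sum_h\EE_{\wt\pi,d_1}[\delta_h(x_h,a_h)]$, and apply Cauchy–Schwarz over $h$ once more to reach $V^{\wh\pi}-V^{\wt\pi}\leq 4\sqrt{(2e\Var(Z^{\wt\pi})+12H^2\Delta)\cdot H\Delta}+5\Delta$. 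The final quantitative step is the bound on $\Delta$: single-policy concentrability gives $\EE_{d^{\wt\pi}_h}[\delta_h]\leq C^{\wt\pi}\EE_{\nu_h}[\delta_h]$, so the Step-1 in-data MLE bound yields $\Delta\leq\wt\Ocal(HC^{\wt\pi}\beta/N)$; plugging this in and using $\sqrt{H^2\Delta\cdot H\Delta}=\wt\Ocal(H^{1.5}\Delta)$ together with $\Delta\leq\wt\Ocal(HC^{\wt\pi}\beta/N)$ reproduces $H\sqrt{\Var(Z^{\wt\pi})C^{\wt\pi}\beta/N}+H^{2.5}C^{\wt\pi}\beta/N$.

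The main obstacle I anticipate is obtaining the \emph{linear} (not quadratic) dependence on $C^{\wt\pi}$ while keeping the variance term equal to the comparator's \emph{return} variance $\Var(Z^{\wt\pi})$ rather than a sum of per-step conditional variances — this is precisely what the Change of Variance lemma buys, and the subtlety is that it must be applied with the policy-specific operator $\Tcal^{\wt\pi,D}$, which is consistent only because \offline{} constructs a separate confidence set $\Fcal_\pi$ (built with $\Tcal^{\pi,D}$) for each candidate policy; the residuals $\delta_h$ then naturally live under $d^{\wt\pi}_h$ in the telescoping sum yet are controlled under $\nu_h$ by MLE, with the two linked by a single factor of $C^{\wt\pi}$. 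A secondary bookkeeping point is the union bound over $\Pi$ that is needed so that the true return distribution of the \emph{data-dependent} output $\wh\pi$ still lies in $\Fcal_{\wh\pi}$, which is what makes the lower sandwich inequality $V^{\wh\pi}\leq\EE_{d_1}[\bar f^{\wh\pi}_1(x_1,\wh\pi)]$ legitimate.
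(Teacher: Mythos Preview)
Your proposal is correct and follows essentially the same route as the paper's proof: the pessimism sandwich $V^{\wh\pi}\leq\EE_{d_1}[\bar f^{\wh\pi}_1(x_1,\wh\pi)]\leq\EE_{d_1}[\bar f^{\wt\pi}_1(x_1,\wt\pi)]$, the performance-difference decomposition along $\wt\pi$, the application of \cref{eq:var-key-ineq2} followed by the Change of Variance lemma \cref{eq:mainPaperChangeOfVarianceLemma}, Cauchy--Schwarz over $h$, and the final change of measure $\EE_{d^{\wt\pi}_h}[\delta_h]\leq C^{\wt\pi}\EE_{\nu_h}[\delta_h]$ are exactly the steps the paper uses, in the same order and with the same constants. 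Your discussion of the role of the per-policy confidence sets $\Fcal_\pi$ (built with $\Tcal^{\pi,D}$) and of the union bound over $\Pi$ accurately anticipates the bookkeeping behind the paper's facts (i) and (ii).
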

Here, the leading term scales with the variance of the \emph{comparator policy's} returns $\Var(Z^{\wt\pi})$.
Since the variance is bounded by the first moment, this bound immediately improves the small-loss PAC bound of \citet{wang2023the}.
In near-deterministic settings, our second-order bound guarantees a fast $1/N$ rate and is tight up to log factors, which is not necessarily the case for small-loss bounds. In particular, our result shows that DistRL is even more robust to poor coverage than as shown in \citet{wang2023the}; that is, \offline{} can strongly compete with a comparator policy $\wt\pi$ if one of the following is true: (i) $\nu$ has good coverage over $\wt\pi$,
so the $\sqrt{1/N}$ term has a small constant; or (ii) $\nu$ has bad (but finite) coverage and $\wt\pi$ has small \emph{variance}, in which case we can still obtain a fast $1/N$ rate (with constant scaling with coverage).
To the best of our knowledge, this is the first second-order bound for offline RL.
\paragraph{Variance of $Z(\pi^k)$ vs. $Z(\pi^\star)$.}
In online RL, \cref{thm:second-order-rl,cor:low-rank-mdp-online} has the average variance of the played policies $Z(\pi^k)$, while in offline RL, \cref{thm:second-order-offline-rl} has the variance of the optimal policy $Z(\pi^\star)$ (if comparing with optimal policy). From a technical perspective, this dichotomy arises from the fact that in offline RL, single-policy concentrability allows us to change measure to $\pi^\star$, while in online RL, we cannot perform the switch and instead rely on eluder-type arguments.
The variances of $Z(\pi^k)$ and $Z(\pi^\star)$ are in general incomparable. Nonetheless, both statements are sharper than the small-loss bound as shown by \cref{thm:second-order-implies-small-loss}. Both are also tight in deterministic settings.

\paragraph{Computational Efficiency.}
Both \online{} and \offline{} optimize over the confidence set to ensure optimism and pessimism, respectively, but this step is known to be computationally hard even in tabular MDPs \citep{dann2018oracle}. This is also an issue for other version space algorithms: OLIVE \citep{jiang2017contextual}, GOLF \citep{jin2021bellman}, and BCP \citep{xie2021bellman}. However, the confidence set is needed for the purpose of deep exploration and can be replaced by myopic strategies such as $\eps$-greedy that are computationally cheap \citep{dann2022guarantees}. Finally, in the sequel, we show that in the case of CBs ($H=1$), \online{} can be efficiently implemented with neural nets via disagreement computation \citep{feng2021provably}.

\section{Contextual Bandit Experiments}\label{sec:experiments}
We empirically validate our stronger theory in the contextual bandit setting where our algorithm \cb{} can be efficiently implemented.
We demonstrate that learning the cost distribution (as in \cb{}) consistently improves performance of the baseline algorithm RegCB \citep{foster2018practical} which uses the squared loss instead of log-likelihood. It's worth noting that cost distribution learning has been shown to be effective in inverse-gap weighted (IGW) algorithms \citep{wang2023the}; however, our focus here is on optimistic algorithms such as \cb{} and RegCB. We now describe our efficient implementation with neural networks as function approximators via computing width with the log-likelihood loss.

\paragraph{Efficient Implementation by Computing Width.}
We group incoming contexts into batches $\Bcal_k\subset\Xcal$ to use GPU parallelism for neural networks. Let $\Dcal_{k-1}$ denote the history so far. Then, recall that \cb{} aims to compute optimistic actions $a_k=\argmin_a \min_{f\in\Fcal_{k-1}}\bar f(x_k,a)$ for each context $x_k\in\Bcal_k$, where $\Fcal_{k-1}$ is the subset of $\beta$-optimal functions w.r.t. the log-likelihood on the history $\Lcal_{\textsf{CB}}(f,\Dcal_{k-1})$, where $\beta$ is a hyperparameter.
We consider inducing optimism by subtracting the width of $\Fcal_{k-1}$, defined as
\begin{align*}
    w_k(x,a)
    =\max_{f,f'\in\Fcal}\braces{ \bar f(x,a)-\bar f'(x,a) }
    \,\,\text{s.t.}\,\,f,f'\in\Fcal_{k-1}.
\end{align*}
Then, given the MLE $g_k = \argmax_{g\in\Fcal}\Lcal_{\textsf{CB}}(g,\Dcal_{k-1})$ we can set $f_k:=(\bar g_k-w_k)$ which satisfies optimism, \ie, $f_k(x_k,a)\leq \bar C(x_k,a)$, for all $a$. Thus, the goal now is to compute $w_k(x_k,a)$ for each $x_k\in\Bcal_k$ and $a\in\Acal$.

We modify the width computation strategy of \citet{feng2021provably} to deal with the log-likelihood loss. In particular, given the current MLE $g_k$ parameterized by a neural net, we create a copy $g'$ and train $g'$ for a few steps of gradient ascent on the disagreement objective ($g_k$ is fixed):
\begin{align*}
&\sum_{a\in\Acal}\sum_{x_k\in\Bcal_k} \lambda (\bar g'(x_k,a) - \bar g_k(x_k,a))^2 / |\Bcal_k|
\\-&\sum_{(x,a)\in\Dcal_{k-1}} (\bar g' (x,a) - \bar g_k(x,a))^2 / |\Dcal_{k-1}|
\\-&\sum_{a \in \Acal}\sum_{x_k\in\Bcal_k} \lambda_1 (\bar g'(x_k,a) - \bar g_k(x_k,a)) / |\Bcal_k|
\end{align*}
where, the last term of the maximization objective is to avoid a zero gradient when $g_k=g'$. Due to memory constraints, we approximate the second term with a subset of the history. Then, we denote $\wh w_k(x,a) = |\bar g_k(x,a) - \bar g'(x,a)|$ and set the bonus to be the normalized width $\lambda_2 \cdot \frac{\wh w_k(s,a)}{\max_{a\in\Acal, x\in\Bcal_k} \wh w_k(x,a)}$. $\lambda,\lambda_1,\lambda_2$ are hyperparameters.

We note that an alternative poly-time algorithm is to binary search for a Lagrange multiplier as in RegCB \citep{foster2018practical}, which we also tried. However, the binary search approach requires an optimization oracle at every binary search depth, for every action, whereas disagreement computation only needs one optimization oracle per batch of contexts. Binary searching is thus much more computationally costly and we did not observe any improvement in performance to justify the increased computation. Hence, we use disagreement-based width computation for inducing optimism for all \cb{} and RegCB experiments.

\paragraph{CB Tasks.}
We now compare \cb{} and RegCB on the three real-world CB tasks: King County Housing \citep{OpenML2013}, Prudential Life Insurance \citep{prudential-life-insurance-assessment}, and CIFAR-100 \citep{cifar100}. The Housing and Prudential tasks are derived from risk prediction tasks, where a fixed max cost is incurred for over-predicting risk and a low cost is incurred for under-predicting risk \citep{farsang2022conditionally}. The CIFAR-100 task is derived from the image classification task, where $0$ cost is given for the correct label, $0.5$ cost is given for an almost correct label (\ie, correct superclass), and $1$ cost is given otherwise (for wrong superclass). All tasks were rolled out for $5000$ steps in batches of $32$ examples.

\paragraph{Function Approximators.}
We use neural networks for squared loss regression in RegCB and maximum likelihood estimation in \cb{}. For the King County Housing dataset and the Prudential Life Insurance dataset, we used $2$ hidden-layer MLPs, while for CIFAR-100, we used ResNet-18 \citep{he2016deep}. This is the same setup as in \citet[Appendix K]{wang2023the}.

\paragraph{Results.}
Table~\ref{tab:results} shows that cost distribution learning in \cb{} consistently improves the costs and regret compared to the baseline squared loss method RegCB.
Also, \cref{fig:cost-curve} shows that \cb{} converges to a smaller cost much faster than RegCB. This reinforces that our stronger theory for MLE-based distribution learning indeed translates to more effective algorithms than standard squared loss regression. We note that in the Housing and Prudential tasks, our costs are actually lower and better than the previously reported numbers by IGW algorithms \citep{wang2023the}. However, it is worth noting that optimistic algorithms based on width computation is still more computationally costly than IGW algorithms, and a carefully tuned IGW can likely perform just as well in practice.

\begin{table}
\begin{center}
\label{tab:results}
\centering
\tabcolsep=1pt
\begin{small}
\begin{tabular}{lcc}
\toprule
Algorithm: & RegCB & DistUCB (Ours) \\
\midrule
\multicolumn{3}{l}{King County Housing \citep{OpenML2013}} \\
\midrule
All episodes      &\hspace{0.5cm} .708 (.051) & \tb{.683} (.057) \\
Last 100 ep.      &\hspace{0.5cm} .676 (.038) & \tb{.640} (.037) \\
\midrule
\multicolumn{3}{l}{Prudential Life Insurance \citep{prudential-life-insurance-assessment}} \\
\midrule
All episodes      &\hspace{0.5cm} .287 (.058) & \tb{.248} (.061) \\
Last 100 ep.      &\hspace{0.5cm} .278 (.055) & \tb{.236} (.054) \\
\midrule
\multicolumn{3}{l}{CIFAR-100 \citep{cifar100}} \\
\midrule
All episodes      &\hspace{0.5cm} .890 (.053) & \tb{.862} (.058) \\
Last 100 ep.      &\hspace{0.5cm} .854 (.053) & \tb{.823} (.060) \\
\bottomrule
\end{tabular}
\vspace{-0.25cm}
\end{small}
\end{center}
\caption{Average cost over all episodes and last 100 episodes (lower is better). We report `mean (sem)' over $3$ seeds. }
\end{table}

\begin{figure}
    \centering
    \includegraphics[width=0.4\textwidth]{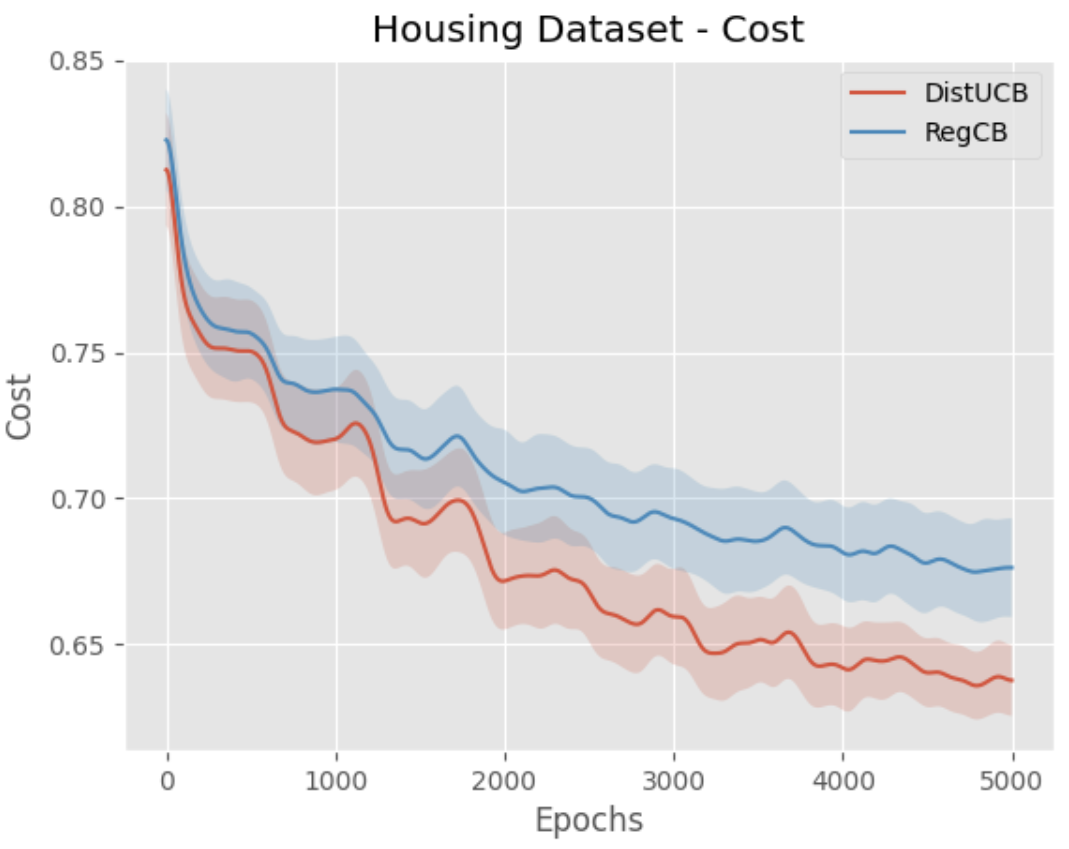}
    \vspace*{-3mm}
    \caption{Cost curves for the Housing task (lower is better).}
    \label{fig:cost-curve}
\end{figure}

\section{Conclusion}
We proved that MLE-based DistRL attains second-order bounds in both online and offline RL, significantly sharpening the previous results of \citet{wang2023the} and further showing the finite-sample statistical benefits of DistRL. In the CB case, we also proved a novel first and second-order gap-dependent bound and implemented the algorithm, showing it outperforms the previous squared loss method.
An interesting direction is to show whether DistRL can obtain even higher-order bounds than second-order.

\bibliography{paper}
\bibliographystyle{plainnat}

\newpage
\appendix
\onecolumn
\begin{center}\LARGE
\textbf{Appendices}
\end{center}

\section{Notations}

{\renewcommand{\arraystretch}{1.3}%
\begin{table}[h!]
    \centering
      \caption{List of Notations} \vspace{0.3cm}
    \begin{tabular}{l|l}
    $\Scal, \Acal, A$ & State and action spaces, and $A = |\Acal|$. \\
    $\Delta(S)$ & The set of distributions supported by set $S$. \\
    $\bar d$ & The expectation of any real-valued distribution $d$, \ie, $\bar d = \int y d(y)\diff\lambda(y)$. \\
    $\Var(d)$ & The variance of any real-valued distribution $d$, \ie, $\Var(d) = \int(y-\bar d)^2d(y)\diff\lambda(y)$. \\
    $[N]$ & $\braces{1,2,\dots,N}$ for any $N\in\NN$. \\
    $Z^\pi_h(x,a)$ & Distribution of $\sum_{t=h}^H c_t$ given $x_h=x,a_h=a$ rolling in from $\pi$. \\
    $Q^\pi_h(x,a),V^\pi_h(x)$ & $Q^\pi_h(x,a)=\bar Z^\pi_h(x,a)$ and $V^\pi_h=\Eb[a\sim\pi(x)]{Q^\pi_h(x,a)}$. \\
    $Z^\star_h,Q^\star_h,V^\star_h$ & $Z^\pi_h,Q^\pi_h,V^\pi_h$ with $\pi=\pi^\star$, the optimal policy. \\
    $\Tcal_h^\pi, \Tcal_h^\star$ & The Bellman operators that act on functions. \\
    $\Tcal_h^{\pi,D},\Tcal_h^{\star,D}$ & The distributional Bellman operators that act on conditional distributions. \\
    $V^\pi,Z^\pi,V^\star,Z^\star$ & $V^\pi = V^\pi_1(x_1)$, $Z^\pi=Z^\pi_1(x_1)$. $V^\star,Z^\star$ are defined similarly with $\pi^\star$. \\
    $d^{\pi}_h(x,a)$ & The probability of $\pi$ visiting $(x,a)$ at time $h$. \\
    $C^{\wt\pi}$ & Coverage coefficient $\max_h\nm{\nicefrac{\diff d^{\wt\pi}_h}{\diff\nu_h}}_\infty$. \\
    $D_\triangle(f\Mid g)$ & Triangular discrimination between $f,g$. \\
    $H(f\Mid g)$ & Hellinger distance between $f,g$.
    \end{tabular}
    \label{tab:notation}
\end{table}
}

\subsection{Statistical Distances}
Let $f,g$ be distributions over $\Ycal$. Then,
\begin{align*}
    &D_\triangle(f\Mid g) = \sum_y \frac{\prns{f(y)-g(y)}^2}{f(y)+g(y)},
    \\&H^2(f\Mid g) = \frac{1}{2}\sum_y \prns{ \sqrt{f(y)}-\sqrt{g(y)} }^2.
\end{align*}

\begin{lemma}\label{lemma:hellinger-triangle-equiv}
For any distributions $f,g$, we have $2H^2(f\Mid g)\leq D_\triangle(f\Mid g)\leq 4H^2(f\Mid g)$.
\end{lemma}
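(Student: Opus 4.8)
The plan is to prove the two-sided inequality $2H^2(f\Mid g)\leq D_\triangle(f\Mid g)\leq 4H^2(f\Mid g)$ by a purely pointwise argument, integrating (or summing) a scalar inequality term-by-term. First I would write both quantities as integrals of nonnegative densities against the dominating measure $\lambda$, namely $D_\triangle(f\Mid g)=\int \frac{(f-g)^2}{f+g}\diff\lambda$ and $2H^2(f\Mid g)=\int(\sqrt f-\sqrt g)^2\diff\lambda$, and then it suffices to show the pointwise bounds
\begin{equation*}
(\sqrt a-\sqrt b)^2\;\leq\;\frac{(a-b)^2}{a+b}\;\leq\;2(\sqrt a-\sqrt b)^2
\end{equation*}
for all reals $a,b\geq 0$ (with the convention that the middle term is $0$ when $a=b=0$). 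Establishing this scalar inequality is the entire content of the lemma; once it holds, integrating against $\diff\lambda$ and recalling $H^2=\frac12\int(\sqrt f-\sqrt g)^2$ immediately yields $2H^2\leq D_\triangle\leq 4H^2$.

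The key algebraic step is to factor $a-b=(\sqrt a-\sqrt b)(\sqrt a+\sqrt b)$, so that
\begin{equation*}
\frac{(a-b)^2}{a+b}=(\sqrt a-\sqrt b)^2\cdot\frac{(\sqrt a+\sqrt b)^2}{a+b}.
\end{equation*}
Hence everything reduces to controlling the ratio $r:=\frac{(\sqrt a+\sqrt b)^2}{a+b}=\frac{a+b+2\sqrt{ab}}{a+b}=1+\frac{2\sqrt{ab}}{a+b}$. Since $0\leq 2\sqrt{ab}\leq a+b$ by AM–GM, we get $1\leq r\leq 2$, which is exactly the pair of pointwise bounds above. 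I would handle the degenerate case $a=b=0$ separately (both sides are $0$), and note that when exactly one of $a,b$ is zero the ratio $r$ equals $1$, consistent with the bounds.

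The main obstacle here is essentially bookkeeping rather than mathematics: one must be a little careful that the pointwise identity/inequality is valid $\lambda$-almost everywhere even where $f(y)+g(y)=0$, and that the integrals are well-defined (all integrands are nonnegative, so monotone convergence / Tonelli poses no issue). I would also remark that this recovers the standard fact that $D_\triangle$ and $H^2$ are equivalent up to universal constants, as stated in the preliminaries. No completeness assumption, no eluder dimension, nothing probabilistic is needed — it is a deterministic inequality, so the proof is short and the only thing to get right is the AM–GM step $2\sqrt{ab}\le a+b$ and its consequence $1\le r\le 2$.
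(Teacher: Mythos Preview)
Your proof is correct and is essentially the same as the paper's: both arguments reduce to the pointwise bound $1\le \frac{(\sqrt a+\sqrt b)^2}{a+b}\le 2$ (equivalently, the paper's inequality $\frac{1}{\sqrt a+\sqrt b}\le\frac{1}{\sqrt{a+b}}\le\frac{\sqrt 2}{\sqrt a+\sqrt b}$), which follows from AM--GM, and then integrate. Your write-up is slightly more explicit about the factorization $a-b=(\sqrt a-\sqrt b)(\sqrt a+\sqrt b)$ and the degenerate case $a=b=0$, but the substance is identical.
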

\begin{proof}
Recall that $D_\triangle(f\Mid g)=\int_y \prns{\frac{f(y)-g(y)}{\sqrt{f(y)+g(y)}}}^2$.
Apply $\frac{1}{\sqrt{f(y)}+\sqrt{g(y)}}\leq \frac{1}{\sqrt{f(y)+g(y)}}\leq \frac{\sqrt{2}}{\sqrt{f(y)}+\sqrt{g(y)}}$.
\end{proof}

\newpage
\section{Proofs for CB Lemmas}

\varKeyIneqOne*
\begin{proof}
For any constant $c$ and random variable $X$, recall that $\EE(X-c)^2=\Var(X)+(\EE X - c)^2$.
Thus,
\begin{align*}
    \bar f-\bar g
    &= \sum_z (z-c)(f(z)-g(z))
    \\&\leq \sqrt{ \sum_z (z-c)^2 (f(z)+g(z)) } \cdot\sqrt{ \sum_z \frac{(f(z)-g(z))^2}{f(z)+g(z)} } \tag{Cauchy-Schwartz}
    \\&= \sqrt{ \Var(f) + \Var(g) + \prns*{ \bar f - c }^2 + \prns*{ \bar g - c }^2 } \cdot\sqrt{ D_\triangle(f\Mid g) }.
\end{align*}
To minimize the bound, set $c = \frac{\bar f+\bar g}{2}$ to get,
\begin{align*}
    \abs{\bar f-\bar g}
    &\leq \sqrt{ \Var(f) + \Var(g) + \prns*{ \bar f - \bar g }^2 /2 }\cdot\sqrt{ D_\triangle(f\Mid g) }
    \\&\leq \sqrt{ (\Var(f) + \Var(g))D_\triangle(f\Mid g) } + \abs{\bar f-\bar g}\sqrt{D_\triangle(f\Mid g)/2}.
\end{align*}
Rearranging and using the fact that $D_\triangle(f\Mid g)<2$,
\begin{equation*}
    \abs{\bar f - \bar g}\leq \frac{1}{1-\sqrt{D_{\triangle}(f\Mid g)/2}}\sqrt{(\Var(f) + \Var(g)) D_\triangle(f\Mid g) }.
\end{equation*}
Finally, use the facts that $\frac{1}{1-\eps}\leq2$ for $\eps\in[0,\frac12]$ and $\sqrt{D_\triangle(f\Mid g)/2}\leq\frac12$ by the premise.
\end{proof}

\begin{lemma}
For any $f,g\in\Delta([0,1])$, we have
\begin{equation}
    \abs{\Var(f)-\Var(g)}\leq 4\sqrt{ (\Var(f) + D_\triangle(f\Mid g))D_\triangle(f\Mid g)} \label{eq:diff-of-var}
\end{equation}
\end{lemma}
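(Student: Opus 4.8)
Write $v=\Var(f)$, $w=\Var(g)$, $\Delta=\abs{\bar f-\bar g}$, and $D=D_\triangle(f\Mid g)$; recall $\bar f,\bar g\in[0,1]$ and $D\ge 0$. The plan is to run the same Cauchy--Schwarz trick that proves \cref{eq:var-key-ineq1}, but applied to the \emph{squared} deviation and with the center chosen at the midpoint of the two means. Concretely, set $c=\tfrac{\bar f+\bar g}{2}\in[0,1]$ and use $\int(y-c)^2 d(y)\diff\lambda(y)=\Var(d)+(\bar d-c)^2$ for $d\in\Delta([0,1])$: since $(\bar f-c)^2=(\bar g-c)^2=\Delta^2/4$ these contributions cancel in the difference, giving
\[
\Var(f)-\Var(g)=\int(y-c)^2\prns{f(y)-g(y)}\diff\lambda(y),
\]
while the sum satisfies $\int(y-c)^2\prns{f(y)+g(y)}\diff\lambda(y)=v+w+\Delta^2/2$. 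Then I would split $f-g=\sqrt{f+g}\cdot\tfrac{f-g}{\sqrt{f+g}}$, apply Cauchy--Schwarz, and use $(y-c)^4\le(y-c)^2\le 1$ (valid since $y,c\in[0,1]$) to get
\[
\abs{v-w}\le\sqrt{\prns{v+w+\tfrac{\Delta^2}{2}}D}.
\]

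The remaining step is to remove $w$ and $\Delta$ from the right-hand side in favor of $v$ and $D$. For $\Delta$, the same Cauchy--Schwarz applied to the first moment (centered at $\tfrac12$, using $(y-\tfrac12)^2\le\tfrac14$) gives $\Delta\le\sqrt{D/2}$, so $\Delta^2/2\le D$. For $w$, I would just use $w\le v+\abs{v-w}$. Plugging both into the display and abbreviating $r:=\abs{v-w}$ turns it into the self-bounding inequality $r\le\sqrt{(2v+r+D)D}$. Squaring and solving the quadratic in $r$ yields $r\le\tfrac12\bigl(D+\sqrt{5D^2+8vD}\bigr)$, and the crude estimates $5D^2+8vD\le 8D(v+D)$ and $D\le\sqrt{D(v+D)}$ then give $r\le\tfrac{1+2\sqrt2}{2}\sqrt{D(v+D)}\le 4\sqrt{(\Var(f)+D_\triangle(f\Mid g))\,D_\triangle(f\Mid g)}$, as claimed.

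I do not anticipate a real obstacle, as every step is elementary; the one point that requires care is the choice of center in the first step. Centering the squared deviation at $\bar f$ (the naive first attempt) leaves a stray $\Delta^2$ term in the numerator and pairs $\Var(f)$ with $\Var(g)$ less favorably, so it is the midpoint $c=\tfrac{\bar f+\bar g}{2}$ that makes the cross terms vanish and produces the stated constant. It is also worth noting that no upper bound on $D$ is needed: when $D$ is large the inequality is trivially true because $\abs{\Var(f)-\Var(g)}\le 1$.
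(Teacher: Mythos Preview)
Your proof is correct and takes a genuinely different route from the paper. The paper reduces the variance difference to a mean difference via the identity $\Var(f)=\tfrac12\EE_{z,z'\sim f\otimes f}[(z-z')^2]$: it views $\Var(f)$ as the mean of the pushforward $f'$ of $f\otimes f$ under $(z,z')\mapsto\tfrac12(z-z')^2$, applies a known bound $|\bar f'-\bar g'|\le\sqrt{(4\bar f'+D_\triangle(f'\Mid g'))D_\triangle(f'\Mid g')}$, and then controls $D_\triangle(f'\Mid g')\le 4D_\triangle(f\Mid g)$ through the data processing inequality and tensorization of the Hellinger distance. Your argument instead stays with the original distributions and applies Cauchy--Schwarz directly to $\int(y-c)^2(f-g)\diff\lambda$, with the midpoint centering $c=\tfrac{\bar f+\bar g}{2}$ making the cross terms vanish, followed by a self-bounding quadratic. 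Your approach is more elementary and self-contained (no appeal to Hellinger tensorization or a prior lemma), and in fact yields the sharper constant $(1+2\sqrt2)/2\approx 1.91$ before you round up to $4$; the paper's route, on the other hand, is more conceptual and makes the reduction to the first-order inequality explicit, which is why the structure of the bound mirrors the mean-difference bound exactly.
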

\begin{proof}
Recall that $\Var(f) = \frac12\EE_{z,z'\sim f\otimes f}[(z-z')^2]$.
So if $f'$ is the distribution of $\frac12(z-z')^2$ where $z,z'\sim f$, then $\Var(f) = \bar f'$.
Since $(z-z')^2\in[0,1]$, we can use Eq.($\Delta_2$) of \citep{wang2023the} to get $\abs{\bar f-\bar g}\leq \sqrt{(4\bar f+D_\triangle(f\Mid g))D_\triangle(f\Mid g)}$.
Thus, $\abs{\Var f-\Var g}\leq\sqrt{ (4\Var_f+D_\triangle(f'\Mid g'))D_\triangle(f'\Mid g') }$.

Now it suffices to bound $D_\triangle(f'\Mid g')$ by $4D_\triangle(f\Mid g)$, which we do by data processing inequality and tensorization of Hellinger.
In particular, the tensorization of $H^2$ is given by
$H^2(f\otimes f\Mid g\otimes g)=2-2(1-H^2(f\Mid g)/2)^2$ \citep[Eqn. 7.26]{infotheorybook} and
using $1-(1-x/2)^2\leq x$ implies that $H^2(f\otimes f\Mid g\otimes g)\leq 2H^2(f\Mid g)$.
Thus,
\begin{align*}
    D_\triangle(f'\Mid g')
    &\leq D_\triangle(f\otimes f\Mid g\otimes g) \tag{data processing ineq.}
    \\&\leq 4H^2(f\otimes f\Mid g\otimes g) \tag{$D_\triangle\leq 4H^2$}
    \\&\leq 8H^2(f\Mid g) \tag{tensorization of $H^2$}
    \\&\leq 4D_\triangle(f\Mid g). \tag{$2H^2\leq D_\triangle$}
\end{align*}
\end{proof}

\varKeyIneqTwo*
\begin{proof}
If $D_\triangle(f\Mid g)>\frac12$, then we trivially have $\abs{ \bar f-\bar g } \leq 1 \leq 2D_\triangle(f\Mid g)$ since $\bar f,\bar g\in[0,1]$. Thus, we can assume $D_\triangle(f\Mid g)\leq\frac12$.
Starting from \cref{eq:var-key-ineq1}, we can bound the sum of two variances as follows,
\begin{align*}
\Var(f) + \Var(g)
&= 2\Var(f) + \Var(g) - \Var(f)
\\&\leq 2\Var(f) + 4\sqrt{(\Var(f) + D_\triangle(f\Mid g))D_\triangle(f\Mid g)} \tag{\cref{eq:diff-of-var}}
\\&\leq 2\Var(f) + 4\sqrt{\Var(f) D_\triangle(f\Mid g)} + 4D_\triangle(f\Mid g)
\\&\leq 4\Var(f) + 6D_\triangle(f\Mid g). \tag{AM-GM}
\end{align*}
Hence, we have
\begin{align*}
\abs{\bar f-\bar g}
&\leq 2\sqrt{ (\Var(f)+\Var(g))D_\triangle(f\Mid g) }
\\&\leq 2\sqrt{(4\Var(f) + 6D_\triangle(f\Mid g))D_\triangle(f\Mid g)} \tag{above inequality}
\\&\leq 4\sqrt{\Var(f) D_\triangle(f\Mid g)} + 5D_\triangle(f\Mid g).
\end{align*}
This finishes the proof.
\end{proof}

\section{Proof for Gap-dependent Bounds for CB}\label{sec:gap-dependent-proof}

Define $\dimCB(\eps)=\eluDim(\braces{(x,a)\mapsto D_\triangle(f(x,a)\Mid C(x,a)): f\in\Fcal},\eps)$ is the $\ell_1$-eluder dimension at threshold $\eps$ \citep{liu2022partially}.

\CBGapDependentRegret*
\begin{proof}[Proof of \cref{thm:gap-dependent-regret}]
Define $\delta_k(x,a):=D_\triangle(f_k(x,a)\Mid C(x,a))$ and $\Delta=\sum_k\delta_k(x_k,a_k)$, the same notation as in \cref{sec:proof-of-second-order-cb}.
We partition episodes into burn-in and stable episodes, where stable episodes are those that satisfy: $\delta_k(x_k,a_k)\leq \Var(C(x_k,a_k))$.
Let $\Ecal$ denote the set of stable episodes and $\neg\Ecal$ are the burn-in episodes.

\paragraph{Step 1: burn-in episodes have $\Ocal(\Delta)$ regret.}
\begin{align*}
    \sum_{k\in\Ecal_1\cap\Ecal_2^C} \bar C(x_k,a_k)-\min_a\bar C(x_k,a)
    &\leq \sum_{k\in\Ecal_1\cap\Ecal_2^C} \bar C(x_k,a_k)-\bar f_k(x_k,a_k) \tag{optimism}
    \\&\leq \sum_{k\in\Ecal_1\cap\Ecal_2^C} 4\sqrt{\Var(C(x_k,a_k))\delta_k(x_k,a_k)}+5\delta_k(x_k,a_k) \tag{\cref{eq:var-key-ineq2}}
    \\&\leq \sum_{k\in\Ecal_1\cap\Ecal_2^C} 4\delta_k(x_k,a_k)+5\delta_k(x_k,a_k) \tag{$\neg\Ecal$}
    \\&\leq \sum_{k=1}^K 9 \delta_k(x_k,a_k) = 9\Delta.
\end{align*}
This implies that $\sum_{k\not\in\Ecal}\bar C(x_k,a_k)-\min_a\bar C(x_k,a) \leq 9\Delta$.

\paragraph{Step 2: stable episodes have gap-dependent regret.}
We now argue those episodes in $\Ecal$ have large gap.
For each $k$, optimism implies that $\bar f_k(x_k,a_k)\leq \min_a\bar C(x_k,a) = \bar C(x_k,a_k) -\gap(x_k,a_k)$.
This implies that $\gap(x_k,a_k)\leq \bar C(x_k,a_k) - \bar f_k(x_k,a_k)$.
By $\Ecal$, we have $4\sqrt{\Var(C(x_k,a_k))\delta_k(x_k,a_k)}+5\delta_k(x_k,a_k)\leq 9\sqrt{\Var(C(x_k,a_k))\delta_k(x_k,a_k)}$, and hence the previous display implies
\begin{equation*}
    \bar C(x_k,a_k) - \bar f_k(x_k,a_k)\leq 9\sqrt{\Var(C(x_k,a_k))\delta_k(x_k,a_k)}.
\end{equation*}
If this is zero, then the regret for the episode is zero. If this is non-zero, we have $\gap(x_k,a_k)>0$ and $\Var(C(x_k,a_k))>0$, which implies that
\begin{equation*}
    9\sqrt{\delta_k(x_k,a_k)}\geq \frac{\gap(x_k,a_k)}{\sqrt{\Var(C(x_k,a_k))}}\geq \vargap.
\end{equation*}

Now we will invoke the standard peeling technique (\cref{lem:peeling}) on $9\sqrt{\delta_k(x_k,a_k)}$.
For any $\zeta>0$, we have
\begin{equation}
    \sum_{k=1}^K \I{ \delta_k(x_k,a_k)\geq\zeta }\leq 4\dimCB(\zeta)\beta\log(K)\zeta^{-1}, \label{eq:eluder-zeta}
\end{equation}
because $\I{\delta_k(x_k,a_k)\geq\zeta}\leq \zeta^{-1}\delta_k(x_k,a_k)$ and the summation of $\delta_k(x_k,a_k)$ is bounded by the eluder dimension with log factors \citep[Theorem 5.3]{wang2023the}.
This indeed satisfies the assumption of \cref{lem:peeling} with $C=4\dimCB(\vargap^2)\beta\log(K)$.
Thus, we can bound the stable episode regret as follows:
\begin{align*}
    &\sum_{k\in\Ecal_1\cap\Ecal_2} \bar C(x_k,a_k)-\min_a\bar C(x_k,a)
    \\&\leq\sum_{k\in\Ecal_1\cap\Ecal_2} 9\sqrt{\Var(C(x_k,a_k))\delta_k(x_k,a_k)} \tag{same steps as before and $\Ecal$}
    \\&\leq\sum_{k\in\Ecal_1\cap\Ecal_2} 9\sqrt{\delta_k(x_k,a_k)} \tag{$C(x_k,a_k)\in[0,1]$}
    \\&\leq 18\cdot 16\dimCB(\vargap^2)\beta\log(K)\vargap^{-1}. \tag{$9\sqrt{\delta_k(x_k,a_k)}\leq 18$ and \cref{lem:peeling}}
\end{align*}
In the last inequality, note that we invoke \cref{lem:peeling} directly on $\sqrt{\delta_k}$.
Thus, we have shown the $\vargap$-dependent regret:
\begin{align*}
    \op{Regret}_{\textsf{CB}}(K)\leq 11\cdot 4\dimCB(K^{-1})\beta\log(K) + 288\frac{\dimCB(\vargap^2)\beta\log(K)}{\vargap}.
\end{align*}
Following the same steps, and using \cref{lem:Lstar-self-bounding}, we can prove the same result for $\CStarGap$.
Therefore, we have shown that
\begin{align*}
    \op{Regret}_{\normalfont\textsf{CB}}(K)\leq\wt\Ocal\prns{\dimCB + \min\braces{\frac{\dimCB(\vargap^2)}{\vargap}, \frac{\dimCB(\CStarGap^2)}{\CStarGap} } }.
\end{align*}

Finally, notice that if $\vargap\geq\frac{1}{\sqrt{K}}$, $\dimCB(\vargap^2)\leq \dimCB(1/K) = \dimCB$ by monotonicity of the eluder dimension. If $\vargap<\frac{1}{\sqrt{K}}$ then $1/\vargap \geq \sqrt{K}$ anyways, and so this small-gap regime results in a $\Ocal(\sqrt{K})$ bound; in this case, we already have a better second-order bound in \cref{thm:second-order-cb}.
This finishes the proof for \cref{thm:gap-dependent-regret}.
\end{proof}

\begin{lemma}\label{lem:Lstar-self-bounding}
For each episode $k$, we have
\begin{equation*}
    \bar C(x_k,a_k)-\min_a\bar C(x_k,a) \leq 3\sqrt{ \min_a\bar C(x_k,a) \cdot \delta_k(x_k,a_k) } + 6\delta_k(x_k,a_k).
\end{equation*}
\end{lemma}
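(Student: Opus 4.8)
The plan is to prove \cref{lem:Lstar-self-bounding} as a first-order (small-loss) analogue of \cref{eq:var-key-ineq2}, replacing the variance $\Var(f)$ by the mean $\bar f$. The key observation is that for $f,g\in\Delta([0,1])$ we have $\Var(f)\leq\bar f$, because $\Var(f)=\int(y-\bar f)^2f(y)\diff\lambda(y)\leq\int y^2 f(y)\diff\lambda(y)\leq\int y f(y)\diff\lambda(y)=\bar f$, using $y^2\leq y$ on $[0,1]$. So the natural route is: start from \cref{eq:var-key-ineq2} applied to $f=C(x_k,a_k)$ and $g=f_k(x_k,a_k)$, which gives $\abs{\bar C(x_k,a_k)-\bar f_k(x_k,a_k)}\leq 4\sqrt{\Var(C(x_k,a_k))\delta_k(x_k,a_k)}+5\delta_k(x_k,a_k)$, then bound $\Var(C(x_k,a_k))\leq\bar C(x_k,a_k)$. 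This converts the bound into one scaling with $\bar C(x_k,a_k)$ rather than $\min_a\bar C(x_k,a)$, so the remaining work is to trade the mean at the played action for the optimal mean.

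To make that trade, I would use a self-bounding argument exactly as the lemma statement's shape (``$3\sqrt{\cdot}+6\delta$'') suggests. Write $L^\star:=\min_a\bar C(x_k,a)$ and $\gap:=\gap(x_k,a_k)=\bar C(x_k,a_k)-L^\star\geq 0$. Optimism gives $\gap\leq\bar C(x_k,a_k)-\bar f_k(x_k,a_k)$, and combining with the display above (with $\Var\leq\bar C$), $\gap\leq 4\sqrt{\bar C(x_k,a_k)\,\delta_k}+5\delta_k=4\sqrt{(L^\star+\gap)\delta_k}+5\delta_k$. Now use $\sqrt{L^\star+\gap}\leq\sqrt{L^\star}+\sqrt{\gap}$ to get $\gap\leq 4\sqrt{L^\star\delta_k}+4\sqrt{\gap\delta_k}+5\delta_k$, then apply AM–GM to the cross term, $4\sqrt{\gap\delta_k}\leq\tfrac12\gap+8\delta_k$, yielding $\tfrac12\gap\leq 4\sqrt{L^\star\delta_k}+13\delta_k$, i.e. $\gap\leq 8\sqrt{L^\star\delta_k}+26\delta_k$. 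This is already of the claimed form up to constants; if the paper insists on the precise constants $3$ and $6$, I would instead feed a sharper first-order inequality into the argument — namely Eq.($\Delta_2$) of \citet{wang2023the}, $\abs{\bar f-\bar g}\leq\sqrt{(4\bar f+D_\triangle(f\Mid g))D_\triangle(f\Mid g)}$, which was already quoted in the proof of \cref{eq:diff-of-var} — giving $\gap\leq\sqrt{(4L^\star+4\gap+\delta_k)\delta_k}$ and then the same split-and-AM–GM manipulation, which is tighter and should land on the stated constants after choosing the AM–GM weights appropriately.

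The one genuinely delicate point is the degenerate case handling: the inequality must hold even when $\gap(x_k,a_k)=0$ (trivially true, both sides nonnegative and RHS $\geq 0$) and when $L^\star=0$, in which case $\Var(C(x_k,a_k))\leq\bar C(x_k,a_k)=\gap$ and we need $\gap\leq 6\delta_k$, which follows from $\gap\leq 4\sqrt{\gap\delta_k}+5\delta_k$ by AM–GM ($4\sqrt{\gap\delta_k}\leq\gap+4\delta_k$ would give $0\leq 9\delta_k$, so instead use $4\sqrt{\gap\delta_k}\leq\tfrac12\gap+8\delta_k$ to get $\gap\leq 26\delta_k$, or a weighted version for the constant $6$). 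I expect the main obstacle to be purely bookkeeping: chasing the AM–GM weights so that the final constants come out exactly as $3$ and $6$ rather than some larger universal constants — the mathematical content (variance $\leq$ mean on $[0,1]$, plus one self-bounding step) is short. Note this lemma is exactly the ingredient needed so that the $\CStarGap$ half of \cref{thm:gap-dependent-regret} goes through by mimicking the $\vargap$ argument, with $\min_a\bar C(x_k,a)$ playing the role that $\Var(C(x_k,a_k))$ played there.
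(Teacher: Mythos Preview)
Your proposal is correct and follows essentially the paper's approach: invoke optimism together with Eq.($\Delta_2$) of \citet{wang2023the} to obtain a bound involving $\bar C(x_k,a_k)$, then use a self-bounding/AM--GM step to replace $\bar C(x_k,a_k)$ by $\min_a\bar C(x_k,a)$. The only minor difference is that the paper self-bounds $\bar C(x_k,a_k)$ directly (obtaining $\bar C(x_k,a_k)\leq 2\min_a\bar C(x_k,a)+6\delta_k$ and plugging this back into the $2\sqrt{\bar C\,\delta_k}+\delta_k$ estimate) rather than self-bounding $\gap$ as you do, and this is precisely what makes the constants $3$ and $6$ fall out without further tuning of AM--GM weights.
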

\begin{proof}
By optimism and \citet[Equation $\Delta_2$]{wang2023the}, we have
\begin{equation*}
    \bar C(x_k,a_k)-\min_a\bar C(x_k,a)\leq \bar C(x_k,a_k)-\bar f_k(x_k,a_k)\leq 2\sqrt{ \bar C(x_k,a_k)\delta_k(x_k,a_k) } + \delta_k(x_k,a_k).
\end{equation*}
Using AM-GM, this can be further bounded by $\frac12\bar C(x_k,a_k) + 3\delta_k(x_k,a_k)$. Rearranging, this implies $\bar C(x_k,a_k)\leq 2\min_a\bar C(x_k,a) + 6\delta_k(x_k,a_k)$. Therefore, plugging this back into above,
\begin{align*}
    \bar C(x_k,a_k)-\min_a\bar C(x_k,a)
    &\leq 2\sqrt{ (2\min_a\bar C(x_k,a)+6\delta_k(x_k,a_k))\delta_k(x_k,a_k) } + \delta_k(x_k,a_k)
    \\&\leq 3\sqrt{\min_a \bar C_k(x_k,a)\cdot\delta_k(x_k,a_k)} + 6\delta_k(x_k,a_k).
\end{align*}
\end{proof}

\begin{lemma}[Peeling Lemma]\label{lem:peeling}
Suppose $g_1,g_2,\dots,g_K:\Zcal\to[0,1]$ and $z_1,z_2,\dots,z_K\in\Zcal$ satisfy $g_k(z_k)\geq\gap$ for all $k$.
Moreover, suppose there exists $C>0$ such that for any $\zeta\geq\gap$, we have $\sum_k \I{g_k(z_k)\geq\zeta}\leq C\zeta^{-2}$.
Then,
\begin{equation*}
    \sum_{k=1}^Kg_k(z_k)\leq 4C\gap^{-1}.
\end{equation*}
\end{lemma}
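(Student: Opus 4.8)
The plan is to prove the Peeling Lemma (\cref{lem:peeling}) by a dyadic decomposition of the range of values $g_k(z_k)$. Since each $g_k(z_k)\geq\gap$ and $g_k(z_k)\leq 1$, I partition the interval $[\gap,1]$ into dyadic shells $I_j=[2^{j}\gap, 2^{j+1}\gap)$ for $j=0,1,\dots,J$ where $J=\lceil\log_2(1/\gap)\rceil$, so that every index $k$ lands in exactly one shell. On shell $I_j$, each term contributes at most $2^{j+1}\gap$, and the number of indices in shell $I_j$ is at most the number of indices with $g_k(z_k)\geq 2^j\gap$, which by hypothesis (applied with $\zeta=2^j\gap\geq\gap$) is at most $C(2^j\gap)^{-2}$.

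First I would write $\sum_{k=1}^K g_k(z_k) = \sum_{j=0}^{J}\sum_{k:\, g_k(z_k)\in I_j} g_k(z_k) \leq \sum_{j=0}^{J} 2^{j+1}\gap\cdot\#\{k: g_k(z_k)\geq 2^j\gap\}$. Then I would plug in the counting bound to get $\sum_{j=0}^{J} 2^{j+1}\gap\cdot C(2^j\gap)^{-2} = 2C\gap^{-1}\sum_{j=0}^{J}2^{-j}$. Since $\sum_{j=0}^{\infty}2^{-j}=2$, this is at most $4C\gap^{-1}$, which is exactly the claimed bound. The geometric sum converging is the reason the final constant is clean; the dyadic shells shrink in count fast enough that the total is dominated by the first (coarsest) shell.

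One small bookkeeping point I would be careful about: the topmost shell should be defined to include the endpoint $1$ (e.g. make the last shell closed on the right, or just note that $g_k(z_k)\le 1 < 2^{J+1}\gap$ so it still falls in some $I_j$ with $j\le J$), so that the decomposition genuinely covers all $K$ indices. I would also note that the hypothesis is stated for all real $\zeta\geq\gap$, so applying it at the discrete points $\zeta=2^j\gap$ is legitimate.

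The main obstacle here is essentially nonexistent — this is a standard peeling argument — so the only ``work'' is getting the constant right and making sure the shell at the top of the range is handled without an off-by-one error; I expect the decomposition-plus-geometric-series computation to go through immediately. (A cosmetic remark: the lemma statement uses a quadratic tail bound $C\zeta^{-2}$ and concludes $4C\gap^{-1}$, whereas the body of the gap-dependent proof invokes it with a linear-in-$\zeta^{-1}$ bound via \cref{eq:eluder-zeta}; in the proof I would state and use the version as written, with the $\zeta^{-2}$ tail, since that is the form quoted in \cref{lem:peeling}.)
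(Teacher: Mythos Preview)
Your proposal is correct and matches the paper's own proof essentially line for line: the paper also partitions $[\gap,1]$ into dyadic shells $[2^{i-1}\gap,2^i\gap)$, bounds each shell's contribution by $2^i\gap\cdot C(2^{i-1}\gap)^{-2}$, and sums the resulting geometric series to get $4C\gap^{-1}$. Your extra remarks about the top shell and the $\zeta^{-2}$ vs.\ $\zeta^{-1}$ discrepancy are valid observations but do not change the argument.
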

\begin{proof}
Divide $[\gap,1]$ into $N=\ceil{\log(1/\gap)}$ intervals, where the $i\in[N]$-th interval is $[2^{i-1}\gap, 2^i\gap)$. Then, we bound the sum via a standard peeling argument: note that $g_k(z_k)\I{ g_k(z_k)\in[2^{i-1}\gap, 2^i\gap) }\leq 2^i\gap \I{ g_k(z_k)\geq 2^{i-1}\gap }$. Therefore,
\begin{align*}
    \sum_kg_k(z_k)
    &= \sum_k \sum_{i=1}^N g_k(z_k)\I{ g_k(z_k)\in[2^{i-1}\gap, 2^i\gap) }
    \\&\leq \sum_k \sum_{i=1}^N 2^i\gap \I{ g_k(z_k)\geq 2^{i-1}\gap }
    \\&\leq \sum_{i=1}^N 2^i\gap \cdot C 2^{-2i+2}\gap^{-2} \tag{premise}
    \\&= 4C\gap^{-1} \sum_{i=1}^N 2^{-i} \leq 4C\gap^{-1}.
\end{align*}
\end{proof}

\section{RL Lemmas}

\begin{lemma}[Performance Difference]\label{lem:performance-difference}
For any $f:(\Xcal\times\Acal\to\RR)^H$, policy $\pi$ and $x_1$, we have
\begin{equation*}
    V^\pi(x_1)-f_1(x_1,\pi(x_1))=\sum_{h=1}^H\EE_{\pi,x_1}\bracks{ \prns{\Tcal^\pi_h f_{h+1}-f_h}(x_h,a_h) }.
\end{equation*}
\end{lemma}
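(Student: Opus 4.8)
The plan is to prove this by a telescoping argument combined with the tower rule, exactly as in the classical performance difference decomposition but written for an arbitrary function sequence $f$ rather than a value function. First I would adopt the boundary convention $f_{H+1}\equiv 0$ (consistent with $\Tcal_H^\pi f_{H+1}(x,a)=\bar C_H(x,a)$), so that every term is well-defined. The key observation is that along any roll-in of $\pi$ from $x_1$, the random quantity $\sum_{h=1}^H\bigl(f_h(x_h,a_h)-f_{h+1}(x_{h+1},a_{h+1})\bigr)$ telescopes to $f_1(x_1,a_1)$; taking $\EE_{\pi,x_1}$ and noting $a_1\sim\pi(x_1)$ gives the identity $f_1(x_1,\pi(x_1))=\sum_{h=1}^H\EE_{\pi,x_1}\bigl[f_h(x_h,a_h)-f_{h+1}(x_{h+1},a_{h+1})\bigr]$. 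On the other side, by definition of the return, $V^\pi(x_1)=\EE_{\pi,x_1}\bigl[\sum_{h=1}^H c_h\bigr]=\sum_{h=1}^H\EE_{\pi,x_1}[c_h]$.

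Next I would subtract these two identities to obtain $V^\pi(x_1)-f_1(x_1,\pi(x_1))=\sum_{h=1}^H\EE_{\pi,x_1}\bigl[c_h-f_h(x_h,a_h)+f_{h+1}(x_{h+1},a_{h+1})\bigr]$. Then, conditioning each summand on $(x_h,a_h)$ and using $\EE[c_h\mid x_h,a_h]=\bar C_h(x_h,a_h)$ together with $\EE[f_{h+1}(x_{h+1},a_{h+1})\mid x_h,a_h]=\EE_{x'\sim P_h(x_h,a_h),\,a'\sim\pi(x')}[f_{h+1}(x',a')]$, the inner conditional expectation of $c_h+f_{h+1}(x_{h+1},a_{h+1})$ is precisely $\Tcal_h^\pi f_{h+1}(x_h,a_h)$ by the definition of $\Tcal_h^\pi$ given in the preliminaries. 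Applying the tower rule term by term and summing over $h$ yields the claimed identity.

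The only content here is the telescoping and the tower-rule bookkeeping; the one point requiring care is the boundary convention for $f_{H+1}$ and making sure every conditioning is taken with respect to the roll-in occupancy measure of $\pi$ started at $x_1$ (i.e., that $\EE_{\pi,x_1}[\cdot]$ consistently denotes expectation over the trajectory generated by executing $\pi$ from $x_1$, so that the ``first'' term of the step-$h$ telescoping summand agrees in distribution with the relevant term at step $h+1$). I do not anticipate any genuine obstacle; this is a standard identity and the argument is a few lines once the notation is fixed.
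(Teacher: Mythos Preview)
Your proposal is correct and is the standard telescoping plus tower-rule argument for this identity; the paper itself does not give a proof but simply cites \citet[Lemma H.2]{wang2023the}, whose proof is exactly this computation. There is nothing to add.
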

\begin{proof}
See \citet[Lemma H.2]{wang2023the}.
\end{proof}

\begin{restatable}[Second-order implies Small-loss]{theorem}{secondOrderImpliesSmallLoss}\label{thm:second-order-implies-small-loss}
For online RL, suppose we have a second-order bound: $\sum_{k=1}^K V^{\pi^k}(x_{1,k})-V^\star(x_{1,k})\leq \sqrt{c \sum_{k=1}^K\Var(Z^{\pi^k}(x_{1,k}))} + c$, for some $c\in\RR_+$. Then, we also have a small-loss (first-order) bound: $\sum_{k=1}^K V^{\pi^k}(x_{1,k})-V^\star(x_{1,k})\leq \sqrt{2c \sum_{k=1}^K V^\star(x_{1,k})} + 3c$.

For offline RL, suppose we have a second-order bound w.r.t. comparator policy $\pi_{\normalfont\text{comp}}$: $V^{\wh\pi}-V^{\pi_{\normalfont\text{comp}}}\leq \sqrt{ \frac{c'\Var(Z(\pi_{\normalfont\text{comp}}))}{N} } + \frac{c'}{N}$. Then, we also have a small-loss (first-order) bound: $V^{\wh\pi}-V^{\pi_{\normalfont\text{comp}}}\leq \sqrt{ \frac{c'V^{\pi_{\normalfont\text{comp}}}}{N} } + \frac{c'}{N}$.
\end{restatable}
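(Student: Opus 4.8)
The plan is to prove both implications by relating variance to the first moment and then to the performance gap, using the self-bounding trick. I will treat the online and offline cases separately since they have slightly different algebra, but the core idea is the same: the variance of a return $Z$ supported on $[0,1]$ satisfies $\Var(Z)\le\bar Z$ (since $\Var(Z)=\EE Z^2 - (\EE Z)^2\le\EE Z^2\le\EE Z$ when $Z\in[0,1]$), so a variance-dependent bound immediately becomes a bound in terms of $\bar Z = V^{\pi^k}$ or $V^{\wt\pi}$. The remaining work is to convert the played-policy value $V^{\pi^k}$ (resp. $V^{\wh\pi}$) appearing on the right-hand side into the comparator value $V^\star$ (resp. $V^{\pi_{\text{comp}}}$), which is what a genuine small-loss bound should scale with; this is where the regret term itself must be reabsorbed.

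First I would handle offline RL, which is cleaner. Starting from $V^{\wh\pi}-V^{\pi_{\text{comp}}}\le\sqrt{c'\Var(Z(\pi_{\text{comp}}))/N}+c'/N$ and applying $\Var(Z(\pi_{\text{comp}}))\le V^{\pi_{\text{comp}}}$ immediately gives $V^{\wh\pi}-V^{\pi_{\text{comp}}}\le\sqrt{c'V^{\pi_{\text{comp}}}/N}+c'/N$, which is exactly the claimed first-order bound — no rearrangement is needed here because the right-hand side already references the comparator, not $\wh\pi$. For online RL, I would start from $\op{Reg} := \sum_k V^{\pi^k}(x_{1,k})-V^\star(x_{1,k})\le\sqrt{c\sum_k\Var(Z^{\pi^k}(x_{1,k}))}+c$, apply $\Var(Z^{\pi^k}(x_{1,k}))\le V^{\pi^k}(x_{1,k})=V^\star(x_{1,k})+(V^{\pi^k}(x_{1,k})-V^\star(x_{1,k}))$, and sum to get $\sum_k\Var(Z^{\pi^k}(x_{1,k}))\le\sum_k V^\star(x_{1,k})+\op{Reg}$. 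Substituting, $\op{Reg}\le\sqrt{c(\sum_k V^\star(x_{1,k})+\op{Reg})}+c\le\sqrt{c\sum_k V^\star(x_{1,k})}+\sqrt{c\op{Reg}}+c$ using $\sqrt{a+b}\le\sqrt a+\sqrt b$. This is a quadratic-type inequality in $\sqrt{\op{Reg}}$; solving it (or applying the standard lemma that $x\le a\sqrt x + b \implies x\le 2b + 2a^2$, or more carefully that $x - a\sqrt x \le b$ with $a=\sqrt c$ gives $x \le (a/2 + \sqrt{a^2/4 + b})^2$) yields $\op{Reg}\le\sqrt{2c\sum_k V^\star(x_{1,k})}+3c$ after bounding constants, matching the statement.

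The main obstacle — really the only non-routine step — is the self-bounding rearrangement in the online case: the naive substitution reintroduces $\op{Reg}$ on the right-hand side, so I need to be careful to peel off the $\sqrt{c\op{Reg}}$ term and solve the resulting inequality while tracking the universal constants so that they collapse to exactly the stated $\sqrt{2c}$ and $3c$ (the factor $2$ inside the square root and the $3$ on the linear term are where the slack from $\sqrt{a+b}\le\sqrt a+\sqrt b$ and from completing the square gets absorbed). I would also note at the start that all returns lie in $[0,1]$ almost surely by the normalization assumption stated in the preliminaries, which is what justifies $\Var(Z)\le\bar Z$ — this is the one place realizability-type structure enters, and it is immediate. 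Everything else is elementary algebra, so the proof should be short.
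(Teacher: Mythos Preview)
Your proposal is correct and follows essentially the same approach as the paper: both use $\Var(Z)\le\bar Z$ for $Z\in[0,1]$ (immediate for the offline case) and a self-bounding argument for the online case. The only cosmetic difference is that the paper, instead of splitting $\sqrt{c(\sum_k V^\star+\op{Reg})}$ and solving a quadratic, applies AM-GM to $\sqrt{c\sum_k V^{\pi^k}}\le \tfrac{c}{2}+\tfrac12\sum_k V^{\pi^k}$ to first deduce $\sum_k V^{\pi^k}\le 2\sum_k V^\star+3c$ and then substitutes back---this route gives the stated $\sqrt{2c}$ leading constant directly, whereas your $\sqrt{a+b}\le\sqrt a+\sqrt b$ split followed by the quadratic lemma yields a slightly larger constant (e.g.\ $2$ rather than $\sqrt 2$) unless you solve the un-split inequality more carefully.
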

\begin{proof}
The offline RL claim follows from $\Var(Z(\pi_{\normalfont\text{comp}}))\leq V^{\pi_{\normalfont\text{comp}}}$ because returns are bounded between $[0,1]$ and variance is bounded by second moment, which is bounded by first moment.
So, we will focus on the online RL claim for the remainder of the proof.
\begin{align}
    \sum_{k=1}^K V^{\pi^k}(x_{1,k})-V^\star(x_{1,k})
    &\leq \sqrt{c \sum_{k=1}^K\Var(Z^{\pi^k}(x_{1,k}))} + c \tag{premise}
    \\&\leq \sqrt{c \sum_{k=1}^K V^{\pi^k}(x_{1,k})} + c \label{eq:second-order-implies-small-loss-implicit}
    \\&\leq \frac12 c + \frac12 \sum_{k=1}^KV^{\pi^k}(x_{1,k}) + c, \tag{AM-GM}
\end{align}
which implies
\begin{equation*}
    \sum_{k=1}^K V^{\pi^k}(x_{1,k})\leq 2\sum_{k=1}^K V^\star(x_{1,k}) + 3c.
\end{equation*}
Plugging this back into \cref{eq:second-order-implies-small-loss-implicit} gives
\begin{align*}
    \sum_{k=1}^K V^{\pi^k}(x_{1,k})-V^\star(x_{1,k})\leq \sqrt{ 2c\sum_{k=1}^KV^\star(x_{1,k})+3c^2 } + c,
\end{align*}
which finishes the proof.
\end{proof}

\subsection{Variance Change of Measure}
\mainPaperChangeOfVarianceLemma*
\begin{proof}
Apply law of total variance to the variance term of \cref{thm:main-variance-change-of-measure}, \ie,
\begin{align*}
    \Var(Z^\pi_1(x_1))
    &=\EE_{\pi,x_1}[\Var(f_h(x_h,a_h)\mid x_h,a_h,x_1)\mid x_1] + \Var_{\pi,x_1}\prns{ \EE\bracks{ f_h(x_h,a_h)\mid x_h,a_h,x_1 }\mid x_1 }
    \\&\geq \EE_{\pi,x_1}[\Var(f_h(x_h,a_h)\mid x_h,a_h,x_1)\mid x_1].
\end{align*}
\end{proof}
\begin{theorem}\label{thm:main-variance-change-of-measure}
Fix any $f:\Xcal\times\Acal\to\Delta([0,1])$ and any policy $\pi$.
Define $\delta_h(x,a):=D_\triangle(f_h(x,a)\Mid\Tcal^{\pi,D}_h f_{h+1}(x,a))$ and $\Delta_h(x_h,a_h):=\sum_{t=h}^H\EE_{\pi,x_h,a_h}\bracks{\delta_t(x_t,a_t)}$.
Then, for all $h\in[H],x_h,a_h$, we have
\begin{equation}
    \Var(f_h(x_h,a_h))\leq 2e\Var(Z^\pi_h(x_h,a_h))+12H(H-h+1)\Delta_h(x_h,a_h).
\end{equation}
Therefore, for any $x_1$,
\begin{equation}
    \EE_{\pi,x_1}\bracks{\Var(f_h(x_h,a_h))}\leq 2e\Var(Z^\pi_1(x_1)) + 12H^2\EE_{\pi,x_1}\bracks{\Delta_h(x_h,a_h)}.
\end{equation}
\end{theorem}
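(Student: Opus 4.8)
The plan is to first observe that the second (``therefore'') display of the theorem follows immediately from the first (pointwise) display: taking $\EE_{\pi,x_1}[\cdot]$ of the bound $\Var(f_h(x_h,a_h))\le 2e\,\Var(Z^\pi_h(x_h,a_h))+12H(H-h+1)\,\Delta_h(x_h,a_h)$ and using the law of total variance — conditioning on $(x_h,a_h)$ gives $\EE_{\pi,x_1}[\Var(Z^\pi_h(x_h,a_h))]\le\Var(Z^\pi_1(x_1))$ since the total variance dominates the expected conditional variance — together with $H-h+1\le H$, yields the in-expectation bound. So all the work is in the pointwise bound.

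For the pointwise bound I would compare $f_h(x_h,a_h)$ directly to the true return distribution $Z^\pi_h(x_h,a_h)$, rather than recursing through scalar Bellman backups as in the sketch of \cref{thm:second-order-rl}. First, apply \cref{eq:diff-of-var} with $g=Z^\pi_h(x_h,a_h)$ — using symmetry of $D_\triangle$ to place $\Var(Z^\pi_h(x_h,a_h))$ inside the root — then split the square root and apply AM--GM to get $\Var(f_h(x_h,a_h))\le 2\Var(Z^\pi_h(x_h,a_h))+8\,D_\triangle(f_h(x_h,a_h)\Mid Z^\pi_h(x_h,a_h))$; by \cref{lemma:hellinger-triangle-equiv} the last term is at most $32\,H^2(f_h(x_h,a_h)\Mid Z^\pi_h(x_h,a_h))$. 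It thus suffices to prove a ``distributional simulation'' bound $H^2(f_h(x_h,a_h)\Mid Z^\pi_h(x_h,a_h))\le \Ocal(H)\,\Delta_h(x_h,a_h)$, after which routine bookkeeping of the constants recovers the asserted $2e$ and $12H(H-h+1)$ (or a comparable pair).

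To prove the simulation bound, define $\rho_h(x,a):=H^2(f_h(x,a)\Mid Z^\pi_h(x,a))$ and induct downward in $h$ from the base case $\rho_{H+1}\equiv 0$. Writing $g_h:=\Tcal_h^{\pi,D}f_{h+1}$ and using that the Hellinger distance $\sqrt{H^2(\cdot\Mid\cdot)}$ satisfies the triangle inequality, $\sqrt{\rho_h(x_h,a_h)}\le\sqrt{H^2(f_h(x_h,a_h)\Mid g_h(x_h,a_h))}+\sqrt{H^2(g_h(x_h,a_h)\Mid Z^\pi_h(x_h,a_h))}$. The first term is at most $\sqrt{\delta_h(x_h,a_h)/2}$ since $2H^2\le D_\triangle$. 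For the second term, the point is that under both $g_h(x_h,a_h)$ and $Z^\pi_h(x_h,a_h)$ the tuple $(c_h,x_{h+1},a_{h+1})$ has the same joint law (the cost is independent of the next state-action given $(x_h,a_h)$), and conditioned on it the two distributions are $\delta_{c_h}*f_{h+1}(x_{h+1},a_{h+1})$ and $\delta_{c_h}*Z^\pi_{h+1}(x_{h+1},a_{h+1})$; joint convexity of squared Hellinger (it is an $f$-divergence) and shift invariance then give $H^2(g_h(x_h,a_h)\Mid Z^\pi_h(x_h,a_h))\le\EE_{x_{h+1},a_{h+1}}[\rho_{h+1}(x_{h+1},a_{h+1})]$. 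Squaring the triangle inequality and using Young's inequality with parameter $1/H$ produces the recursion $\rho_h(x_h,a_h)\le(1+\tfrac1H)\,\EE_{x_{h+1},a_{h+1}}[\rho_{h+1}(x_{h+1},a_{h+1})]+\tfrac{1+H}{2}\,\delta_h(x_h,a_h)$; unrolling it down to step $H$ and bounding $(1+\tfrac1H)^{t-h}\le e$ gives $\rho_h(x_h,a_h)\le eH\,\Delta_h(x_h,a_h)$, which combined with the previous paragraph finishes the argument.

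The step I expect to be the main obstacle is the second half of the Hellinger triangle inequality, i.e.\ proving $H^2(g_h(x_h,a_h)\Mid Z^\pi_h(x_h,a_h))\le\EE_{x_{h+1},a_{h+1}}[\rho_{h+1}(x_{h+1},a_{h+1})]$: this needs the Markovian independence of the cost from the next state-action given the current one (so that the mixing law over $(c_h,x_{h+1},a_{h+1})$ is identical for $g_h$ and $Z^\pi_h$) together with joint convexity and shift invariance of squared Hellinger distance. A minor technical point is that $Z^\pi_h(x_h,a_h)$ must be absolutely continuous with respect to the dominating measure $\lambda$ for $H^2$ and $D_\triangle$ to be well-defined, which follows from the normalization conventions of \cref{sec:prelim}; the remaining manipulations (AM--GM, Cauchy--Schwarz, geometric-series bookkeeping) are routine.
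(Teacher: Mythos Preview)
Your argument is correct and takes a genuinely different route from the paper. The paper proves the pointwise bound by induction on the variance itself: it first shows $\Var(f_h)\le(1+H^{-1})\Var(\Tcal_h^{\pi,D}f_{h+1})+12H\delta_h$ via \cref{eq:diff-of-var} and AM--GM, then applies the law of total variance to split $\Var(\Tcal_h^{\pi,D}f_{h+1})$ into a term handled by the inductive hypothesis and a term $\Var_{c_h,x_{h+1}}(c_h+\bar f_{h+1}(x_{h+1},\pi))$ that is separately compared to $\Var_{c_h,x_{h+1}}(c_h+V^\pi_{h+1}(x_{h+1}))$ using the performance-difference lemma and Eq.($\Delta_1$). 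Your approach instead collapses the comparison to a single divergence $D_\triangle(f_h\Mid Z^\pi_h)$ and then proves a clean Hellinger ``simulation lemma'' $H^2(f_h\Mid Z^\pi_h)\le eH\,\Delta_h$ by triangle inequality, joint convexity of $H^2$, and shift invariance. This is more modular (the Hellinger bound is an interesting statement in its own right) and avoids the auxiliary \cref{lem:var-change-of-measure-second-term}; it also yields a coefficient $2$ rather than $2e$ on $\Var(Z^\pi_h)$ and an error term $\Ocal(H)\,\Delta_h$ that is actually tighter than the paper's $12H(H-h+1)\,\Delta_h$ for most $h$. The paper's route, on the other hand, exactly recovers the stated constants $2e$ and $12H(H-h+1)$, which your bookkeeping will not---you get a comparable but different pair, as you already note.

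Two small remarks. First, your invocation of independence of $c_h$ and $(x_{h+1},a_{h+1})$ is not actually needed: joint convexity of $H^2$ only requires that both $g_h(x_h,a_h)$ and $Z^\pi_h(x_h,a_h)$ are mixtures over the \emph{same} joint law of $(c_h,x_{h+1},a_{h+1})$, which holds because both are defined through the true dynamics. Second, the absolute-continuity concern you flag is shared by the paper (it already needs $\Tcal_h^{\pi,D}f_{h+1}$ to have a $\lambda$-density to define $\delta_h$), so requiring the same of $Z^\pi_h$ is no additional burden.
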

\begin{proof}
The main technical lemma is \cref{lem:var-change-of-measure-main}, which is proven with induction.
Given this lemma, use the fact that $(1+H^{-1})^H\leq e$ to get
\begin{equation*}
    \Var(f_h(x_h,a_h))
    \leq \sum_{t=h}^H e\prns{\Eb[\pi,x_h,a_h]{2\Var_{c_t,x_{t+1}}(c_t+V^\pi_{t+1}(x_{t+1})\mid x_t,a_t) + 12H\Delta_t(x_t,a_t)}}.
\end{equation*}
Recall that $\Var(Z^\pi_h(x_h,a_h)) = \sum_{t=h}^H\EE_{\pi,x_h,a_h}\bracks{ \Var_{c_t,x_{t+1}}(c_t+V^\pi_{t+1}(x_{t+1})\mid x_t,a_t) }$, by the law of total variance.
Also for any $t\geq h$, we have $\EE_{\pi,x_h,a_h}\Delta_t(x_t,a_t)\leq \Delta_h(x_h,a_h)$. Thus,
\begin{equation*}
    \Var(f_h(x_h,a_h))
    \leq 2e \Var(Z^\pi_h(x_h,a_h)) + 12H(H-h+1)\Delta_h(x_h,a_h),
\end{equation*}
which proves the claim.
\end{proof}

\begin{lemma}\label{lem:var-change-of-measure-main}
For all $h\in[H],x_h,a_h$, we have
\begin{align}
    \Var(f_h(x_h,a_h))
    &\leq \sum_{t=h}^H(1+H^{-1})^{t-h+1}\prns{\Eb[\pi,x_h,a_h]{2\Var_{c_t,x_{t+1}}(c_t+V^\pi_{t+1}(x_{t+1})\mid x_t,a_t) + 12H\Delta_t(x_t,a_t)}}. \label{eq:var-change-of-measure-IH}
\end{align}
\end{lemma}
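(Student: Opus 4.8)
The plan is to prove \cref{eq:var-change-of-measure-IH} by \emph{strong} backward induction on $h$, from $h=H$ down to $h=1$. The base case $h=H$ is just the inductive step with an empty tail: there $f_{h+1}$ is the terminal point mass, so $\Tcal^{\pi,D}_H f_{H+1}(x_H,a_H)=C_H(x_H,a_H)$ and $Z^\pi_H(x_H,a_H)\stackrel{D}{=}C_H(x_H,a_H)$, and applying \cref{eq:diff-of-var} plus AM-GM already gives the claim. For the inductive step I would proceed in three moves. \emph{Step 1 (one distributional backup).} Apply \cref{eq:diff-of-var} with $f=f_h(x_h,a_h)$ and $g=\Tcal^{\pi,D}_h f_{h+1}(x_h,a_h)$, so that $D_\triangle(f\Mid g)=\delta_h(x_h,a_h)$; splitting $4\sqrt{(\Var(f_h(x_h,a_h))+\delta_h)\delta_h}$ by AM-GM with a trade-off parameter of order $H$ and rearranging yields
\[
\Var(f_h(x_h,a_h))\le (1+H^{-1})\bigl(\Var(\Tcal^{\pi,D}_h f_{h+1}(x_h,a_h))+12H\,\delta_h(x_h,a_h)\bigr).
\]
Each level costs one factor $(1+H^{-1})$, and since there are at most $H$ of them the accumulated product is $\le(1+H^{-1})^H\le e$, which is the source of the $2e$ in \cref{thm:main-variance-change-of-measure}.

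\emph{Step 2 (law of total variance, then change of measure).} Write a sample of $\Tcal^{\pi,D}_h f_{h+1}(x_h,a_h)$ as $c_h+y$ with $c_h\sim C_h(x_h,a_h)$, $x_{h+1}\sim P_h(x_h,a_h)$, $a_{h+1}\sim\pi(x_{h+1})$, $y\sim f_{h+1}(x_{h+1},a_{h+1})$. Conditioning the law of total variance on $(c_h,x_{h+1},a_{h+1})$ gives
\[
\Var(\Tcal^{\pi,D}_h f_{h+1}(x_h,a_h))=\EE_{\pi,x_h,a_h}[\Var(f_{h+1}(x_{h+1},a_{h+1}))]+\Var_{c_h,x_{h+1},a_{h+1}}(c_h+\bar f_{h+1}(x_{h+1},a_{h+1})).
\]
The first term is exactly what the inductive hypothesis at level $h+1$ controls; multiplied by the $(1+H^{-1})$ from Step 1 its weights $(1+H^{-1})^{t-h}$ become $(1+H^{-1})^{t-h+1}$, matching the $t\ge h+1$ part of \cref{eq:var-change-of-measure-IH}. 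For the one-step term I would compare it with $\Var_{c_h,x_{h+1}}(c_h+V^\pi_{h+1}(x_{h+1})\mid x_h,a_h)$ — the $t=h$ term in the law-of-total-variance expansion of $\Var(Z^\pi_h(x_h,a_h))$ — using that, since $\bar f_{h+1},V^\pi_{h+1}\in[0,1]$, one has $|\Var(A)-\Var(B)|\le O(1)\,\EE|A-B|$, so the gap is at most $O(1)\,\EE_{\pi,x_h,a_h}[|\bar f_{h+1}(x_{h+1},a_{h+1})-Q^\pi_{h+1}(x_{h+1},a_{h+1})|]$. Crucially this comparison is purely additive, so it does not add an extra $(1+H^{-1})$ to the return-variance term and the $t=h$ exponent stays at $1$.

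\emph{Step 3 (bound the mean error by telescoping, and collect).} By \cref{lem:performance-difference} started at step $h+1$ and using $\Tcal^\pi_t\bar f_{t+1}=\overline{\Tcal^{\pi,D}_t f_{t+1}}$, the quantity $|\bar f_{h+1}(x,a)-Q^\pi_{h+1}(x,a)|$ telescopes into $\sum_{t=h+1}^H\EE_{\pi,x,a}[|\overline{\Tcal^{\pi,D}_t f_{t+1}}(x_t,a_t)-\bar f_t(x_t,a_t)|]$, and by \cref{eq:var-key-ineq2} each summand is at most $4\sqrt{\Var(f_t(x_t,a_t))\delta_t(x_t,a_t)}+5\delta_t(x_t,a_t)$. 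AM-GM with a trade-off of order $H$ splits $4\sqrt{\Var(f_t)\delta_t}\le\frac{1}{2H}\Var(f_t)+8H\delta_t$; the $\Var(f_t)$ piece is re-expanded via the inductive hypothesis at level $t$ — this is why \emph{strong} induction is needed, since controlling $\Var(f_h)$ invokes \cref{eq:var-change-of-measure-IH} at all levels $t>h$ — producing further copies of the true return variances and of the $\Delta$-terms, while the $O(H)\delta_t$ pieces fold into $12H\Delta_t$ because $\delta_t\le\Delta_t$. Collecting, the weight on each $\Var_{c_s,x_{s+1}}(c_s+V^\pi_{s+1})$ is the $(1+H^{-1})$ from Steps 1--2 plus at most $H$ contributions of order $\frac{1}{2H}$ (geometric-weighted by $\le(1+H^{-1})^{s-h+1}$) from the $\Var(f_t)$ re-expansions with $t\le s$; with the trade-offs of order $H$ this extra mass is absorbed into the slack of the coefficient $2$, and all residual $\delta$-mass fits under $12H\Delta_t$. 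This closes the induction. (For a general stochastic $\pi$ one additionally carries, in Step 2, the action-variance term $\Var_{a_{h+1}\sim\pi(x_{h+1})}(\bar f_{h+1}(x_{h+1},a_{h+1}))$, compared against $\Var_{a_{h+1}\sim\pi}(Q^\pi_{h+1})$ — one of the nonnegative terms in the full law-of-total-variance decomposition of $\Var(Z^\pi_h)$ — and the rest is identical; for the greedy policies used by \online{} this term vanishes.)

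The main obstacle is the constant/exponent bookkeeping in Step 3: one must verify that re-expanding the $\Var(f_t)$ terms through the inductive hypothesis does not inflate the coefficient in front of the true return variances beyond the budgeted $(1+H^{-1})^{t-h+1}\cdot 2$, which is what pins the AM-GM trade-offs to order $H$ and explains the otherwise opaque factor $12H$ in the statement. The remaining ingredients — the difference-of-variances bound \cref{eq:diff-of-var}, the law of total variance, and \cref{lem:performance-difference} — are off-the-shelf.
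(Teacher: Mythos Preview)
Your Steps 1 and 2 are essentially the paper's argument: the one-step backup via \cref{eq:diff-of-var} plus AM-GM, followed by the law of total variance on $\Tcal^{\pi,D}_h f_{h+1}$. The divergence is in Step 3, and there your bookkeeping does not close.

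Concretely: once you invoke the inductive hypothesis at level $h+1$ on the term $\EE_{\pi,x_h,a_h}[\Var(f_{h+1}(x_{h+1},a_{h+1}))]$ and multiply by the $(1+H^{-1})$ from Step 1, the weight you place on $\Var_{c_s,x_{s+1}}(c_s+V^\pi_{s+1})$ for every $s\ge h+1$ is already exactly $(1+H^{-1})^{s-h+1}\cdot 2$ --- the full target. There is \emph{no} slack at those levels; the only slack is at $s=h$, where your additive comparison gives coefficient $(1+H^{-1})\cdot 1$ against a target of $(1+H^{-1})\cdot 2$. But your Step 3 re-expansion (splitting $4\sqrt{\Var(f_t)\delta_t}$ by AM-GM and re-applying the strong IH to each $\Var(f_t)$) deposits additional mass precisely at levels $s\ge h+1$: summing the $\tfrac{1}{2H}$ contributions over $t=h+1,\dots,s$ and multiplying by the $O(1)$ constant from $|\Var A-\Var B|\le O(1)\,\EE|A-B|$ gives an increment of order a constant (not $o(1)$) times $(1+H^{-1})^{s-h+1}$. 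So the induction does not close with the stated coefficient $2$, nor in fact with any fixed constant, since every step of the induction adds the same overflow.

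The fix is to \emph{not} use \cref{eq:var-key-ineq2} in Step 3. The paper instead handles the one-step comparison with the cruder $\Var(X+Y)\le 2\Var(X)+2\Var(Y)$ (this is the actual source of the factor $2$), and then bounds $\Var_{x_{h+1}}(\bar f_{h+1}-V^\pi_{h+1})$ by the second moment of the envelope. For the envelope itself it uses the \emph{non-variance} bound $|\bar f_t-\overline{\Tcal^{\pi,D}_t f_{t+1}}|\le\sqrt{2\delta_t}$ (Eq.~($\Delta_1$) of \citet{wang2023the}) after telescoping via \cref{lem:performance-difference}. Because $\sqrt{2\delta_t}$ contains no $\Var(f_t)$, no re-expansion is needed, plain (not strong) induction suffices, and the Cauchy--Schwarz on $(\sum_t\EE\sqrt{2\delta_t})^2\le (H-h)\sum_t 2\EE[\delta_t]$ produces the $4(H-h)\Delta_{h+1}$ that, together with the $12H\delta_h$ from Step 1, fits under $12H\Delta_h$. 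This is the content of the paper's \cref{lem:var-change-of-measure-second-term}, and it is exactly the missing ingredient in your plan.
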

\begin{proof}
First observe that
\begin{equation}
    \Var(f_h(x_h,a_h))\leq \prns*{1+H^{-1}}\Var(\Tcal^{\pi,D}_h f_{h+1}(x_h,a_h)) + 12H\delta_h(x_h,a_h), \label{eq:var-change-of-measure-first-step}
\end{equation}
because by \cref{eq:diff-of-var} and AM-GM, we have
\begin{align*}
    \Var(f_h(x_h,a_h))-\Var(\Tcal^{\pi,D}_h f_{h+1}(x_h,a_h))
    &\leq 4\sqrt{(\Var(\Tcal^{\pi,D}_h f_{h+1}(x_h,a_h))+\delta_h(x_h,a_h))\delta_h(x_h,a_h)}
    \\&\leq 4\sqrt{\Var(\Tcal^{\pi,D}_h f_{h+1}(x_h,a_h))\delta_h(x_h,a_h)} + 4\delta_h(x_h,a_h)
    \\&\leq H^{-1}\Var(\Tcal^{\pi,D}_h f_{h+1}(x_h,a_h)) + 8H\delta_h(x_h,a_h)+4\delta_h(x_h,a_h).
\end{align*}

We now proceed to show \cref{eq:var-change-of-measure-IH} by induction.
The base case $h=H$ is true since $\Var(\Tcal^{\pi,D}_H f_{H+1}(x_H,a_H))=\Var(C_H(x_H,a_H))=\Var(c_H+V^\pi_{H+1}(x_{H+1})\mid x_H,a_H)$.

We now prove the induction step: suppose the \cref{eq:var-change-of-measure-IH} is true for $h+1$; we want to show the $h$ case is true.
By the law of total conditional variance, we have that $\Var(\Tcal^{\pi,D}_h f_{h+1}(x_h,a_h))$ is equal to:
\begin{align*}
    &\EE\bracks{ \Var(c_h + f_{h+1}(x_{h+1},\pi(x_{h+1}))\mid x_{h+1},c_h,x_h,a_h) \mid x_h,a_h}
    + \Var\prns{ \EE\bracks{c_h+f_{h+1}(x_{h+1},\pi(x_{h+1}))\mid x_{h+1},c_h,x_h,a_h }\mid x_h,a_h }
    \\&=\,\EE\bracks{ \Var(f_{h+1}(x_{h+1},\pi(x_{h+1}))\mid x_{h+1}) \mid x_h,a_h}
    + \Var_{c_h,x_{h+1}\sim C_h,P_h(x_h,a_h)}\prns{ c_h + \bar f_{h+1}(x_{h+1},\pi(x_{h+1}))}.
\end{align*}
The first term is controlled by the induction hypothesis.
The second term is handled by \cref{lem:var-change-of-measure-second-term}. Therefore,
\begin{align*}
    &\Var(\Tcal^{\pi,D}_h f_{h+1}(x_h,a_h))
    \\&\leq \EE_{\pi,x_h,a_h}\sum_{t=h+1}^H(1+H^{-1})^{t-h} \prns{2\EE_{\pi,x_{h+1},a_{h+1}}\bracks{ \Var_{c_t,x_{t+1}}\prns{ c_t+V^\pi_{t+1}(x_{t+1}) \mid x_t,a_t } + 12H\Delta_{t}(x_{t},a_{t})} }
    \\&+ 2\Var_{c_h,x_{h+1}\sim C_h,P_h(x_h,a_h)}\prns{ c_h+V^\pi_{h+1}(x_{h+1}) } + 4H\EE_{\pi,x_h,a_h}\Delta_{h+1}(x_{h+1},a_{h+1}).
\end{align*}

Thus, by \cref{eq:var-change-of-measure-first-step}, we have
\begin{align*}
    &\Var(f_h(x_h,a_h))
    \\&\leq \EE_{\pi,x_h,a_h}\sum_{t=h+1}^H(1+H^{-1})^{t-h+1} \prns{2\EE_{\pi,x_{h+1},a_{h+1}}\bracks{ \Var_{c_t,x_{t+1}}\prns{ c_t+V^\pi_{t+1}(x_{t+1}) \mid x_t,a_t } + 12H\Delta_{t}(x_{t},a_{t}) }}
    \\&+ (1+H^{-1})\prns{2\Var_{c_h,x_{h+1}\sim C_h,P_h(x_h,a_h)}\prns{ c_h+V^\pi_{h+1}(x_{h+1}) } + 4H\EE_{\pi,x_h,a_h}\Delta_{h+1}(x_{h+1},a_{h+1})}
    + 12H\delta_h(x_h,a_h)
    \\&\leq \sum_{t=h+1}^H(1+H^{-1})^{t-h+1} \prns{2\EE_{\pi,x_h,a_h}\bracks{ \Var_{c_t,x_{t+1}}\prns{ c_t+V^\pi_{t+1}(x_{t+1}) \mid x_t,a_t } + 12H\Delta_{t}(x_{t},a_{t}) }}
    \\&+ (1+H^{-1})\prns{2\Var_{c_h,x_{h+1}\sim C_h,P_h(x_h,a_h)}\prns{ c_h+V^\pi_{h+1}(x_{h+1}) } + 12H\Delta_{h}(x_{h},a_{h})}
    \\&=\sum_{t=h}^H(1+H^{-1})^{t-h+1} \prns{2\EE_{\pi,x_h,a_h}\bracks{ \Var_{c_t,x_{t+1}}\prns{ c_t+V^\pi_{t+1}(x_{t+1}) \mid x_t,a_t } + 12H\Delta_{t}(x_{t},a_{t}) }},
\end{align*}
which finishes the induction.
\end{proof}

\begin{lemma}\label{lem:var-change-of-measure-second-term}
\begin{align*}
    &\Var_{c_h,x_{h+1}\sim C_h,P_h(x_h,a_h)}\prns{ c_h + \bar f_{h+1}(x_{h+1},\pi(x_{h+1})) }
    \\&\leq 2\Var_{c_h,x_{h+1}\sim C_h,P_h(x_h,a_h)}\prns{ c_h + V^\pi_{h+1}(x_{h+1}) } + 4(H-h)\EE_{\pi,x_h,a_h}\Delta_{h+1}(x_{h+1},a_{h+1}).
\end{align*}
\end{lemma}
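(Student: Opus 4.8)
The plan is to peel off the ``true'' part of the random variable and treat the rest as an error term. Write $Y:=c_h+V^\pi_{h+1}(x_{h+1})$ and $D:=\bar f_{h+1}(x_{h+1},\pi(x_{h+1}))-V^\pi_{h+1}(x_{h+1})$, so that $c_h+\bar f_{h+1}(x_{h+1},\pi(x_{h+1}))=Y+D$, and observe that $D$ is a function of $x_{h+1}$ alone (all randomness here is over $c_h\sim C_h(x_h,a_h)$ and $x_{h+1}\sim P_h(x_h,a_h)$). By the elementary bound $\Var(Y+D)\le 2\Var(Y)+2\Var(D)$ together with $\Var(D)\le\EE[D^2]$, it then suffices to bound $\EE_{x_{h+1}\sim P_h(x_h,a_h)}[D^2]$ by $(H-h)\,\EE_{\pi,x_h,a_h}[\Delta_{h+1}(x_{h+1},a_{h+1})]$; plugging this in yields $\Var(Y+D)\le 2\Var(Y)+2(H-h)\,\EE_{\pi,x_h,a_h}[\Delta_{h+1}(x_{h+1},a_{h+1})]$, hence the claimed inequality (with the stated constant $4(H-h)$ left slack).

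To control $D$, I would expand $D(x_{h+1})=\EE_{a\sim\pi(x_{h+1})}[\bar f_{h+1}(x_{h+1},a)-Q^\pi_{h+1}(x_{h+1},a)]$ and apply the performance-difference identity (\cref{lem:performance-difference}), started at step $h+1$ from $(x_{h+1},a)$ with the first action held fixed, to the tuple of mean functions $(\bar f_t)_t$. Since $\Tcal^\pi_t\bar f_{t+1}=\overline{\Tcal^{\pi,D}_t f_{t+1}}$, this telescopes the error into a sum of future one-step Bellman errors:
\begin{equation*}
\abs{\bar f_{h+1}(x_{h+1},a)-Q^\pi_{h+1}(x_{h+1},a)}\le\sum_{t=h+1}^{H}\EE_{\pi,x_{h+1},a}\bracks{\abs{\bar f_t(x_t,a_t)-\overline{\Tcal^{\pi,D}_t f_{t+1}}(x_t,a_t)}}.
\end{equation*}
Each summand is at most $\sqrt{\delta_t(x_t,a_t)}$: for two distributions on $[0,1]$ the difference of means is bounded by their total variation distance (center the integrand at $\tfrac12$), and $\op{TV}^2\le 2H^2\le D_\triangle$ by \cref{lemma:hellinger-triangle-equiv}.

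The final step is bookkeeping: square the displayed bound, apply Cauchy--Schwarz over the $H-h$ summands, and push the squares inside the expectations by Jensen (both over $a\sim\pi(x_{h+1})$ and over the roll-in $\EE_{\pi,x_{h+1},a}$), to get $D(x_{h+1})^2\le (H-h)\,\EE_{a\sim\pi(x_{h+1})}\sum_{t=h+1}^{H}\EE_{\pi,x_{h+1},a}[\delta_t(x_t,a_t)]=(H-h)\,\EE_{a\sim\pi(x_{h+1})}[\Delta_{h+1}(x_{h+1},a)]$, the last equality being the definition of $\Delta_{h+1}$. Averaging over $x_{h+1}\sim P_h(x_h,a_h)$ turns the right-hand side into $(H-h)\,\EE_{\pi,x_h,a_h}[\Delta_{h+1}(x_{h+1},a_{h+1})]$, completing the argument. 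The one place that needs care is precisely this accounting --- invoking \cref{lem:performance-difference} from the \emph{intermediate} horizon $h+1$ with a fixed first action, keeping straight which policy and which base state--action each expectation is under, and ordering the Cauchy--Schwarz/Jensen steps so that the $\ell_1$-type sum of $\sqrt{\delta_t}$ collapses exactly onto $\Delta_{h+1}$ with only an $(H-h)$ loss. There is no deeper difficulty; in particular the law of total variance, used in the surrounding lemmas, is not needed here.
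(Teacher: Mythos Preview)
Your proof is correct and follows essentially the same approach as the paper: split off $Y=c_h+V^\pi_{h+1}(x_{h+1})$ via $\Var(Y+D)\le 2\Var(Y)+2\EE[D^2]$, telescope $D$ by the performance-difference lemma into future Bellman errors, bound each by $\sqrt{c\,\delta_t}$, then square with Cauchy--Schwarz/Jensen. The only difference is cosmetic: the paper invokes Eq.~($\Delta_1$) of \citet{wang2023the} to get $|\bar f_t-\overline{\Tcal^{\pi,D}_t f_{t+1}}|\le\sqrt{2\delta_t}$ (yielding the stated $4(H-h)$), whereas your $\op{TV}\le\sqrt{D_\triangle}$ route gives $\sqrt{\delta_t}$ and hence the sharper $2(H-h)$ you noted as slack.
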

\begin{proof}
Recall that $\Var(X+Y)\leq 2\Var(X)+2\Var(Y)$ and hence,
\begin{align*}
&\Var_{c_h,x_{h+1}\sim C_h,P_h(x_h,a_h)}\prns{ c_h + \bar f_{h+1}(x_{h+1},\pi(x_{h+1})) }
\\&\leq 2\Var_{c_h,x_{h+1}\sim C_h,P_h(x_h,a_h)}\prns{ c_h + V^\pi_{h+1}(x_{h+1}) } + 2\Var_{x_{h+1}\sim P_h(x_h,a_h)}\prns{ \bar f_{h+1}(x_{h+1},\pi(x_{h+1}))-V^\pi_{h+1}(x_{h+1}) }
\end{align*}
For the second term, we first bound the envelope of $\bar f_{h+1}(x_{h+1},\pi(x_{h+1}))-V^\pi_{h+1}(x_{h+1})$ as follows:
\begin{align*}
    \abs{\bar f_{h+1}(x_{h+1},\pi(x_{h+1}))-V^\pi_{h+1}(x_{h+1})}
    &\leq \sum_{t=h+1}^H \Eb[\pi,x_{h+1}]{ \abs{ \bar f_t(x_t,a_t)-\Tcal^\pi_t\bar f_{t+1}(x_t,a_t) } } \tag{PDL}
    \\&\leq \sum_{t=h+1}^H \Eb[\pi,x_{h+1}]{ \sqrt{ 2\delta_t(x_t,a_t) } } \tag{Eq.($\Delta_1$) of \citet{wang2023the}}
\end{align*}
This enables us to bound the variance,
\begin{align*}
    &\Var_{x_{h+1}\sim P_h(x_h,a_h)}\prns{ \bar f_{h+1}(x_{h+1},\pi(x_{h+1}))-V^\pi_{h+1}(x_{h+1}) }
    \\&\leq \EE_{x_{h+1}\sim P_h(x_h,a_h)}\bracks{ \prns{ \bar f_{h+1}(x_{h+1},\pi(x_{h+1}))-V^\pi_{h+1}(x_{h+1}) }^2 }
    \\&\leq \EE_{x_{h+1}\sim P_h(x_h,a_h)}\bracks{ \prns{ \sum_{t=h+1}^H \EE_{\pi,x_{h+1}}\bracks{ \sqrt{2\delta_t(x_t,a_t)} } }^2 }
    \\&\leq (H-h)\EE_{x_{h+1}\sim P_h(x_h,a_h)}\bracks{ \sum_{t=h+1}^H \prns{ \EE_{\pi,x_{h+1}}\bracks{ \sqrt{2\delta_t(x_t,a_t)} } }^2 }
    \\&\leq (H-h)\EE_{x_{h+1}\sim P_h(x_h,a_h)}\bracks{ \sum_{t=h+1}^H \EE_{\pi,x_{h+1}}\bracks{ 2\delta_t(x_t,a_t) } },
\end{align*}
as desired.
\end{proof}

\section{Proofs for Online RL}

\secondOrderRL*
\begin{proof}[Proof of \cref{thm:second-order-rl}]
As noted by \citep[Proof of Theorem 5.5]{wang2023the}, the confidence set construction guarantees two facts w.p. $1-\delta$:
\textbf{for all} $k\in[K]$,
\begin{enumerate}[label=(\roman*)]
    \item Optimism: $\min_a\bar f_1^{(k)}(x_{1,k},a)\leq V^\star(x_{1,k})$ (since $Z^\pi(x_{1,k})\in\Fcal_k$); and
    \item Small-generalization error: for all $h$, we have
    \begin{enumerate}[leftmargin=3cm]
        \item[If \textsc{UAE}=\textsc{False}.] $\sum_{i<k}\Eb[\pi^i]{\delta_{h,k}(s_h,a_h)}\leq c\beta$;
        \item[If \textsc{UAE}=\textsc{True}.] $\sum_{i<k}\Eb[\pi^i]{\EE_{a'\sim\op{unif}(\Acal)}[\delta_{h,k}(s_h,a_h)]}\leq c\beta$,
    \end{enumerate}
    for some universal constant $c$.
\end{enumerate}
Let $\delta_{h,k}(x,a):=D_\triangle(f^{(k)}_h(x,a)\Mid\Tcal^{\star,D}_hf^{(k)}_{h+1}(x,a))$ and $\Delta_k:=\sum_{h=1}^H\EE_{\pi^k,x_{1,k}}[\delta_{h,k}(x_h,a_h)]$.
We now decompose the regret into two parts.
\begin{align*}
    &\sum_k V^{\pi^k}(x_{1,k})-V^\star(x_{1,k})
    \\&\leq \sum_kV^{\pi^k}(x_{1,k})-\min_a \bar f_1^{(k)}((x_{1,k}),a) \tag{Optimism}
    \\&=\sum_k\sum_{h=1}^H\EE_{\pi^k,x_{1,k}}\bracks{ \Tcal_h^{\pi^k}\bar f^{(k)}_{h+1}(x_h,a_h)-\bar f^{(k)}_h(x_h,a_h) } \tag{PDL}
    \\&=\sum_k\sum_{h=1}^H\EE_{\pi^k,x_{1,k}}\bracks{ \overline{\Tcal_h^{\pi^k}f^{(k)}_{h+1}}(x_h,a_h)-\bar f^{(k)}_h(x_h,a_h) }
    \\&\leq \sum_{h,k} 4\sqrt{\EE_{\pi^k,x_{1,k}}[\Var(f^{(k)}_h(x_h,a_h))]\cdot \EE_{\pi^k,x_{1,k}}[\delta_{h,k}(x_h,a_h)]} + 5\EE_{\pi^k,x_{1,k}}[\delta_{h,k}(x_h,a_h)] \tag{\cref{eq:var-key-ineq2}}
    \\&\leq \sum_{h,k} 4\sqrt{\prns{\Var(Z^{\pi^k}(x_{1,k}))+12H^2\Delta_k}\cdot \EE_{\pi^k,x_{1,k}}[\delta_{h,k}(x_h,a_h)]} + 5\EE_{\pi^k,x_{1,k}}[\delta_{h,k}(x_h,a_h)] \tag{\cref{eq:mainPaperChangeOfVarianceLemma}}
    \\&\leq \sum_k 4\sqrt{\prns{2e\Var(Z^{\pi^k}(x_{1,k}))+12H^2\Delta_k}\cdot H\Delta_k} + 5\Delta_k \tag{Cauchy-Schwarz}
    \\&\leq \sum_k 4\sqrt{2e\Var(Z^{\pi^k}(x_{1,k})) H\Delta_k} + (4\sqrt{12}+5)H^{1.5}\Delta_k
    \\&\leq 4\sqrt{2e\sum_k \Var(Z^{\pi^k}(x_{1,k})) H\sum_k\Delta_k} + (4\sqrt{12}+5)H^{1.5}\sum_k\Delta_k.
\end{align*}
The final step is to bound $\sum_k\Delta_k$, which is the same as in \citep{wang2023the}.
In particular, if \textsc{UAE}=\textsc{False}, then $\sum_k\Delta_k\leq cH \op{dim}_{\ell_1, DE}(1/K)\beta\log(K)$.
If \textsc{UAE}=\textsc{True}, then $\sum_k\Delta_k\leq cAH \op{dim}_{\ell_1, DE}(1/K)\beta\log(K)$.
This concludes the proof.
\end{proof}

\subsection{Bounding Q-type distributional Eluder in Linear MDPs}
Recall the Linear MDP definition \citep{jin2020provably}.
\begin{definition}[Linear and Low-Rank MDP]\label{def:linear-mdp}
A transition model $P_h:\Xcal\times\Acal\to\Delta(\Xcal)$ has rank $d$
if there exist features $\phi_h^\star:\Xcal\times\Acal\to\RR^d,\mu_h^\star:\Xcal\to\RR^d$ such that
$P_h(x'\mid x,a) = \phi_h^\star(x,a)^\top \mu_h^\star(x')$ for all $x,a,x'$.
Also, assume $\max_{x,a}\|\phi_h^\star(x,a)\|_2\leq 1$ and $\|\int g\diff\mu_h^\star\|_2\leq\|g\|_\infty \sqrt{d}$ for all functions $g:\Xcal\to\RR$.
The MDP is called low-rank if $P_h$ is low-rank for all $h\in[H]$.
The MDP is called linear if $\{\phi_h^\star\}_{h\in[H]}$ is known.
\end{definition}

Consider the following linear function class:
\begin{align}
    \mathcal{F}_h^{\text{lin}}=\bigg\{&
	f(z\mid x,a)=\big\langle\phi^\star(x,a),w(z)\big\rangle
	\text{\;s.t.\;} w:[0,1]\rightarrow\mathbb{R}^d,
        \max_z \|w(z)\|_2\leq \alpha\sqrt{d}
	\text{\; and\; }
\max_{x,a,z}\big\langle\phi^\star(x,a),w(z)\big\rangle\leq \alpha
	\bigg\}, \label{eq:linear-mdp-f-class}
\end{align}
\citet{wang2023the} showed two nice facts about $\Fcal^{\text{lin}}$.
First, it satisfies Bellman Completeness (\cref{asm:bellman-completeness}). Moreover, under the assumption that costs are discretized into a uniform grid of $M$ points, this class's bracketing entropy is $\wt\Ocal(dM)$.
Note that discretization is necessary to bound the statistical complexity of $\mathcal{F}^{\text{lin}}$ and is also common in practice, \eg, C51 \citep{bellemare2017distributional} and Rainbow \citep{hessel2018rainbow} both set $M=51$, which works well in Atari; also the optimal policy's performance in the discretized MDP can also be bounded by the discretization error \citep{wang2023near}.

We now show a new fact about $\mathcal{F}^{\text{lin}}$.
If we further assume that per-step cost and cost-to-go distributions have minimum mass $\eta_{\min}>0$ on each element of its support, then we can bound the appropriate $Q$-type distributional eluder dimension for linear MDPs as $\wt\Ocal(d\eta_{\min}^{-1}\log(1/\eps))$. This is formalized in the following assumption.
\begin{assumption}\label{asm:minimum-mass-linear-mdp}
For all $f\in\Fcal^{\text{lin}}$ and $h\in[H]$, if $f_h(z\mid x,a)=\Tcal^{\star,D}_hf_{h+1}(z\mid x,a)$, then $f_h(z\mid x,a)+\Tcal^{\star,D}_hf_{h+1}(z\mid x,a)\geq \eta_{\min}$.
\end{assumption}
If cost-to-go and per-step cost distributions have a minimum mass, then this assumption is satisfied.

\begin{theorem}
Suppose the MDP is a linear MDP and \cref{asm:minimum-mass-linear-mdp}.
Fix any $h\in[H]$ and define
\begin{align*}
    &\Psi_h = \braces{(x,a)\mapsto D_\triangle(f_h(x,a)\Mid \Tcal_h^{\star,D}f_{h+1}(x,a)):f\in\Fcal^{\normalfont\text{lin}}_h},
    \\&\Dcal_h = \braces{(x,a)\mapsto d^\pi_h(x,a):\pi\in\Pi}.
\end{align*}
Then, $\dim_{\ell_1\textsf{DE}}(\Psi_h, \Dcal_h, \eps) \leq \Ocal(d\eta_{\min}^{-1}\log(dM/(\eta_{\min}\eps)))$.
\end{theorem}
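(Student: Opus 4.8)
The plan is to exploit the linear structure of the MDP twice: once to rewrite the $L^2$-distance hidden inside each triangular discrimination in $\Psi_h$ as an \emph{exact} quadratic form in the true feature $\phi^\star_h(x,a)\in\RR^d$, and then to run the standard elliptical--potential (log-determinant) argument in $\RR^d$; the factor $\eta_{\min}^{-1}$ enters only through a two-sided comparison between $D_\triangle$ and that quadratic form. Concretely, fix $f\in\Fcal^{\text{lin}}$ and set $g:=\Tcal_h^{\star,D}f_{h+1}$. By distributional Bellman completeness of $\Fcal^{\text{lin}}$ (\cref{asm:bellman-completeness}, established for this class in \citet{wang2023the}), $g\in\Fcal^{\text{lin}}_h$, so on the $M$-point cost grid $\{z_1,\dots,z_M\}$ we may write $f_h(z\mid x,a)=\langle\phi^\star_h(x,a),w(z)\rangle$ and $g(z\mid x,a)=\langle\phi^\star_h(x,a),w'(z)\rangle$ with $\|w(z)\|_2,\|w'(z)\|_2\le\sqrt d$. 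Letting $M_f:=\sum_{m=1}^M(w-w')(z_m)(w-w')(z_m)^\top$, I would record $M_f\succeq0$, $\mathrm{tr}(M_f)\le 4dM$, and the identity $\|f_h(x,a)-g(x,a)\|_{\measureSpace}^2=\phi^\star_h(x,a)^\top M_f\,\phi^\star_h(x,a)$ for all $(x,a)$. Since $f_h(\cdot\mid x,a)$ and $g(\cdot\mid x,a)$ are probability mass functions their pointwise sum is at most $2$, giving $D_\triangle(f_h(x,a)\Mid g(x,a))\ge\tfrac12\|f_h(x,a)-g(x,a)\|_{\measureSpace}^2$; and by \cref{asm:minimum-mass-linear-mdp} their pointwise sum is either $0$ or $\ge\eta_{\min}$, giving $D_\triangle(f_h(x,a)\Mid g(x,a))\le\eta_{\min}^{-1}\|f_h(x,a)-g(x,a)\|_{\measureSpace}^2$. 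Hence $\tfrac12\,\phi^\star_h(x,a)^\top M_f\phi^\star_h(x,a)\le D_\triangle(f_h(x,a)\Mid g(x,a))\le\eta_{\min}^{-1}\,\phi^\star_h(x,a)^\top M_f\phi^\star_h(x,a)$.

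Next I would fix a scale $\eps$ (any value at least the given threshold) and take a witnessing sequence $d^{\pi_1}_h,\dots,d^{\pi_L}_h$ with witnesses $f^{(1)},\dots,f^{(L)}$, and set $\Sigma_i:=\EE_{(x,a)\sim d^{\pi_i}_h}[\phi^\star_h(x,a)\phi^\star_h(x,a)^\top]$ (so $\mathrm{tr}(\Sigma_i)\le1$) and $M_t:=M_{f^{(t)}}$. The absolute values in \cref{def:dist-eluder} are vacuous because $D_\triangle\ge0$, so the sandwich of Step~1 converts the two defining inequalities into $\langle M_t,\Sigma_t\rangle\ge\eta_{\min}\eps$ and $\sum_{i<t}\langle M_t,\Sigma_i\rangle\le2\eps$. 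With regularizer $\lambda:=\eps/(2dM)$ and $Z_t:=\lambda I_d+\sum_{i\le t}\Sigma_i$, I get $\langle M_t,Z_{t-1}\rangle=\lambda\,\mathrm{tr}(M_t)+\sum_{i<t}\langle M_t,\Sigma_i\rangle\le 4\lambda dM+2\eps=4\eps$, hence $\langle M_t,\Sigma_t\rangle\ge\tfrac{\eta_{\min}}{4}\langle M_t,Z_{t-1}\rangle$. The crucial step is then to cancel the matrix parameter $M_t$: conjugating by $Z_{t-1}^{1/2}$ and using $\mathrm{tr}(\widetilde M A)\le\lambda_{\max}(A)\,\mathrm{tr}(\widetilde M)$ for $\widetilde M\succeq0$ and $A\succeq0$ (applied to $\widetilde M=Z_{t-1}^{1/2}M_tZ_{t-1}^{1/2}$, $A=Z_{t-1}^{-1/2}\Sigma_tZ_{t-1}^{-1/2}$), one obtains $\lambda_{\max}\!\prns{Z_{t-1}^{-1/2}\Sigma_tZ_{t-1}^{-1/2}}\ge\eta_{\min}/4$ for every $t$. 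A one-line log-det potential argument finishes it: $\log\det Z_t-\log\det Z_{t-1}=\log\det\!\prns{I_d+Z_{t-1}^{-1/2}\Sigma_tZ_{t-1}^{-1/2}}\ge\log(1+\eta_{\min}/4)$, and summing against $\log\det Z_L\le d\log\!\prns{\mathrm{tr}(Z_L)/d}\le d\log\!\prns{(\lambda d+L)/d}$ (AM--GM, $\mathrm{tr}(\Sigma_i)\le1$) yields $L\log(1+\eta_{\min}/4)\le d\log\!\prns{1+2ML/\eps}$, so $L\le\tfrac{8d}{\eta_{\min}}\log\!\prns{1+2ML/\eps}$; solving this self-bounding inequality (its largest solution is $\Ocal(\tfrac{d}{\eta_{\min}}\log\tfrac{dM}{\eta_{\min}\eps})$) and maximizing over scales, which only increases the bound as $\eps$ decreases to the threshold, gives the claim.

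I would not belabor the routine parts: the Bellman-completeness representation of $\Fcal^{\text{lin}}$ under $\Tcal^{\star,D}_h$ (already in \citet{wang2023the}, including the bound $\|w'(z)\|_2\le\sqrt d$), the AM--GM/trace estimates, and solving the final self-bounding inequality. The main obstacle is getting the \emph{dimension} dependence right: the ``parameter'' $M_t$ ranges over the $\binom{d+1}{2}$-dimensional cone of PSD matrices, so a naive vectorized elliptical-potential argument would cost $d^2$, whereas the target is linear in $d$. The fix --- and the only genuinely non-routine step --- is the observation that the relevant ratio $\langle M_t,\Sigma_t\rangle/\langle M_t,Z_{t-1}\rangle$ is controlled by the top eigenvalue of the $d\times d$ matrix $Z_{t-1}^{-1/2}\Sigma_tZ_{t-1}^{-1/2}$, a scalar quantity living in $\RR^d$, which is exactly what makes the $d$-dimensional log-det potential applicable. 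A secondary point needing care is that the quadratic-form representation of $\|f_h(x,a)-g(x,a)\|_{\measureSpace}^2$ is an \emph{exact} identity only because the costs are discretized to a grid of $M$ points, so the integral over $z$ collapses to the finite sum $\sum_{m}\langle\phi^\star_h(x,a),(w-w')(z_m)\rangle^2$; this is where the $M$ in the stated bound comes from.
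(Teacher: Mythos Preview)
Your argument is correct and takes a genuinely different route from the paper's. Both proofs begin the same way: sandwich $D_\triangle(f_h(x,a)\Mid g(x,a))$ between constant multiples of the quadratic form $\phi^\star_h(x,a)^\top M_f\,\phi^\star_h(x,a)$ (the paper obtains the lower side via $D_{TV}^2\le D_\triangle$ and $\sum_z a_z^2\le(\sum_z|a_z|)^2$, you via the pointwise bound $f+g\le 2$; these are equivalent up to constants). The divergence is in how the potential argument is run. The paper applies Cauchy--Schwarz \emph{per grid point} $z$, writing $(\phi^\top u_z)^2\le\|\phi\|_{\Sigma_k^{-1}}^2\|u_z\|_{\Sigma_k}^2$ with $u_z=(w^{(k)}-v^{(k)})(z)$, sums over $z$, bounds $\sum_z\|u_z\|_{\Sigma_k}^2\le O(\eps)$ from the predecessor constraint, and then invokes the trace-form elliptical potential $\sum_k\Tr(\Sigma_k^{-1}\EE_{d^{(k)}}[\phi\phi^\top])\le d\log(\cdot)$ as a black box from \citet{uehara2021representation}. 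You instead keep $M_t$ as an abstract PSD matrix with bounded trace, establish the ratio $\langle M_t,\Sigma_t\rangle/\langle M_t,Z_{t-1}\rangle\ge\eta_{\min}/4$, and then extract $\lambda_{\max}(Z_{t-1}^{-1/2}\Sigma_t Z_{t-1}^{-1/2})\ge\eta_{\min}/4$ via the trace inequality $\Tr(\widetilde M A)\le\lambda_{\max}(A)\Tr(\widetilde M)$; this eigenvalue extraction is precisely the step you flagged as non-routine, and it does not appear in the paper. Your route is more general (it never uses that $M_t$ is a sum of rank-ones, only that $M_t\succeq 0$ with $\Tr(M_t)\le 4dM$) and fully self-contained (the log-det telescoping is derived from scratch rather than cited); the paper's route is the more classical linear-bandit template and is perhaps more recognizable to readers familiar with that literature.
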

\begin{proof}
Fix any $h$.
Suppose $(d^{(k)},f^{(k)})_{k\in[T]}$ is any sequence such that for all $k\in[T]$, $d^{(k)}\in\Dcal_h$, $f^{(k)}\in\Psi_h$ and $(d^{(k)},f^{(k)})$ is $(\eps,\ell_1)$-independent of its predecessors. By definition, the largest possible $T$ is the eluder dimension of interest, so we now proceed to bound $T$.

For any $k$, since $f^{(k)}\in\Psi_h$, there exists $w^{(k)},v^{(k)}:[0,1]\to\RR^d$ satisfying normalization constraints of \cref{eq:linear-mdp-f-class} such that $f^{(k)}(x,a) = D_\triangle(z\mapsto \phi^\star_h(x,a)^\top w^{(k)}(z) \Mid z\mapsto \phi^\star_h(x,a)^\top v^{(k)}(z))$. Note that $v^{(k)}$ exists by Bellman completeness of $\Fcal^{\text{lin}}_h$.

Now we simplify the $D_\triangle$ term with the assumption: for any $k$,
\begin{align*}
    \EE_{d^{(k)}}D_\triangle(f_h^{(k)}(x,a)\Mid \Tcal^{\star,D}_h f_{h+1}^{(k)}(x,a))
    &=\EE_{d^{(k)}}\sum_z \frac{\prns*{f_h^{(k)}(z\mid x,a)-\Tcal^{\star,D}_h f_{h+1}^{(k)}(z\mid x,a)}^2}{f_h^{(k)}(z\mid x,a)+\Tcal^{\star,D}_h f_{h+1}^{(k)}(z\mid x,a)}
    \\&\leq \eta_{\min}^{-1} \EE_{d^{(k)}}\sum_z \prns*{\phi^\star_h(x,a)^\top (w^{(k)}(z)-v^{(k)}(z))}^2 \tag{\cref{asm:minimum-mass-linear-mdp}}
    \\&\leq \eta_{\min}^{-1} \EE_{d^{(k)}}\|\phi^\star_h(x,a)\|_{\Sigma_k^{-1}}^2 \cdot \sum_z \|w^{(k)}(z)-v^{(k)}(z)\|_{\Sigma_k}^2, \tag{CS}
\end{align*}
where $\Sigma_k:=\sum_{i<k}\EE_{d^{(i)}}[\phi^\star_h(x_h,a_h)\phi^\star_h(x_h,a_h)^\top] + \lambda I$ and $\lambda>0$ will be set soon.
For the second factor,
\begin{align*}
    \sum_z \|w^{(k)}(z)-v^{(k)}(z)\|_{\Sigma_k}^2
    &= \sum_z \sum_{i<k}\EE_{d^{(i)}}\prns{\phi^\star_h(x,a)^\top (w^{(k)}(z)-v^{(k)}(z))}^2 + M\lambda d
    \\&\leq \sum_{i<k}\EE_{d^{(i)}}\prns{\sum_z\abs{\phi^\star_h(x,a)^\top (w^{(k)}(z)-v^{(k)}(z))}}^2 + M\lambda d
    \\&\leq \sum_{i<k}\EE_{d^{(i)}}D_{\triangle}(f_h^{(k)}(x,a)\Mid \Tcal^{\star,D}_h f_{h+1}^{(k)}(x,a)) + M\lambda d \tag{$D_{TV}^2\leq D_\triangle$}
    \\&\leq \eps + M\lambda d \tag{$(\eps,\ell_1)$-independent sequence}
    \\&= 2\eps. \tag{set $\lambda=\eps/(dM)$}
\end{align*}
Thus, we have shown that
\begin{align*}
    T\eps
    &< \sum_k \EE_{d^{(k)}}D_\triangle(f_h^{(k)}(x,a)\Mid \Tcal^{\star,D}_h f_{h+1}^{(k)}(x,a)) \tag{$(\eps,\ell_1)$-independent sequence}
    \\&\leq \eta^{-1}_{\min}\sum_k \EE_{d^{(k)}}\|\phi^\star_h(x,a)\|_{\Sigma_k^{-1}}^2 \cdot 2\eps
    \\&\leq 2\eta^{-1}_{\min}\eps\cdot d\log(1+TM/\eps^2),
\end{align*}
where we used elliptical potential in the last step \citep[Lemma 19 \& 20]{uehara2021representation}, which is applicable since $\EE_{d^{(k)}}\|\phi_h^\star(x,a)\|_{\Sigma_k^{-1}}^2=\EE_{d^{(k)}}\phi_h^\star(x,a)^\top \Sigma_k^{-1}\phi_h^\star(x,a)=\Tr(\EE_{d^{(k)}}[\phi_h^\star(x,a)\phi_h^\star(x,a)^\top] \Sigma_k^{-1})$.
Thus, \citep[Lemma 19 \& 20]{uehara2021representation} implies that
\begin{equation*}
    T<2\eta^{-1}_{\min}d\log(1+TM/\eps^2),
\end{equation*}
which finally implies,
\begin{equation*}
    T \leq 12\eta^{-1}_{\min}d\log(1+2\eta^{-1}_{\min}dM/\eps^2),
\end{equation*}
by \citep[Lemma G.5]{wang2023the}.

\end{proof}

\section{Proofs for Offline RL}

\secondOrderOfflineRL*
\begin{proof}[Proof of \cref{thm:second-order-offline-rl}]
As noted by \citep[Proof of Theorem 6.1]{wang2023the}, the confidence set construction guarantees two facts w.p. $1-\delta$:
\begin{enumerate}[label=(\roman*)]
    \item Pessimism: for all $\pi$, $V^\pi\leq\bar f^\pi_1(x_1,\pi)$ (since $Z^\pi\in\Fcal_\pi$); and
    \item Small-generalization error: for all $\pi$ and $h$, $\EE_{\nu_h}[D_\triangle(f^\pi_h(x,a)\Mid \Tcal^{\pi,D}_h f^\pi_{h+1}(x,a))]\leq c\beta N^{-1}$ for some universal constant $c$.
\end{enumerate}
Let $\delta^\pi_h(x,a):=D_\triangle(f^\pi_h(x,a)\Mid\Tcal^{\pi,D}_hf^\pi_{h+1}(x,a))$ and $\Delta^\pi:=\sum_{h=1}^H\EE_{\pi}[\delta^\pi_h(x_h,a_h)]$.
We now bound the performance difference between $\wh\pi$ and $\wt\pi$:
\begin{align*}
    V^{\wh\pi}-V^{\wt\pi}
    &\leq \bar f^{\wh\pi}_1(x_1,\wh\pi) - V^{\wt\pi} \tag{Pessimism}
    \\&\leq \bar f^{\wt\pi}_1(x_1,\wt\pi) - V^{\wt\pi} \tag{Defn of $\wh\pi$}
    \\&=\sum_{h=1}^H\EE_{\wt\pi}\bracks{ \prns{\bar f^{\wt\pi}_h - \Tcal^{\wt\pi}_h \bar f^{\wt\pi}_{h+1}}(x_h,a_h) } \tag{PDL \cref{lem:performance-difference}}
    \\&\leq\sum_{h=1}^H4\sqrt{ \EE_{\wt\pi}[\Var(f^{\wt\pi}_h(x_h,a_h))]\cdot\EE_{\wt\pi}[\delta^{\wt\pi}_h(x_h,a_h)] } + 5\EE_{\wt\pi}[\delta^{\wt\pi}_h(x_h,a_h)] \tag{\cref{eq:var-key-ineq2}}
    \\&\leq\sum_{h=1}^H4\sqrt{ \prns{2e\Var(Z^{\wt\pi}) + 12H^2\Delta^{\wt\pi}}\cdot\EE_{\wt\pi}[\delta^{\wt\pi}_h(x_h,a_h)] } + 5\EE_{\wt\pi}[\delta^{\wt\pi}_h(x_h,a_h)] \tag{\cref{eq:mainPaperChangeOfVarianceLemma}}
    \\&\leq 4\sqrt{ \prns{2e\Var(Z^{\wt\pi}) + 12H^2\Delta^{\wt\pi}}\cdot H\Delta^{\wt\pi} } + 5\Delta^{\wt\pi} \tag{Cauchy-Schwarz}
    \\&\leq 4\sqrt{2e \Var(Z^{\wt\pi}) H \Delta^{\wt\pi}} + (4\sqrt{12}+5)H^{1.5}\Delta^{\wt\pi}.
\end{align*}
Finally, bound $\Delta^{\wt\pi}$ by change of measure and the generalization bound of MLE (fact (ii)):
\begin{equation*}
    \Delta^{\wt\pi}\leq C^{\wt\pi}\sum_{h=1}^H\EE_{\nu_h}[\delta^{\wt\pi}_h(x_h,a_h)]\leq C^{\wt\pi}H\cdot c\beta N^{-1}.
\end{equation*}
Therefore,
\begin{equation*}
    V^{\wh\pi}-V^{\wt\pi} \leq \Ocal\prns{ H\sqrt{\frac{C^{\wt\pi}\Var(Z^{\wt\pi})\beta}{N}} + \frac{H^{2.5}C^{\wt\pi}\beta}{N} }.
\end{equation*}
\end{proof}

\end{document}